\newtheorem{theorem}{Theorem}
\newtheorem{lemma}{Lemma}
\newtheorem{definition}{Definition}
\newtheorem{remark}{Remark}
\def\H{\mathcal{H}}
\def\calR{\mathcal{R}}
\def\I{\mathcal{I}}
\def\Q{\mathcal{Q}}
\def\S{\mathcal{S}}
\def\E{\mathbb{E}}
\def\1{\mathbf{1}}
\def\P{\mathbb{P}}
\def\R{\mathbb{R}}
\def\N{\mathbb{N}}
\newcommand{\mc}[1]{\mathcal{#1}}
\newcommand{\mb}[1]{\mathbb{#1}}
\def\bestH{\H_{\mathrm{best}}}
\def\fdrH{\H_{\mathrm{FDR}}}
\def\fdrHt{\tilde{\H}_{\mathrm{FDR}}}
\def\fwerH{\H_{\mathrm{FWER}}}
\title{The True Sample Complexity of Identifying Good Arms}
\author{%
  Julian Katz-Samuels \\
  Department of EECS \\
  University of Michigan \\
  Ann Arbor, MI  \\
 \texttt{jkatzsam@umich.edu} \\
   \And
Kevin Jamieson \\
Allen School of Computer Science \\
University of Washington \\
Seattle, WA \\
\texttt{jamieson@cs.washington.edu} \\
}
\begin{document}

\maketitle

\begin{abstract}
We consider two multi-armed bandit problems with $n$ arms: \emph{(i)} given an $\epsilon > 0$, identify an arm with mean that is within $\epsilon$ of the largest mean and \emph{(ii)} given a threshold $\mu_0$ and integer $k$, identify $k$ arms with means larger than $\mu_0$. Existing lower bounds and algorithms for the PAC framework suggest that both of these problems require $\Omega(n)$ samples. However, we argue that these definitions not only conflict with how these algorithms are used in practice, but also that these results disagree with intuition that says \emph{(i)} requires only $\Theta(\frac{n}{m})$ samples where $m =  |\{ i : \mu_i > \max_{i \in [n]} \mu_i - \epsilon\}|$ and \emph{(ii)} requires $\Theta(\frac{n}{m}k)$ samples where $m =  |\{ i : \mu_i >  \mu_0 \}|$. We provide definitions that formalize these intuitions, obtain lower bounds that match the above sample complexities, and develop explicit, practical algorithms that achieve nearly matching upper bounds.
\end{abstract}

\section{Introduction}

Collecting data sequentially and adaptively, so that the decision of what to measure next is based on all previous observations, is a powerful paradigm in which the same statistically significant conclusions can be made using far fewer total measurements than using a pre-defined experimental design. 
The pertinent metric of an adaptive data collection algorithm is its  sample complexity: the total number of measurements a procedure must make in order to achieve some objective with high probability.    
This paper is interested in contrasting \emph{verifiable} versus \emph{unverifiable} sample complexity, which we explain through an example.

Consider a simple binary classification problem in one-dimension where we can query (i.e. measure) the locations $i \in \{1,\dots,n\}$ and observe $\mu_i \in \{0,1\}$.
The objective is to estimate the vector $\bm{\mu} \coloneqq (\mu_j)_{j \leq n}$.
Suppose that there exists $m \in \{1,\ldots, n\}$ such that $\mu_i=1$ for $i \leq m$ and $\mu_i =0$ for $i > m$. If the player knew that the $\mu_i$ were non-increasing but just not the number $m$, then binary search could identify the index of the transition (and uniquely determine $\bm{\mu}$) with just $\log_2(n)$ total measurements.
However, if the player \emph{didn't} know they were non-increasing, then to \emph{verify} that the first $m$ were indeed $1$ and the last $n-m$ means were $0$ would require $n$ samples.
If after each measurement, the player outputs her best guess for $\bm{\mu}$, then she could \emph{unverifiably} output the correct $\bm{\mu}$ after just $\log_2(n)$ measurements by performing binary search under a monotonicity assumption, but not be able to \emph{verify} that $\bm{\mu}$ was truly non-increasing with a certificate until $n$ measurements were taken. 
This example, inspired by \cite{balcan2010true}, generalizes beyond classification.

We believe algorithms for adaptive data collection should be designed to optimize for \emph{unverifiable} sample complexity so that they can take advantage of favorable scenarios. 
While this difference between verifiable and unverifiable sample complexity exists in classification, regression, and general reinforcement learning, in this paper we choose to study an instance of multi-armed bandits that exemplifies the difference between verifiable and unverifiable sample complexity: $\epsilon$-good arm identification.
We propose a novel definition of unverifiable sample complexity and prove upper and lower bounds on the quantities of interest. As a corollary, we obtain results for the intimately related problem of identifying arms with means above a threshold.

\subsection{Multi-armed bandits}
Define a multi-armed \emph{bandit instance} $\rho$ as a collection of $n$ distributions over $\R$ where the $j$th sample from the $i$th distribution $\rho_i$ is an iid random variable $X_{i,j} \sim \rho_i$ with $\E[X_{i,j}] = \mu_i$. 
At round $t \in \mathbb{N}$ a player selects\footnote{We also say that a player pulls arm $I_t$.} an index $I_t \in [n]\coloneqq \{1,\dots,n\}$, immediately observes $X_{I_t,t}$, and then outputs a set $\widehat{S}_t \subseteq [n]$ before the next round. Formally, defining the filtration $(\mc{F}_t)_{t \in \N}$ where $\mc{F}_t = \{ (I_s, X_{I_s,s},\widehat{S}_s) : 1 \leq s \leq t \}$, we require that $I_t$ is $\mc{F}_{t-1}$ measurable while $\widehat{S}_t$ is $\mc{F}_t$ measurable, each with possibly additional external sources of randomness. 

The player strategically chooses an arm $I_t$ at each time $t$ in order to accomplish a goal for $\widehat{S}_t$ as quickly as possible. Two important goals that arise in this setting are \emph{(i)} identifying an arm with the largest mean (commonly referred to as best arm identification) and \emph{(ii)} identifying all of the arms with means above a given threshold $\mu_0 \in \mathbb{R}$. Applications of \emph{(i)} include drug or material design in the presence of noisy experiments. Applications of \emph{(ii)} include genetic screens where individual genes are inhibited to infer a causal relationship with a particular phenotype; typically multiple genes are identified as influencing the phenotype. 
In practice, one is often willing to trade the ``best'' for ``satisfactory'' if it means a smaller sample complexity. Define: 
\begin{enumerate}[leftmargin=.25in]
  \item \textbf{Identifying an $\boldsymbol{\epsilon}$-good mean}: for a given $\epsilon > 0$, minimize $\tau$ such that the index $\widehat{S}_t \in [n]$ satisfies $\mu_{\widehat{S}_t} > \max_{i \in [n]} \mu_i - \epsilon$ for all $t \geq \tau$ with high probability.
	\item \textbf{Identifying means above a threshold $\boldsymbol{\mu_0}$}: for a given threshold $\mu_0 \in \R$ and $k \in [n]$, minimize $\tau_k$ such that the set $\widehat{S}_t \subseteq [n]$ satisfies $|\widehat{S}_t \cap \{ i: \mu_i > \mu_0 \}| \geq \min(k,|\{i : \mu_i > \mu_0\}|)$ for every $t \geq \tau_k$ subject to $\widehat{S}_s \cap \{ i: \mu_i \leq \mu_0 \} = \emptyset$ for all times $s$ with high probability\footnote{The constraint $\widehat{S}_s \cap \{ i: \mu_i \leq \mu_0 \} = \emptyset$ is known as a family-wise error rate (FWER) condition. We will also consider a more relaxed condition known as false discovery rate (FDR) which controls  $\E[ |\widehat{S}_s \cap \{ i: \mu_i \leq \mu_0 \}| / |\widehat{S}_s| ]$}.
\end{enumerate}  
Note that in the second problem we require $\widehat{S}_t \subseteq \{i : \mu_i > \mu_0 \}$ for all times $t$ with high probability, whereas in the first problem we allow mistakes, $\widehat{S}_t \notin \{i : \mu_i > \mu_1 - \epsilon\}$, for some times $t$.

\textbf{Why study both objectives simultaneously?} 
Our proposed algorithms for each objective are extremely similar, and the fundamental difficulty of the objectives are closely related:
for a fixed set of means $\mu_1 \geq \dots \geq \mu_n$ and  any threshold $\mu_0$ there exists an $\epsilon = \mu_1 - \mu_0$ so that $\{\mu_i : \mu_i > \mu_1 - \epsilon\} = \{\mu_i : \mu_i > \mu_0\}$.
Thus, identifying $k$ arms above the threshold $\mu_0$ is equivalent to identifying $k$ $\epsilon$-good means for $\epsilon=\mu_1-\mu_0$. 
Consequently, if $m = |\{ i \in [n] : \mu_i > \mu_1 - \epsilon\}|$ then we can study \emph{lower bounds} on the sample complexity of both problems simultaneously by considering the necessary number of samples required to identify $k$ of the $m$ largest means (i.e.,  to have $\widehat{S}_t \subset [m]$ with $|\widehat{S}_t| = k$) for any value of $1 \leq k \leq m$. 
While considering $m$ is helpful for analysis, it should be stressed that ${m}$\emph{ is never known to the algorithm and must be adapted to} and, in fact, such knowledge would significantly simplify this problem \cite{chaudhuri2019pac}.


\textbf{Intuition for unverifiable sample complexity.} Verifiably identifying an $\epsilon$-good arm requires a sample from every arm because some unsampled arm may be much better than the sampled arms (made formal below), but unverifiably identifying an $\epsilon$-good arm does not necessarily require a sample from every arm. If there is just a single $\epsilon$-good distribution out of the total $n$ so that $m=1$, then any strategy will have to sample from about $n$ distributions before coming across the unique $\epsilon$-good distribution, and by sampling every arm we can also \emph{verify} that the suggested arm is $\epsilon$-good. 
However, if there are $1 < m \leq n$ means that are $\epsilon$-good, then if at least $n/m$ indices are drawn uniformly at random from $\{1,\dots,n\}$ then with constant probability at least one of them will be $\epsilon$-good.
Thus, when there are $m$ $\epsilon$-good distributions, one should expect the number of total measurements to identify an $\epsilon$-good distribution, at least unverifiably, to scale as $n/m$, not $n$.
In an extreme case, if $m = n/2$ so that half the distributions are $\epsilon$-good, then one should expect the number of samples to identify an $\epsilon$-good distribution to be \emph{constant} with respect to $n$.
The same argument applies to identifying arms with means above a threshold: if there are $m$ means above the threshold $\mu_0$, then one would expect that the number of samples required to identify at least $1 \leq k \leq m$ of them scales like $k \tfrac{n}{m}$, not $n$. 
Unfortunately, with few exceptions the literature has only been interested in verifiable sample complexity, and consequently, have sample complexities that scale with $n$, not $n/m$, and therefore disagree with the intuition of above.
This paper aims to develop definitions, algorithms, and lower bounds that confirm the necessary and sufficient conditions for obtaining the intuitive sample complexities expected in the two problems of interest.

\subsection{Revisiting identifying an $\epsilon$-good arm: an unverifiable sample complexity perspective}\label{sec:pac_defs}
Many works have studied the problem of identifying an $\epsilon$-good arm with high probability.
We begin by considering the standard definition under which verifiable sample complexity of identifying an $\epsilon$-good arm is characterized.
For an instance $\rho =(\rho_1,\dots,\rho_n)$ recall $\mu_i = \E_{X \sim \rho_i}[X]$.

\begin{definition}
Fix an algorithm $\mc{A} \equiv (I_t, \widehat{S}_t, \tau_{PAC})$ where $\tau_{PAC}$ is a stopping time with respect to the filtration $(\mc{F}_t)_{t \in \N}$. 
Then $\mc{A}$ is $\boldsymbol{(\epsilon,\delta)}$\textbf{-PAC (Probably Approximately Correct)} if $\forall \rho \in \mc{P}$ $\mc{A}$ terminates at $\tau_{PAC}$ and $\P_\rho( \mu_{\widehat{S}_{\tau_{PAC}}} \geq \max_i \mu_i - \epsilon) \geq 1-\delta$.
\end{definition}
This definition exemplifies verifiable sample complexity because it requires that the algorithm terminate and declare that the arm $\widehat{S}_{\tau_{PAC}}$ satisfies $\mu_{\widehat{S}_{\tau_{PAC}}} \geq \max_i \mu_i - \epsilon$.
One typically takes $\mc{P}$ to be all sub-Gaussian tailed distributions. 
For a given $\epsilon, \delta$, instance $\rho$, and $\mc{P} = \{ \mc{N}(\mu', \sigma^2 I) : \mu' \in \R^n \}$, one can show $\E_\rho[ \tau_{PAC} ] \gtrsim \log(1/\delta)[ \epsilon^{-2} m + \sum_{i=m+1}^n (\mu_1 - \mu_i)^{-2}]$ for any $(\epsilon,\delta)$-PAC algorithm \cite{kaufmann2016complexity,Mannor04thesample} (see Appendix \ref{sec:kaufmann_results} for a formal statement). 
That is, \emph{any ${(\epsilon,\delta)}$-PAC algorithm has an expected sample complexity $\E[\tau_{PAC}]$ of at least ${n}$, regardless of $m$} the number of $\epsilon$-good distributions among the $n$.
This is necessary because an $(\epsilon,\delta)$-PAC algorithm must prove that any output arm $\widehat{S}_{\tau_{PAC}}$ satisfies $\mu_{\widehat{S}_{\tau_{PAC}}} \geq \max_i \mu_i - \epsilon$, but the value of $\max_i \mu_i$ is not known a priori, so the algorithm must pull \emph{every} arm at least once to \emph{verify} that $\widehat{S}_{\tau_{PAC}}$ is indeed $\epsilon$-good
.
Contrast this with the above discussion where we were merely concerned with how quickly an algorithm could start outputting an $\epsilon$-good arm, with no condition of \emph{verifying} that it is $\epsilon$-good.
%
We now propose an alternative definition:
\begin{definition}
Fix an algorithm $\mc{A} \equiv (I_t, \widehat{S}_t)$. 
Then $\mc{A}$ is $\boldsymbol{(\epsilon,\delta)}$\textbf{-SimplePAC} if $\forall \rho \in \mc{P}$ there exists a stopping time $\tau_{simple, \rho}$ with respect to the filtration $(\mc{F}_t)_{t \in \N}$ such that $\P_\rho( \forall t \geq \tau_{simple,\rho}: \ \   \mu_{\widehat{S}_{t}} \geq \max_i \mu_i - \epsilon) \geq 1-\delta$.
\end{definition}
We emphasize $\tau_{simple, \rho}$ is for \emph{analysis purposes only} and \emph{is unknown to the algorithm}, and thus, the algorithm never terminates, stops taking samples, or recommending sets $\widehat{S}_t$.
The critical difference between the two definitions is that a {PAC} algorithm must declare when it has found an $\epsilon$-good arm, whereas a SimplePAC algorithm just needs to start outputting an $\epsilon$-good arm eventually.
Analogous to above, for a given $\mc{P}$ and $\rho \in \mc{P}$ we prove a lower bound on $\E_\rho[ \tau_{simple,\rho} ]$ for any $(\epsilon,\delta)$-SimplePAC algorithm.
Clearly, if an algorithm is $(\epsilon,\delta)$-PAC for an instance $\rho$ then it is also $(\epsilon,\delta)$-SimplePAC for $\rho$ since we may take $\tau_{simple,\rho} = \tau_{PAC}$ and output the arm identified at $\tau_{PAC}$ at all $t \geq \tau_{PAC}$.
However, as the above discussion suggests, $\tau_{simple,\rho}$ may be significantly smaller than $\tau_{PAC}$, even as small as $\tau_{simple,\rho}=O(1)$ while $\tau_{PAC} = \Omega(n)$. We have written $\tau_{simple,\rho}$ to emphasize that this stopping time may be a function of the underlying instance $\rho$, but in the interest of brevity we will often write $\tau_{simple}$ when the context makes it clear. 

\subsection{Motivating Examples}

\noindent\textbf{Identifying a good arm.} The LUCB algorithm of \cite{DBLP:conf/icml/KalyanakrishnanTAS12} is an $(\epsilon,\delta)$-PAC algorithm whose sample complexity is within $\log(n)$ of the lower bound of any $(\epsilon,\delta)$-PAC algorithm and is known to have excellent empirical performance \cite{10.1109/CISS.2014.6814096}.
LUCB does not use $\epsilon$ as a sampling rule (only a stopping condition), and thus can be evaluated after any number of pulls using its empirical best arm. 
We compare its performance to our algorithm, denoted ``ILUCB"  (Infinite LUCB), designed for the SimplePAC definition.
We obtain a realistic bandit instance of 9061 Bernoulli arms with parameters defined by the empirical means from a recent crowd-sourced \emph{New Yorker Magazine} Caption Contest, where each caption was shown uniformly at random to a participant, and received on average 155 votes of funny/unfunny (see Appendix~\ref{sec:experiment_details} for details).
We run LUCB and ILUCB for 3 million iterations. Figure~1 depicts $\tau_{simple}$ for LUCB and ILUCB, where for a given $\epsilon > 0$, $\tau_{simple}$ is the last time that a non-$\epsilon$-good arm is outputted. We observe that our proposed algorithm latches onto the $\epsilon$-good arms at a rate that is several orders of magnitude better than LUCB for a large range of values of $\epsilon$. In addition, LUCB requires several million samples to provide a certificate for $\epsilon \approx 0.32$--an impractical sample complexity for an arm that is still very far from optimal. 

\begin{figure*}[tb]
  \begin{minipage}[b]{.45\linewidth}
    \centering
    \includegraphics[width=\linewidth]{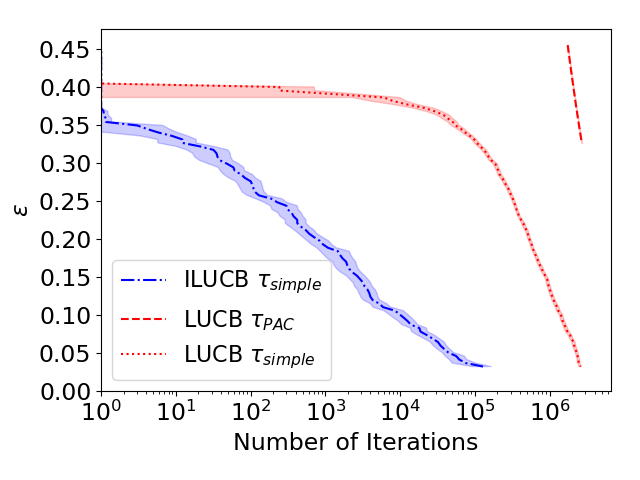}
    \vspace*{-5mm}
    \caption{$\epsilon$-good arm identification (LUCB)}
  \end{minipage}
  \hspace{1em}
  \hspace{1em}
  \begin{minipage}[b]{.45\linewidth}
    \centering
    \includegraphics[width=\textwidth]{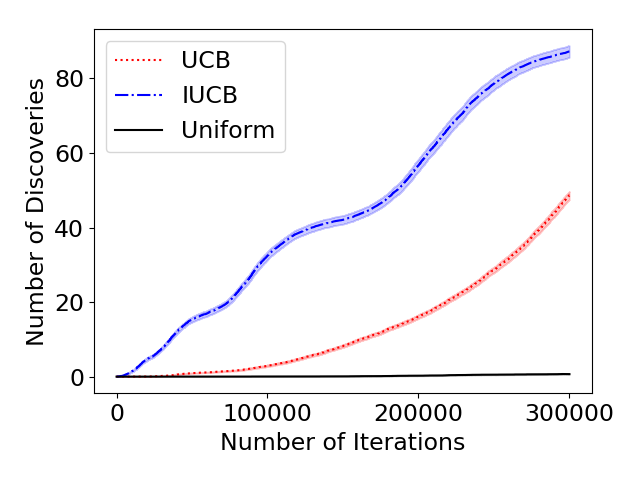}
    \vspace*{-5mm}
    \caption{Identifying means above a threshold}
  \end{minipage}
\end{figure*}

\noindent\textbf{Multiple identifications of arms above a threshold.}
We now turn our attention to multiple testing, a stalwart of science. 
The recent work \cite{jamieson2018bandit} proposed an algorithm that identifies nearly all $m$ arms above a threshold in a number of samples that is nearly optimal, but has a sample complexity that scales with $n$.
However, identifying just $k < m$ arms may require substantially fewer samples and is often sufficient in practice.
Consider the experimental data of \cite{hao2008drosophila}, which aimed to discover genes in Drosophila that inhibit virus replication. 
Starting with 13,071 genes \cite{hao2008drosophila} measured each gene multiple times using an adaptive data collection procedure, ultimately taking about $38000$ measurements.
As was revealed in a meta-study \cite{hao2013limited}, there were inconsistencies in the identified genes when this procedure was duplicated by other labs; this provides strong motivation for sample efficient and provably reliable algorithms. 
Figure 2 depicts a simulation based on plug-in estimates of the experimental data of \cite{hao2008drosophila} (described in Appendix~\ref{sec:experiment_details}) and shows that our algorithm (IUCB) is able to make discoveries much more quickly than the algorithm from \cite{jamieson2018bandit} (UCB).
Our algorithm leverages the observation that to identify $k$ arms above the threshold $\mu_0$, it suffices to consider $\Theta(\frac{kn}{m})$ arms chosen uniformly at random, where $m = | \{ i : \mu_i > \mu_0\}|$. See Appendix \ref{sec:experiment_details} for more details on the experiments. 

\subsection{Related work}

In addition to the lower bounds for the $(\epsilon,\delta)$-PAC setting discussed in Section~\ref{sec:pac_defs} \cite{kaufmann2016complexity,Mannor04thesample}, a related line of work has studied the exact PAC sample complexity in the asymptotic regime as $\delta \rightarrow 0$ \cite{degenne2019pure,garivier2019non}.

Our definition of SimplePAC may be interpreted as a high probability version of the expected \emph{simple regret} metric (c.f. \cite{Bubeckal11}), however, neither definition subsumes the other.
The closest work to our own SimplePAC setting is that of \cite{chaudhuri2017pac,chaudhuri2019pac,aziz2018pure} that also aimed to identify multiple arms, but with the critical difference that $m$ is assumed to be \emph{known}.
Specifically, given a tolerance $\eta \geq 0$, they say an arm $i$ is $(\eta,m)$-optimal if $\mu_i \geq \mu_m - \eta$.
The objective, given $m$ and $\eta$ as inputs to the algorithm, is to identify $k$ $(\eta,m)$-optimal arms with probability at least $1-\delta$. 
The case when $\eta = 0$ and $m=|\{i : \mu_i > \mu_1 - \epsilon \}|$ coincides with our setting, with the critical difference that in our setting the algorithm never has knowledge of $m$. 
With just knowledge of $\epsilon$ but not $m$, as in our setting, there is no guide a priori to how many arms we need to consider in order to get just one $\epsilon$-good arm.
However, still relevant from a lower bound perspective, they prove \emph{worst-case}  results for $\eta >0$. 
In contrast, our work demonstrates instance-specific lower-bounds (i.e., those that depend on the particular means $\mu$) that directly apply to their setting, a contribution of its own.




\textbf{Algorithms for $\epsilon$-good identification}
For identifying an $\epsilon$-good arm, there have been many $(\epsilon,\delta)$-PAC algorithms proposed over the last few decades \cite{EvenDaral06,DBLP:conf/icml/KalyanakrishnanTAS12,NIPS2012_4640,COLT13,icml2013_karnin13,simchowitz2017simulator,garivier2019non}, the best sample complexity result being $\sum_{i=1}^n \Delta_{i,\epsilon}^{-2} \log( \log(\Delta_{i,\epsilon}^{-2}))$ shown by \cite{icml2013_karnin13} (unfortunately, the algorithm has poor empirical performance and is eclipsed by the LUCB-style algorithms of \cite{DBLP:conf/icml/KalyanakrishnanTAS12,simchowitz2017simulator}).
A closely related problem is known as the \emph{infinite armed-bandit problem} where the player has access to an infinite pool of arms such that when a new arm is requested, its mean is drawn iid from a distribution $\nu$.
In principle, an infinite armed bandit algorithm could solve the problem of interest of this paper by taking $\nu(x) = \tfrac{1}{n} \sum_{i=1}^n \1\{ \mu_i \leq x \}$. 
With the exception of \cite{li2017hyperband}, nearly all of the existing work makes parametric\footnote{For example, for a drawn arm with random mean $\mu$ it is assumed $\P(\mu \leq x) \geq c (x-\mu_*)^\beta$ for some fixed parameters $c, \mu_*, \beta$ that are known (or not).} assumptions about $\nu$ in some way \cite{berry1997,wang2008,CarpentierV15,chandrasekaran2014finding,jamieson2016power}.
 However, the algorithm of \cite{li2017hyperband} was designed for a much more general setting and therefore sacrifices both theoretical and practical performance, and was not designed to take a fixed confidence $\delta$ as input.

\textbf{Algorithms for identifying means above $\mu_0$.}
Maximizing the probability of identifying \emph{every} arm as either above or below a threshold $\mu_0$ given a budget $T$ is known as the \emph{threshold bandit problem} \cite{locatelli2016optimal,mukherjee2017thresholding}.
These works explicitly assume no arms are \emph{equal} to $\mu_0$ and penalize incorrectly predicting a mean above or below the threshold equally.
For our problem setting, the most related work is \cite{jamieson2018bandit} which proposes an algorithm that takes a confidence $\delta$ and threshold $\mu_0$ as input.
The authors characterizes the total number of samples the algorithm takes before all $k=m$ arms with means above the threshold are output with probability at least $1-\delta$ for all future times, that is, the $k=m$ SimplePAC setting.
While this sample complexity is nearly optimal for the $k=m$ case (witnessed by the lower bounds of \cite{simchowitz2017simulator,DBLP:conf/nips/ChenLKLC14}) this work is silent on the issue of identifying just a subset of size $k \leq m$ means above the threshold (and the algorithm does not generalize to this setting).

\section{Lower bounds}
To avoid trivial algorithms that deterministically output an index that happens to be the best arm, we adopt the random permutation model of \cite{simchowitz2017simulator,chen2017nearly}.
We say $\pi \sim \mb{S}^n$ if $\pi$ is drawn uniformly at random from the set of permutations over $[n]$, denoted $\mb{S}^n$.
For any $\pi \in \mb{S}^n$, $\pi(i)$ denotes the index that $i$ is mapped to under $\pi$. 
For a bandit instance $\rho = (\rho_1,\dots,\rho_n)$ let $\pi(\rho) = (\rho_{\pi(1)},\rho_{\pi(2)},\dots,\rho_{\pi(n)})$ so that $\E_{\pi(\rho)}[ T_{\pi(i)}(t) ]$ denotes the expected number of samples taken by the algorithm up to time $t$ from the arm with mean $\mu_i$ when run on instance $\pi(\rho)$. 
The sample complexity of interest is the expected number of samples taken by the algorithm under $\pi(\rho)$ averaged over all possible $\pi \in \mb{S}^n$.

As pointed out in the introduction, for any threshold $\mu_0$ there exists an $\epsilon=\mu_1-\mu_0$. 
Thus, if $m=|\{i: \mu_i > \mu_1 - \epsilon\}|$ then a lower bound for identifying an $\epsilon$-good arm or $k$ arms above a threshold $\mu_0$ is implied by a lower bound to identify $k$ arms among the $m$ largest means for any $1 \leq k \leq m$.
The next theorem handles all $1 \leq k \leq m$ cases simultaneously for a specific instance (i.e., not worst-case as in \cite{chaudhuri2019pac}). 

\begin{theorem}\label{thm:k_of_m_lower_bound}
Fix $\epsilon >0$, $\delta \in(0,1/16)$, and a vector $\mu \in \R^n$.
Consider $n$ arms where rewards from the $i$th arm are distributed according to $\mathcal{N}(\mu_i,1)$, a Gaussian distribution with mean $\mu_i$ and variance $1$.
Assume without loss of generality that $\mu_1 \geq \mu_2 \geq \dots \geq \mu_n$ and let $m = |\{ i \in [n] : \mu_i > \mu_1 - \epsilon\}|$. 
For every permutation $\pi \in \mb{S}^n$ let $(\mc{F}_t^\pi)_{t \in \N}$ be the filtration generated by the algorithm playing on instance $\pi(\rho)$, and let $\tau_\pi$ be a stopping time with respect to $(\mc{F}_t^\pi)_{t \in \N}$ at which time the algorithm outputs a set $\widehat{S}_{\tau_\pi} \subseteq [n]$ with $|\widehat{S}_{\tau_\pi}|=k$. 
If $\P_{\pi(\rho)}( \widehat{S}_{\tau_\pi} \subset \pi([m]) ) \geq 1-\delta$, then 
\begin{align*}
\E_{\pi \sim \mb{S}^n} \E_{\pi(\rho)}\Big[\sum_{i=1}^n T_{\pi(i)}(\tau_\pi) \Big] \geq  \H_{\mathrm{low},k}(\epsilon) \coloneqq \frac{1}{64} \Big(-(\mu_1-\mu_{m+1})^{-2} + \frac{k}{m} \sum_{i=m+1}^n (\mu_1 - \mu_{i})^{-2} \Big).
\end{align*} 
\end{theorem}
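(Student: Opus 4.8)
The plan is a change-of-measure (transportation) argument applied one bad arm at a time, in which the factor $\tfrac{k}{m}$ is produced by a counting/symmetry argument that is only available because of the random-permutation model. Fix the sorted instance $\rho$. For each bad index $j \in \{m+1,\dots,n\}$ let $\rho^{(j)}$ denote the instance obtained from $\rho$ by raising the mean of the single arm carrying $\mu_j$ up to $\mu_1$; this is the only coordinate that changes, and $\mathrm{KL}(\mc N(\mu_j,1),\mc N(\mu_1,1)) = \tfrac12(\mu_1-\mu_j)^2$. Under $\rho^{(j)}$ the $\epsilon$-good arms become $[m]\cup\{j\}$, with two arms now tied at the maximal mean $\mu_1$.

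First I would apply the Garivier--Kaufmann transportation inequality for a fixed permutation $\pi$ and the event $\mc E_j = \{\text{the arm carrying } \mu_j \in \widehat S_{\tau_\pi}\}$:
\[ \E_{\pi(\rho)}[T_{\pi(j)}(\tau_\pi)]\,\tfrac12(\mu_1-\mu_j)^2 \;\ge\; \mathrm{kl}\!\big(\P_{\pi(\rho)}(\mc E_j),\,\P_{\pi(\rho^{(j)})}(\mc E_j)\big). \]
Averaging over $\pi\sim\mb S^n$ and using joint convexity of $\mathrm{kl}$ (Jensen) pushes the average inside:
\[ \E_{\pi\sim\mb S^n}\E_{\pi(\rho)}[T_{\pi(j)}(\tau_\pi)]\,\tfrac12(\mu_1-\mu_j)^2 \;\ge\; \mathrm{kl}\!\big(\bar p_j,\bar q_j\big), \]
where $\bar p_j = \E_\pi \P_{\pi(\rho)}(\mc E_j)$ and $\bar q_j=\E_\pi\P_{\pi(\rho^{(j)})}(\mc E_j)$. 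The $(\mu_1-\mu_j)^{-2}$ weight of the target is exactly the reciprocal of this KL coefficient, so everything reduces to controlling $\bar p_j$ and $\bar q_j$.

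Bounding $\bar p_j$ is immediate: under $\rho$ the arm carrying $\mu_j$ is bad, so $\bar p_j \le \E_\pi\P_{\pi(\rho)}(\widehat S_{\tau_\pi}\not\subset\pi([m])) \le \delta$. The substance is the lower bound $\bar q_j \gtrsim \tfrac{k}{m}$. Here I would use a \emph{fair-share} count: for every $\pi$, $|\widehat S_{\tau_\pi}|=k$ and $\widehat S_{\tau_\pi}$ lies in the good set with probability $\ge 1-\delta$, so summing inclusion probabilities over the $m+1$ good arms of $\rho^{(j)}$ gives $\sum_{g}\P(g\in\widehat S_{\tau_\pi})\ge k(1-\delta)$; averaged over $\pi$ the per-good-arm average is at least $k(1-\delta)/(m+1)$. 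Exchangeability of the two arms tied at $\mu_1$ (which holds because $\pi$ is uniform) shows that the boosted arm's share $\bar q_j$ equals that of the genuine maximal arm, and I would argue this common value is at least the per-good-arm average, giving $\bar q_j \ge k(1-\delta)/(m+1)\ge \tfrac{k}{2m}(1-\delta)$.

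Plugging in, with $\delta<1/16$ one has $\mathrm{kl}(\delta,\tfrac{k}{2m}(1-\delta))\gtrsim \tfrac{k}{m}$ up to an absolute constant, whence $\E_\pi\E_{\pi(\rho)}[T_{\pi(j)}(\tau_\pi)]\gtrsim \tfrac{k}{m}(\mu_1-\mu_j)^{-2}$. Summing over $j=m+1,\dots,n$ and collecting absolute constants yields $\tfrac1{64}\tfrac{k}{m}\sum_{i>m}(\mu_1-\mu_i)^{-2}$; the subtracted $-(\mu_1-\mu_{m+1})^{-2}$ absorbs the slack between the fair share over $m+1$ good arms versus $m$ (the largest bad term is charged once to pay for the $m{+}1$-vs-$m$ and $(1-\delta)$ losses). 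The step I expect to be the main obstacle is the lower bound $\bar q_j \gtrsim k/m$: spreading the total mass $k(1-\delta)$ over the good arms is trivial, but certifying that the boosted/maximal arm receives at least the \emph{average} share --- rather than possibly far less, since a valid output need only contain \emph{some} $k$ of the top arms and may systematically omit the best --- is delicate and is exactly where the tie-to-$\mu_1$ construction together with permutation exchangeability (or a monotonicity refinement showing the maximal arm is output at least as often as a typical good arm) must be used. A secondary technical point is making the constant in $\mathrm{kl}(\delta,\cdot)\gtrsim k/m$ uniform over all ratios $k/m\in(0,1]$ using only $\delta<1/16$.
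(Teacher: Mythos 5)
There is a genuine gap, and it is located exactly where you predicted, plus one more structural problem that precedes it. First, your alternative instance $\rho^{(j)}$ (raise $\mu_j$ to $\mu_1$) is \emph{not} a permutation of $\rho$: it changes the multiset of means. The theorem's hypothesis only constrains the algorithm on the instances $\{\pi(\rho)\}_{\pi \in \mb{S}^n}$, so you have no correctness guarantee whatsoever on $\pi(\rho^{(j)})$, and the ``fair-share'' inequality $\sum_{g}\P_{\pi(\rho^{(j)})}(g\in\widehat S_{\tau_\pi})\ge k(1-\delta)$ is unjustified. This matters because the theorem is an instance-specific lower bound; to stay within its hypothesis every alternative must itself be a permutation of $\rho$. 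Second, even if you grant correctness on $\rho^{(j)}$, the step you flag as the main obstacle --- $\bar q_j \gtrsim k/m$ --- is not merely delicate but false. Consider an algorithm that identifies the apparently $\epsilon$-good arms and outputs the $k$ of them with the \emph{smallest} estimated means (possible whenever $k\le m-1$). It satisfies $\P_{\pi(\rho)}(\widehat S_{\tau_\pi}\subset\pi([m]))\ge 1-\delta$ for every $\pi$, yet under $\pi(\rho^{(j)})$ it places essentially zero mass on the boosted arm and on the true maximal arm alike; your exchangeability argument then only equates two near-zero quantities, and $\mathrm{kl}(\bar p_j,\bar q_j)\approx 0$. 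No monotonicity refinement can rescue this, because a valid output need only contain \emph{some} $k$ good arms and may systematically omit the largest ones. This is precisely the failure mode the paper's own remark attributes to Chen-et-al-style ``copy a bad arm's mean up'' arguments when $k>1$.

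The paper's proof avoids both issues by keeping every alternative inside the permutation class: it uses an averaging lemma to find a set $\sigma\subset[m]$ of size $\lceil m/k\rceil$ that $\widehat S$ intersects with constant probability under $\rho$, swaps $\sigma$ with disjoint sets $\sigma'\subset[n]\setminus[m]$ of the same size (each swap is a permutation of $\rho$, so the hypothesis applies to both instances), and obtains a total-variation separation of $1-\delta-e^{-1}$ from the event $\{\widehat S\cap\sigma\neq\emptyset\}$. The factor $k/m$ then comes from counting the $\approx (n-m)k/m$ disjoint choices of $\sigma'$, not from a per-good-arm share. Even there, a naive KL decomposition fails because all alternatives share the same $\sigma$, which is why the Simulator machinery is introduced. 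Your per-arm transportation plan cannot be repaired within the stated hypotheses.
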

\begin{remark}
By definition, $(\mu_1-\mu_{m+1})^{-2} \leq \epsilon^{-2}$ so aside from pathological cases such as $\mu_1 - \mu_i \gg \epsilon$ for all $i > m+1$ the lower bound will be positive and non-trivial.
For example, suppose $m$ arms have means equal to $\mu_0 + \epsilon$ while the remaining have means equal to $\mu_0$. Then Theorem~\ref{thm:k_of_m_lower_bound} implies that to identify any $k$ of the top $m$ arms requires about $k\frac{n}{m}\epsilon^{-2}$ samples, which exactly matches our intuition for the $n/m$ scaling when identifying $(i)$ an $\epsilon$-good arm in the SimplePAC setting, and $(ii)$ $k$ arms above a threshold in the multiple identifications setting.
\end{remark}

The proof of Theorem~\ref{thm:k_of_m_lower_bound} employs an extension of the \emph{Simulator} argument \cite{simchowitz2017simulator}.
While the $k=1$ case can be proven using an argument similar to \cite{chen2017nearly}, we needed the Simulator strategy for the $k > 1$ case.   
The technique may be useful for proving lower bounds for other combinatorial settings where many outcomes are potentially correct (i.e., choose any $k$ of $m$) \cite{DBLP:conf/nips/ChenLKLC14,chen2017nearly}.

\section{Algorithm}


Algorithm~\ref{fdr_alg} simultaneously handles both the identification of an $\epsilon$-good arm (Line~\ref{lin:best_arm_set}) and the identification of multiple arms above a threshold $\mu_0$ (Line~\ref{lin:fdr_tpr_set}).  It opens progressively larger \emph{brackets}--subsets of the arms--over time. Each bracket $\ell$ is opened after $(\ell-1)2^{\ell-1}$ rounds and is drawn uniformly at random from $\binom{[n]}{M_{\ell}}$, where $M_{\ell} := n \wedge 2^{\ell}$ and $\binom{[n]}{M_{\ell}}$ denotes all subsets of $[n]$ of size $M_{\ell}$. Algorithm~\ref{fdr_alg} cycles through the open brackets, at each round pulling an arm $I_t$ in the chosen bracket $R_t$ that maximizes an upper confidence bound $ \widehat{\mu}_{i,R_t, T_{i,R_t}(t)} + U(T_{i,R_t}(t),\delta)$ on its mean. Here, $\widehat{\mu}_{i,r, t}$ denotes the empirical mean of arm $i$ in bracket $r$ after $t$ pulls, $T_{i,r}(t)$ denotes the number of times arm $i$ has been pulled in bracket $r$ up to time $t$, and finally $U(t,\delta) = c \sqrt{\frac{1}{t} \log(\log(t)/\delta)}$ denotes an anytime confidence bound (thus, satisfiying for any $r \in \N$ and $i \in [n]$ $\P(\cap_{t=1}^\infty |\widehat{\mu}_{i,r,t} - \mu_i| \leq U(t,\delta)) \geq 1 - \delta$) based on the law of the iterated logarithm (LIL) \cite{jamieson2014lil,kaufmann2016complexity}. We note that for the purposes of simplifying the analysis of the algorithm, observations from arms are not shared across brackets, but they should be shared in practice.

For $\epsilon$-good arm identification, the algorithm outputs a maximizer $O_t$ of its lower confidence bound (Line~\ref{lin:best_arm_set}), ensuring that once the lower confidence bound of some $\epsilon$-good arm is greater than $\mu_1 - \epsilon$, the algorithm will never output a non-$\epsilon$-good arm again with high probability. For the problem of multiple identifications above a threshold, various suggested sets are possible depending on the desired guarantees. In the main body of the paper, we focus on a guarantee called \emph{FDR-TPR} (false discovery rate-true positive rate) \citep{jamieson2018bandit} that guarantees approximate identification of the arms (see Theorem \ref{thm_fdr_paper} for a precise statement). For this goal, the algorithm  builds a set $\S_t$ (Line~\ref{lin:fdr_tpr_set}) based on the Benjamini-Hochberg procedure developed for multi-armed bandits in \cite{jamieson2018bandit}.

We briefly remark on a connection between our proposed algorithms and best arm identification. While $\S_t = \emptyset$, both algorithms essentially act identically to a nearly optimal best arm identification algorithm \cite{jamieson2014lil} since the same arm $I_t$ is pulled for all objectives. Furthermore, once $\S_t \neq \emptyset$, then the multiple identifications variant of the algorithm continues to act as a nearly optimal best arm identification algorithm on the remaining arms. These similarities reflect the deep connections between $\epsilon$-good arm identification, identifying means above a threshold, and best arm identification.


\begin{algorithm}[t]
\caption{Infinite UCB Algorithm: $\epsilon$-good arm identification and FDR-TPR}
\label{fdr_alg}
\footnotesize
\begin{algorithmic}[1]
\STATE $\delta_r = \frac{\delta}{r^2}$, $\delta^\prime_r = \frac{\delta_r}{6.4 \log(36/\delta_r)}$, $\ell = 0$, $R_0 = 0$, $\mathcal{S}_{0} = \emptyset$
\FOR{ $t =1,2, \ldots$}
\IF{$t \geq 2^\ell \ell$}
\STATE Draw a set $A_{\ell+1}$ uniformly at random from $\binom{[n]}{M_{\ell+1}}$, where $M_{\ell} := n \wedge 2^{\ell}$
\STATE $\ell = \ell + 1$
\ENDIF
\STATE $R_t = 1+  R_{t-1} \cdot \1\{R_{t-1} < \ell \} $
\IF{there exists $i \in A_{R_t} \setminus \mathcal{S}_{t}$ such that $T_{i,R_t}(t) = 0$}
\STATE  Pull an arm $I_t$ belonging to $\{i \in A_{R_t} \setminus \mathcal{S}_{t} : T_{i,R_t}(t) = 0 \}$
\ELSE
\STATE Pull arm $I_t = \text{argmax}_{i \in A_{R_t} \setminus \mathcal{S}_{t}} \widehat{\mu}_{i,R_t, T_{i,R_t}(t)} + U(T_{i,R_t}(t),\delta)$
\ENDIF
\IF{Best Arm Identification}
\STATE $O_t = {\text{argmax}}_{i \in A_r \text{ for some } r \leq \ell}   \widehat{\mu}_{i,r,T_{i,r}(t)} - U(T_{i,r}(t), \frac{\delta}{|A_{r}| r^2} )$ \hfill \texttt{\small\color{blue} \% Best-arm Thm.\ref{eps_good_theorem_paper}} \label{lin:best_arm_set}
\ELSIF{FDR-TPR}
\STATE $\forall p \in [|A_{R_t}|]$ set $s(p) = \{i \in A_{R_t} : \widehat{\mu}_{i,R_t, T_{i,R_t}(t)} - U(T_{i,R_t}(t), \frac{p}{|A_{R_t}| } \delta^\prime_{R_t}) \geq \mu_0 \}$
\STATE $\mathcal{S}_{t+1} = \mathcal{S}_{t} \cup s(\widehat{p})$ where $\widehat{p} = \text{max}\{p \in [|A_{R_t}|] : |s(p)| \geq p \}$ \hfill \texttt{\small\color{blue} \% FDR Thm.\ref{thm_fdr_paper}} \label{lin:fdr_tpr_set}
\ENDIF
\ENDFOR
\end{algorithmic}
\end{algorithm}

\section{Upper Bounds}

Our upper bounds all have a similar form. They are characterized in terms of $\Delta_{i,j} = \mu_i - \mu_j$, the \emph{gap} between the $i$th arm and the $j$th arm, where we henceforth assume without loss of generality that $\mu_1 \geq \mu_2 \geq \ldots \geq \mu_n$. For $\epsilon$-good arm identification we let $m = |\{i : \mu_i > \mu_1 - \epsilon \}|$ and for identifying means above a threshold $\mu_0$ we let $m = |\{i : \mu_i > \mu_0 \}|$. As argued in the introduction, in the SimplePAC setting if $m \gg 1$, one would expect the sample complexity to be about $m$ times smaller than if $m=1$. 
It turns out that it is more subtle than that because many of the top $m$ arms may be nearly indistinguishable from $\mu_{m+1}$, the largest mean that is not acceptable. Our upper bounds take into account this phenomenon, showing for example in $\epsilon$-good arm identification that Algorithm \ref{fdr_alg} may use a larger bracket, say of size $\Theta(\frac{n}{j})$ for some $j \leq m$, in the hopes to obtain an arm $l \leq j$ that is easier to identify as $\epsilon$-good. 

In Appendix~\ref{sec:upper_bound_proofs} we state our theorems including all factors, but for the purposes of exposition, here we use ``$\lesssim$" to hide constants and doubly logarithmic factors. For simplicity, we assume that the distributions are $1$-sub-Gaussian and that $\mu_0,\mu_1, \ldots, \mu_n \in [0,1]$. We also define $\log(x) \coloneqq \max(\ln(x),1)$. 

\subsection{Upper Bound for Identifying an $\epsilon$-good mean}

For all $j \in [m]$, let 
\begin{align*}
\bestH(\epsilon;j)  & \coloneqq \frac{1}{j}\left( \underbrace{\left\{\sum_{i=1}^j\Delta_{i,m+1}^{-2} + \sum_{i=j+1}^{m}(\Delta_{j,i} \vee \Delta_{i,m+1})^{-2}\right\}}_{\text{top arms}}\log(\frac{n}{j\delta}) + \underbrace{\sum_{i=m+1}^{n}\Delta_{j,i}^{-2}}_{\text{bottom arms}}\log (\frac{1}{\delta})  \right).
\end{align*}
$\bestH(\epsilon;j) $ bounds the expected number of samples required by a bracket of size $\Theta(\frac{n}{j})$ to identify an $\epsilon$-good arm when \emph{(i)} one of its arms is at least as large as $\mu_j > \mu_1 - \epsilon$ and \emph{(ii)} the empirical means of the arms in the bracket concentrate well enough. We note that as $j$ decreases from $m$ to $1$, the gaps of the bottom $n-m$ arms (given by $\Delta_{j,i}$) and the gaps of the top $m$ arms (which can be expressed as $\Delta_{j,i} \vee \Delta_{i,m+1}$) are both non-decreasing. Moreover, the gaps of the arms in $\{j+1, \ldots, m\}$ are at least $\tfrac{1}{2} (\mu_j - \mu_{m+1})$, so that for suitably small $j \in [m]$, the arms in $\{j+1, \ldots, m\}$ have large gaps even if their means are very close to $\mu_{m+1}$.
Theorem \ref{eps_good_theorem_paper} gives our upper bound for $\epsilon$-good arm identification.

\begin{theorem}[$\boldsymbol{\epsilon}$-good identification]
\label{eps_good_theorem_paper}
Let $\delta \leq 0.025$ and $\epsilon > 0$. Let $(\mc{F}_t)_{t \in \N}$ be the filtration generated by playing Algorithm \ref{fdr_alg} on problem $\rho$. Then, there exists a stopping time $\tau$ wrt $(\mc{F}_t)_{t \in \N}$ such that
\begin{align}
\E[\tau] & \lesssim  \min_{j \in [m]} \bestH(\epsilon;j) \log(\bestH(\epsilon;j) + \Delta_{j,m+1}^{-2}) \label{eq:eps_good_bound}
\end{align}
and $\P( \exists s \geq \tau: \mu_{O_s} \leq \mu_1 - \epsilon) \leq  2\delta$. 
\end{theorem}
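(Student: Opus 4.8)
The plan is to control a single ``good event'' $\mc{E}$ on which all empirical means concentrate, establish correctness deterministically on $\mc{E}$, and separately bound $\E[\tau]$ by analyzing the bracket tuned to the optimal index $j$. I would take $\mc{E}$ to be the event that $|\widehat{\mu}_{i,r,T_{i,r}(t)} - \mu_i| \le U(T_{i,r}(t),\cdot)$ holds simultaneously for every bracket $r$, every arm $i \in A_r$, and every round $t$, at the confidence levels used both in the sampling rule and in the rule defining $O_t$. The anytime LIL property of $U$, a union bound over the $|A_r|$ arms of each bracket (absorbed by the $|A_r|$ appearing in the confidence level of $O_t$), and a union bound over brackets (via the summable schedule $\delta_r=\delta/r^2$) give $\P(\mc{E}^c) \le 2\delta$. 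I would then define the analysis stopping time $\tau$ as the first round at which some $\epsilon$-good arm has lower confidence bound exceeding $\mu_1-\epsilon$; this is a legitimate stopping time with respect to $(\mc{F}_t)$ because the set of $\epsilon$-good arms and the constant $\mu_1-\epsilon$ are fixed instance-dependent quantities, even though the algorithm never knows them.

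Correctness is the easy direction. On $\mc{E}$ every lower confidence bound satisfies $\widehat{\mu}_{i,r,T_{i,r}(t)} - U \le \mu_i$, and since $U$ is nonincreasing in the number of pulls while $T_{i,r}(t)$ is nondecreasing, the quantity $\mu_i - 2U(T_{i,r}(t),\cdot)$ is nondecreasing in $t$; hence once an $\epsilon$-good arm's lower bound crosses $\mu_1-\epsilon$ it remains above $\mu_1-\epsilon$ thereafter. Because $O_t$ is the global argmax of the lower confidence bounds, its bound is at least that of this $\epsilon$-good arm, so $\mu_{O_t} \ge \text{LCB}_{O_t} > \mu_1-\epsilon$ for every $t \ge \tau$. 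Thus $\{\exists s \ge \tau : \mu_{O_s} \le \mu_1-\epsilon\} \subseteq \mc{E}^c$, giving the claimed $2\delta$.

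The substance is the bound on $\E[\tau]$, which I would prove by exhibiting, for each candidate $j \in [m]$, a bracket that certifies an $\epsilon$-good arm quickly and then optimizing over $j$. Fix $j$ and consider the bracket with $M_{\ell_j} = \Theta(n/j)$. Since there are $j$ arms with mean at least $\mu_j > \mu_1-\epsilon$, a uniformly random set of size $\Theta(n/j)$ contains one of them with constant probability. Conditioned on this and on $\mc{E}$, I would run the standard UCB bookkeeping inside the bracket: an arm is pulled only while its UCB exceeds the bracket-best lower bound, which caps its pulls by the inverse square of its effective gap. The three terms of $\bestH(\epsilon;j)$ arise from the three arm types---bottom arms $i>m$ retire at gap $\Delta_{j,i}$; top arms $i\le j$ are certified through their gap above the threshold; and middle arms $j<i\le m$ retire at whichever occurs first, separation from the bracket-best (gap $\Delta_{j,i}$) or self-certification (gap $\Delta_{i,m+1}$), which is exactly the $\Delta_{j,i}\vee\Delta_{i,m+1}$ in the sum. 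Averaging the per-arm pull counts over the random draw introduces an inclusion probability $\Theta(1/j)$ per arm, converting the sum over all $n$ arms into the leading $\tfrac1j$ factor and yielding $\E[\text{pulls in bracket }\ell_j]\lesssim \bestH(\epsilon;j)$. Finally, the opening schedule (bracket $\ell$ opens at round $\Theta(\ell 2^\ell)$) with round-robin cycling means that when the relevant bracket is active there are $\Theta(\log(\cdot))$ open brackets, so delivering $\bestH(\epsilon;j)$ pulls to bracket $\ell_j$ costs a further $\log(\bestH(\epsilon;j)+\Delta_{j,m+1}^{-2})$ factor in total rounds; taking the best $j$ gives the stated minimum.

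The hard part will be making the bracket argument quantitatively honest rather than heuristic, and three coupled difficulties must be handled together. First, the draw of $A_{\ell_j}$ may fail to contain a top-$j$ arm, so $\E[\tau]$ must be controlled by summing the costs of successively larger brackets weighted by the probability of earlier failures---a geometric-type series I would argue converges to $O(\bestH(\epsilon;j)\log(\cdots))$. Second, the ``effective gap'' for middle arms must be turned into a genuine $\min$ of two stopping events valid on $\mc{E}$, using that an $\epsilon$-good middle arm can be retired either by separation or by self-certification. Third, the interaction between the schedule, the cycling, and the requirement that bracket $\ell_j$ actually be open is what forces the $\Delta_{j,m+1}^{-2}$ term inside the logarithm and must be reconciled with the expected per-bracket cost. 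Converting the high-probability, within-bracket statements into a bound on the \emph{unconditional} expectation $\E[\tau]$---rather than a conditional or high-probability statement---is where the bulk of the technical care will lie.
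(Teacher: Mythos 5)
Your overall architecture matches the paper's: an instance-dependent stopping time, correctness via the argmax-of-LCB rule and monotonicity of $\mu_i - 2U(T_{i,r}(t),\cdot)$, a bracket of size $\Theta(n/j)$ containing a top-$j$ arm with constant probability, a geometric summation over the brackets that fail, and the scheduling $\log$ factor. However, there is a genuine gap in your concentration bookkeeping. The sampling rule uses $U(T_{i,R_t}(t),\delta)$ at bare confidence $\delta$ per arm, so your single good event $\mc{E}$ --- simultaneous concentration of every arm in every bracket at the levels used by the sampling rule --- cannot be given probability $1-2\delta$ by a union bound; and if you inflate the confidence parameter to $\delta/(|A_r|r^2)$ to make the union bound close, the bottom-arms term of $\bestH(\epsilon;j)$ acquires a $\log(\tfrac{n}{j\delta})$ factor where the theorem claims only $\log(\tfrac{1}{\delta})$. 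The paper's proof avoids exactly this: it introduces the per-arm random levels $\rho_{i,r} = \sup\{\rho : \cap_t |\widehat{\mu}_{i,r,t}-\mu_i|\le U(t,\rho)\}$ and replaces the union bound by the aggregate events $F_{r,2}^{(j)}$ (controlling $\sum_i \Delta_{j,i}^{-2}\log(1/\rho_{i,r})$ via Lemma 8 of \cite{jamieson2018bandit}) and $F_{r,3}^{(j)}$ (existence of \emph{one} well-concentrated top arm), which is why the final guarantee is a bound on $\E[\tau]$ rather than a high-probability bound --- the paper explicitly remarks that the high-probability route costs $\log^2(1/\delta)$ and loses the per-arm gaps.

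A secondary, fixable issue: you anchor the stopping time at the first time an $\epsilon$-good arm's LCB exceeds $\mu_1-\epsilon$, whereas the paper anchors at the LCB exceeding $\mu_{m+1}$ (and requires only $T_{i,s}(t)\ge U^{-1}(\Delta_{i\vee j_0,m+1}/4,\cdot)$). Since $\mu_{m+1}\le \mu_1-\epsilon$, your threshold is stronger and would replace the top-arm gaps $\Delta_{i,m+1}$ by the potentially much smaller $\mu_i-(\mu_1-\epsilon)$, degrading the bound relative to the stated $\bestH(\epsilon;j)$; the correctness argument goes through equally well with the $\mu_{m+1}$ threshold because any arm whose mean exceeds $\mu_{m+1}$ is automatically in $[m]$ and hence $\epsilon$-good.
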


\begin{remark}
Assume the setting of Theorem \ref{eps_good_theorem_paper}. If $m$ arms have means equal to $\mu_0 + \epsilon$ while the remaining have means equal to $\mu_0$ then defining $\bar{\H} \coloneqq \frac{n}{m} \epsilon^{-2} \log(1/\delta)$, we have that
$\E[\tau] \lesssim \bar{\H} \log(\bar{\H})$. 
This matches the lower bound given by $\H_{\mathrm{low},1}(\epsilon)$ up to $\log$ factors.
\end{remark}

Consider putting $j =m$ in \eqref{eq:eps_good_bound}. The term $\bestH(\epsilon;m)$ bounds the expected number of rounds required by a bracket of size $\Theta(\frac{n}{m})$ to identify an $\epsilon$-good arm when one of its arms is $\epsilon$-good. The extra logarithmic factor reflects the cost of adapting to unknown $m$.
In many situations setting $j =m$ in \eqref{eq:eps_good_bound} is woefully loose because while a bracket of size $\Theta(\frac{n}{m})$ is sufficiently large to contain an $\epsilon$-good arm with constant probability, it may be advantageous to use a much larger bracket in hopes of getting an $\epsilon$-good arm that is much easier to identify as $\epsilon$-good unverifiably. 
Figure~\ref{fig:doodles} illustrates a bandit instance that demonstrates this tradeoff for a particular $j \in [m]$. 
Informally, if one randomly chooses $\frac{n}{m}$ arms then one expects the highest mean amongst these to have an index $I$ uniformly distributed in $\{1,\dots,m\}$. The means of many of these arms may be very close to the means of the bottom $n-m$ arms so that on average an enormous number of samples is required to distinguish $I$ from the bottom $n-m$ arms and, therefore, to unverifiably identify $I$ as $\epsilon$-good.
On the other hand, if one randomly chooses $\frac{n}{j}$ arms then one expects the highest mean amongst these to have an index $I^\prime$ uniformly distributed in $\{1,\dots,j \}$. The means of these arms may be substantially larger than the means of the bottom $n-m$ arms so that on average far fewer samples are required to distinguish $I^\prime$ from the bottom $n-m$ arms.
Thus, there is a problem-dependent tradeoff in the number of arms to consider and an effective strategy must naturally adapt to it. 
%
%
The minimization problem of inequality~(1) says that Algorithm \ref{fdr_alg} uses a bracket with the optimal number of arms to identify an $\epsilon$-good arm. 

\begin{figure} 
    \centering
    \includegraphics[width=.75\textwidth]{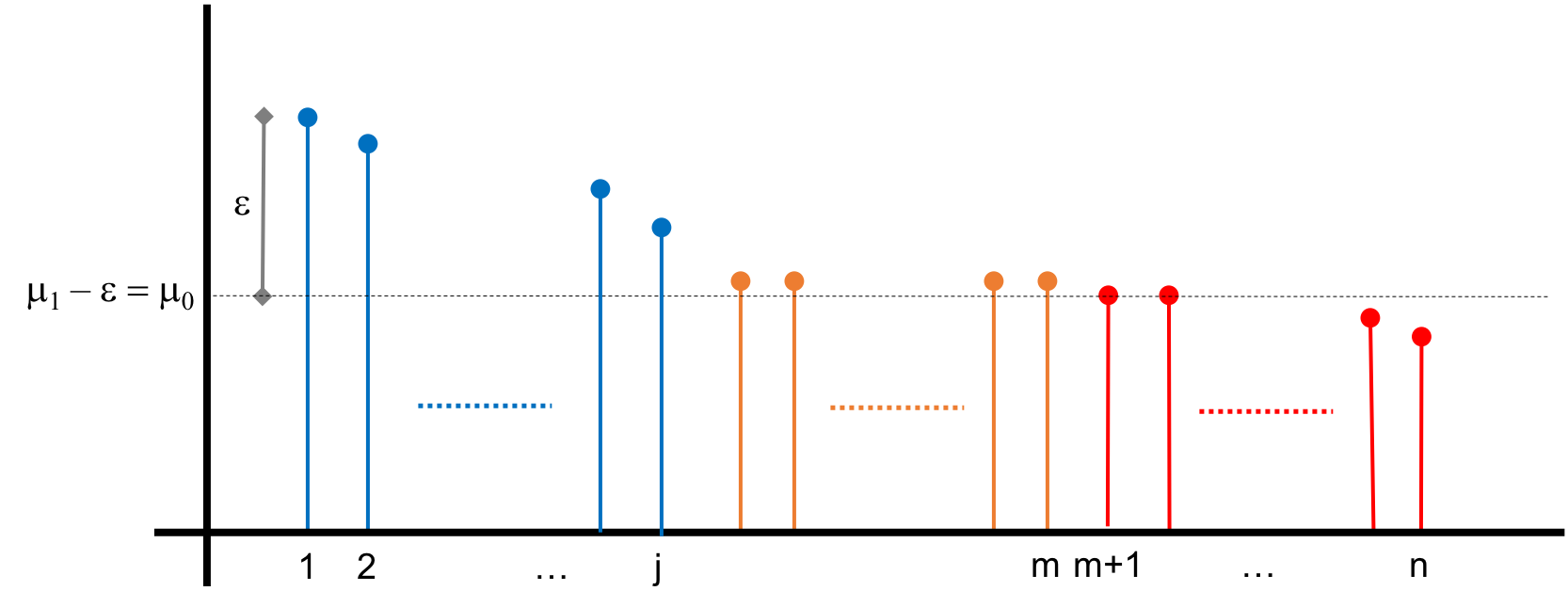}
    \caption{Our sample complexity results rely on picking a bracket of an appropriate size: $\frac{n}{m}$ is too small, $n$ is too large, and $\frac{n}{j}$ appears to be about a good size. }
    \label{fig:doodles}
\end{figure}

\subsection{Upper Bound for Identifying means above a threshold $\mu_0$}

Define 
$\H_1 = \{i \in [n] : \mu_i > \mu_0 \} \,  \, \, \text{ and } \, \,  \,   \H_0 = \{i \in [n] : \mu_i \leq \mu_0 \}.$
$\H_1$ consists of the arms that we wish to identify and $\H_0$ all the other arms. Let $m = |\H_1|$ and define $\Delta_{j,0} \coloneqq \mu_j -\mu_0$. For all $j \in [m]$, define
\begin{align*}
\fdrH(\mu_0;j)  & \coloneqq \frac{k}{j}\left( \underbrace{\left\{\sum_{i=1}^m \Delta_{i \vee j, 0}^{-2}\right\}}_{\text{top arms}}\log(\frac{nk}{j\delta}) + \underbrace{\sum_{i=m+1}^{n}(\Delta_{j,i})^{-2}}_{\text{bottom arms}}\log(\frac{1}{\delta})  \right) \\
\fdrHt(\mu_0;j) & \coloneqq  \frac{n}{j}k \Delta_{j,0}^{-2} \log\left( 1/\delta\right).
\end{align*}
$\fdrH(\mu_0;j)$ bounds the expected number of samples required by a bracket of size $\Theta(\frac{nk}{j})$ to identify $k$ arms satisfying $\mu_i > \mu_0$ when \emph{(i)} at least $k$ of its arms have means greater than $\mu_j > \mu_0$ and \emph{(ii)} the empirical means of the arms in the bracket concentrate well. $\fdrHt(\mu_0;j)$ plays a similar role but removes a logarithmic factor on the arms in $\H_1$ at the cost of losing the dependence on the individual gaps. Theorem \ref{thm_fdr_paper} gives a FDR-TPR guarantee (see Appendix \ref{sec:additional_algorithms} for less practical algorithms with stronger theoretical guarantees).
\begin{theorem}[FDR-TPR]
\label{thm_fdr_paper}
Let $\delta \leq .025$. Let $k \leq |\H_1|$. Let $(\mc{F}_t)_{t \in \N}$ be the filtration generated by playing Algorithm \ref{fdr_alg} on problem $\rho$. Then, for all $t \in \N$, $\E[ \frac{|\S_t \cap \H_0|}{|\S_t| \wedge 1}] \leq 2 \delta$ and there exists a stopping time $\tau_k$ wrt $(\mc{F}_t)_{t \in \N}$ such that
\begin{align}
\E[\tau_k] & \lesssim \min_{k \leq j \leq m } \fdrH(\mu_0;j)  \log(\fdrH(\mu_0;j) +  \Delta_{j,0}^{-2} ), \quad  \text{ and} \label{eq:fdr_bound_dep} \\
\E[\tau_k] & \lesssim \min_{k \leq j \leq m}  \fdrHt(\mu_0;j) \log(\fdrHt(\mu_0;j)) \label{eq:fdr_bound_indep} 
\end{align}
and for all $t \geq \tau_k$, $\E[|\S_t \cap \H_1|] \geq (1-\delta)k$. 
\end{theorem}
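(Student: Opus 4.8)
The plan is to treat the three conclusions — the FDR bound, the sample-complexity bounds \eqref{eq:fdr_bound_dep}--\eqref{eq:fdr_bound_indep}, and the TPR bound — separately, analyzing the algorithm one bracket at a time before combining across brackets via the doubling/round-robin schedule. Throughout I would condition on a single ``clean'' event $\calE$ on which $|\widehat{\mu}_{i,r,t}-\mu_i|\le U(t,\cdot)$ for every arm $i$, bracket $r$, and time $t$ at the confidence levels the algorithm actually uses. By the anytime (LIL) guarantee on $U$, a union over the arms of each bracket and over brackets with $\delta_r=\delta/r^2$ gives $\P(\calE^c)\lesssim\delta$, since $\sum_r r^{-2}<2$.

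For the FDR conclusion I would reproduce the Benjamini--Hochberg analysis of \cite{jamieson2018bandit} bracket-by-bracket. The rule $\widehat{p}=\max\{p:|s(p)|\ge p\}$ together with the level $\tfrac{p}{|A_{R_t}|}\delta'_{R_t}$ inside the LCB is precisely a BH step at nominal level $\delta'_{R_t}$; the factor $6.4\log(36/\delta_r)$ appearing in $\delta'_r=\delta_r/(6.4\log(36/\delta_r))$ converts the fixed-confidence BH level into an anytime one, so that the anytime validity of $U$ makes every null arm $i\in\H_0$ withstand \emph{all} times simultaneously. Hence the expected false-discovery proportion contributed by bracket $r$ is at most a constant multiple of $\delta_r$, and summing against $\sum_r\delta_r<2\delta$ bounds the overall expected false-discovery proportion by $2\delta$ for every $t$. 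This part holds unconditionally in $t$ and does not interact with the sample-complexity argument.

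For the sample-complexity and TPR conclusions I would fix $j\in[k,m]$ and study a bracket $\ell$ of size $M_\ell=\Theta(\tfrac{nk}{j})$. Two facts drive the analysis. First, since $A_\ell$ is uniform over $\binom{[n]}{M_\ell}$, each arm lies in the bracket with probability $M_\ell/n=\Theta(k/j)$, so on $\calE$ the expected number of pulls the bracket spends is $\sum_i \P(i\in A_\ell)\cdot(\text{pulls of }i)$; bounding the pulls of a top arm $i\le m$ by $\lesssim\Delta_{i\vee j,0}^{-2}\log(1/\delta'_\ell)$ (the count needed for its LCB to cross $\mu_0$) and those of a bottom arm $i>m$ by $\lesssim\Delta_{j,i}^{-2}\log(1/\delta'_\ell)$ (after which its UCB falls below the LCB of a confirmed arm) reproduces, after multiplying by $\Theta(k/j)$, exactly the top and bottom sums defining $\fdrH(\mu_0;j)$; replacing every gap by the uniform $\Delta_{j,0}$ and charging every arm of the bracket yields $\fdrHt(\mu_0;j)$. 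Second, because there are $j$ arms with mean $\ge\mu_j$, a binomial tail bound shows the bracket contains at least $k$ of them with constant probability, so $k$ true discoveries are confirmed. Multiplying the per-bracket sample count by the round-robin factor $\ell=\Theta(\log(nk/j))$ and adding the opening overhead $(\ell-1)2^{\ell-1}$ accounts for the extra $\log(\fdrH(\mu_0;j)+\Delta_{j,0}^{-2})$ (resp.\ $\log(\fdrHt(\mu_0;j))$) factor; taking $\tau_k$ to be the first time such a bracket has confirmed $k$ arms and minimizing over $j$ gives \eqref{eq:fdr_bound_dep}--\eqref{eq:fdr_bound_indep}. Finally, monotonicity $\S_t\subseteq\S_{t+1}$ keeps the confirmed arms in the set, so on $\calE$ we have $|\S_t\cap\H_1|\ge k$ for all $t\ge\tau_k$, whence $\E[|\S_t\cap\H_1|]\ge(1-\delta)k$.

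The main obstacle I anticipate is the coupling between set-building and sampling: arms that enter $\S_t$ are dropped from the pull set $A_{R_t}\setminus\S_t$, so the per-bracket gap analysis must be run on a shrinking arm set, and the ``good bracket'' argument must be carried out in expectation over the random draw of $A_\ell$ (not merely on one high-probability event) in order to control $\E[\tau_k]$ rather than a high-probability $\tau_k$. Reconciling the uniform-in-$t$ FDR statement with a stopping-time-based sample-complexity statement — so that the random bracket contents, the anytime confidence failures, and the binomial fluctuations in how many top arms each bracket captures are all accounted for under a single expectation — is where the bookkeeping is most delicate, and is exactly where the constants and doubly-logarithmic factors suppressed by ``$\lesssim$'' accumulate.
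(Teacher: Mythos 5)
Your high-level architecture matches the paper's (per-bracket BH analysis summed against $\sum_r \delta/r^2$ for the FDR claim; a bracket of size $\Theta(\frac{nk}{j})$ capturing $k$ arms of $[j]$ via a hypergeometric tail bound; the doubling schedule contributing the extra $\log(\cdot)$ factor), but there are two genuine gaps. First, your single ``clean'' event $\calE$ obtained by a union bound over all arms and brackets is incompatible with the theorem you are proving. The sampling rule uses $U(T_{i,R_t}(t),\delta)$ at per-arm level $\delta$, so a union bound over the $|A_r|$ arms of a bracket does not give a small failure probability; and if you instead inflate to level $\delta/|A_r|$ per arm, the bottom-arms term becomes $\sum_{i>m}\Delta_{j,i}^{-2}\log(|A_r|/\delta)$, which is strictly weaker than the claimed $\sum_{i>m}\Delta_{j,i}^{-2}\log(1/\delta)$. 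The paper avoids this deliberately (see the remark ``We do \emph{not} rely on a high probability event\dots'') by introducing the random per-arm confidence levels $\rho_{i,r}$ and the events $E_{0,r},E_{1,r}$, which control the \emph{weighted sums} $\sum_i \Delta^{-2}\log(1/\rho_{i,r})$ via Lemma~8 of \cite{jamieson2018bandit} rather than each arm individually; this is what delivers $\log(1/\delta)$ on the bottom arms and the single power of $\log(1/\delta)$ overall.

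Second, you correctly identify that the ``good bracket with constant probability'' statement must be converted into a bound on $\E[\tau_k]$, but you leave this as an anticipated obstacle rather than resolving it, and it is the crux of the proof. The paper's resolution is to define $\tau_k$ relative to the first bracket $r_0+s$ on which the good event $E_{r_0+s}=E_{r_0+s}\cap E_{0,r_0+s}\cap E_{1,r_0+s}$ holds, show each such event fails with probability at most $1/16$ independently across brackets, and then balance the exponentially growing cost of bracket $r_0+s$ (a factor $2^s$ or $4^s$, plus the deterministic opening overhead $2^{r-1}(r-1)$) against the geometrically decaying probability $16^{-s}$ that all earlier brackets failed. Without this partition, conditioning on a constant-probability event gives no control of $\E[\tau_k]$ on its complement. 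Two smaller inaccuracies: the per-arm pull count for a top arm under the BH rule is $\Delta_{i\vee j,0}^{-2}\log(|A_r|/(p\,\delta'_r))$ for the (unknown) rank $p$ at which it is accepted, and the paper handles this with a worst-case maximization over bijections $\H_1\to[|\H_1|]$, which is where the $\log(\frac{nk}{j\delta})$ on the top arms comes from (your $\log(1/\delta'_\ell)$ drops this); and the gap-dependent bound \eqref{eq:fdr_bound_dep} is actually obtained in the paper as a corollary of the FWER-TPR analysis, whose expectation step over the random bracket contents requires the law of total expectation plus Jensen's inequality applied to $S_r\log S_r$ — another place where the naive ``multiply per-arm counts by $\P(i\in A_\ell)$'' computation does not suffice because of the $\log$ nonlinearity.
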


\begin{remark}
Assume the setting of Theorem \ref{thm_fdr_paper}. If $m$ arms have means equal to $\mu_0 + \epsilon$ while the remaining have means equal to $\mu_0$ then for any $k \leq m$, defining $\bar{\H}_k \coloneqq \frac{n}{m} k \epsilon^{-2} \log(1/\delta)$, we have that
$\E[\tau_k] \lesssim \bar{\H} _k \log(\bar{\H}_k)$. 
This matches the lower bound given by $\H_{\mathrm{low},k}(\epsilon)$ up to $\log$ factors.
\end{remark}

\eqref{eq:fdr_bound_dep} gives a gap-dependent bound, while \eqref{eq:fdr_bound_indep} sacrifices the dependence on the individual gaps to remove an additional logarithmic factor on the arms in $\H_1$. We discuss inequality \eqref{eq:fdr_bound_dep}, but similar remarks apply to \eqref{eq:fdr_bound_indep}. Plugging $j =m$ into inequality \eqref{eq:fdr_bound_dep} gives the performance of a bracket of size $\Theta(\tfrac{n}{|\H_1|}k)$ while the minimization problem in inequality \eqref{eq:fdr_bound_dep} shows that the algorithm uses a bracket of optimal size.
Paralleling $\epsilon$-good arm identification, it may be very useful to consider more than $\Theta(\frac{n}{|\H_1|}k)$ arms in order to find $k$ that are easier to identify as being larger than the threshold. 
Indeed, the example in and discussion concerning Figure~\ref{fig:doodles} apply directly to this problem as well. 

\begin{remark} Our upper bounds scale as $\log(1/\delta)$ which arises due to requiring concentration of measure on subsets of the observations of the arms.
We do \emph{not} rely on a high probability event that a particular bracket includes some number of good arms which would result in a $\log^2(1/\delta)$ that is common in other related results \cite{chaudhuri2017pac,aziz2018pure,chaudhuri2019pac}; this more careful analysis could also be applied to these other related problem settings. We note that we could obtain a high probability bound at the cost of losing the dependence on the individual gaps for arms with $i \leq j_0$, where $j_0$ is the minimizer of either \eqref{eq:eps_good_bound} or \eqref{eq:fdr_bound_dep}. For example, in $\fdrH(\mu_0;j_0) $, $\Delta_{j_0,0}$ would replace $\Delta_{i \vee j_0,0}$ for all $i \leq j_0$. 
\end{remark}

\section*{Acknowledgements}

The authors would like to thank Max Simchowitz for very helpful feedback that substantially improved the clarity of the paper, as well as Jennifer Rogers and Andrew Wagenmaker for their very useful comments. We also thank Horia Mania for inspiring the proof of Lemma~\ref{lem:min_prob_in_set}. Julian Katz-Samuels is grateful to Clay Scott for his very generous support during the writing of this paper, which relied on NSF Grants No. 1422157 and 1838179 and funding from the Michigan Institute for Data Science.


 \clearpage
 \appendix

\section{Related work: $(\epsilon,\delta)-PAC$ for identifying $k$ $\epsilon$-good arms}\label{sec:kaufmann_results}
\cite{kaufmann2016complexity} proved the following theorem which characterizes the sample complexity for $\epsilon$-good arm identification $k=1, m\geq 1$ and multiple identifications above a threshold $\mu_0$ in the special case of $k=m$ (in general, we are interested in any $1 \leq k \leq m$) in the  $(\epsilon,\delta)$-PAC setting.
\begin{theorem}[\cite{kaufmann2016complexity}]\label{thm:kaufmann_eps_good}
Fix $\epsilon, \delta >0$, and a vector $\mu \in \R^n$.
Fix a bandit instance $\rho$ of $n$ arms where the $i$th distribution equals $\rho_i(\mu) = \mathcal{N}(\mu_i,1)$, a Gaussian distribution with mean $\mu_i$ and variance $1$.
Assume without loss of generality that $\mu_1 \geq \mu_2 \geq \dots \geq \mu_n$ and let $m = |\{ i \in [n] : \mu_i \geq \mu_1 - \epsilon\}|$ so that $\mu_i \geq \mu_1 - \epsilon$ for all $i \in [m]$. 
If algorithm $\mathcal{A}$ returns $k=1$ arms of the top $m$ arms and is $(\epsilon,\delta)$-PAC on $\mc{P} = \{ \mc{N}(\mu',I) : \mu' \in \R^n \}$ then 
\begin{align*}
\E_{\rho}\Big[\sum_{i=1}^n T_i(\tau_{PAC}) \Big] \geq  \tfrac{1}{2}\log(1/2.4\delta) \Big( (m-1)\epsilon^{-2} + \sum_{i=m+1}^n (\mu_1 - \mu_{i})^{-2} \Big) \qquad (k=1)
\end{align*} 
Under the same conditions, if $\mc{A}$ returns $k=m$ arms then
\begin{align*}
\E_{\rho}\Big[\sum_{i=1}^n T_i(\tau_{PAC}) \Big] \geq  2\log(1/2.4\delta) \Big(\sum_{i=1}^m (\mu_i - \mu_{m+1})^{-2} + \sum_{i=m+1}^n (\mu_m - \mu_{i})^{-2} \Big)  \quad (k=m)
\end{align*} 
\end{theorem}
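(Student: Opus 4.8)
The plan is to prove both inequalities by the standard change-of-measure method for pure-exploration lower bounds, built on the transportation lemma of Kaufmann, Capp\'e, and Garivier. The backbone is the inequality that, for any alternative instance $\rho'$ and any event $\mathcal{E} \in \mc{F}_{\tau_{PAC}}$ (using that the PAC stopping time is a.s.\ finite),
\[
\textstyle\sum_{i=1}^n \E_\rho[T_i(\tau_{PAC})]\,\mathrm{KL}(\rho_i,\rho'_i) \;\geq\; \mathrm{kl}\big(\P_\rho(\mathcal{E}),\,\P_{\rho'}(\mathcal{E})\big),
\]
where $\mathrm{kl}(a,b)$ is the binary relative entropy; since every arm is a unit-variance Gaussian, $\mathrm{KL}(\rho_i,\rho'_i)=\tfrac12(\mu_i-\mu'_i)^2$. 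I would always take $\rho'$ to differ from $\rho$ in a single coordinate, so only one term survives on the left and I obtain a per-arm lower bound on $\E_\rho[T_i(\tau_{PAC})]$; summing these and using $\sum_i T_i(\tau_{PAC})=\tau_{PAC}$ yields the claim. The logarithmic prefactor comes from $\mathrm{kl}(\delta,1-\delta)\geq \log(1/(2.4\delta))$, together with monotonicity of $\mathrm{kl}(p,q)$ (increasing in $q$ for $q>p$, decreasing in $p$ for $p<q$).

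For the $k=m$ bound the correct output set is forced to be exactly $[m]$, so every term comes from a clean single-coordinate perturbation. To produce the bottom term $(\mu_m-\mu_i)^{-2}$ for $i>m$, I raise $\mu_i$ just above $\mu_m$ so the correct set becomes $\{1,\dots,m-1,i\}$ and take $\mathcal{E}=\{i\in\widehat{S}_{\tau_{PAC}}\}$, giving $\P_\rho(\mathcal{E})\leq\delta$ and $\P_{\rho'}(\mathcal{E})\geq1-\delta$. To produce the top term $(\mu_i-\mu_{m+1})^{-2}$ for $i\leq m$, I instead lower $\mu_i$ just below $\mu_{m+1}$ so $i$ drops out of the correct set, and use the same event with the roles of $\rho,\rho'$ reversed. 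In both cases the perturbation magnitude equals the target gap up to a vanishing slack, so $\tfrac12(\text{gap})^2\,\E_\rho[T_i(\tau_{PAC})]\geq\log(1/(2.4\delta))$ delivers exactly the stated per-arm contribution with constant $2$.

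For the $k=1$ bound the bottom arms are handled in the same spirit: for $i>m$ I raise $\mu_i$ to $\mu_1+\epsilon$, making $i$ the unique $\epsilon$-good arm, and take $\mathcal{E}=\{\widehat{S}_{\tau_{PAC}}=i\}$, so $\P_\rho(\mathcal{E})\leq\delta$ and $\P_{\rho'}(\mathcal{E})\geq1-\delta$; since $\mu_1-\mu_i>\epsilon$, the perturbation $\mu_1+\epsilon-\mu_i$ lies within a factor $2$ of $\mu_1-\mu_i$, recovering $(\mu_1-\mu_i)^{-2}$ up to a constant. The delicate part, which I expect to be the main obstacle, is the top-arm term $(m-1)\epsilon^{-2}$: raising a top arm $i\in[m]$ to $\mu_1+\epsilon$ again makes $i$ uniquely good under $\rho'$, but because outputting $i$ is \emph{already} correct under $\rho$, the event $\{\widehat{S}_{\tau_{PAC}}=i\}$ need not be rare under $\rho$, so a single change of measure yields no contradiction. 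I would resolve this with an averaging argument: because the algorithm outputs a single index, $\sum_{i\in[m]}\P_\rho(\widehat{S}_{\tau_{PAC}}=i)\leq1$, so at most one top arm has output probability exceeding $1/2$; for each of the remaining (at least) $m-1$ top arms I apply the lemma with $\P_\rho(\mathcal{E})\leq1/2$ and $\P_{\rho'}(\mathcal{E})\geq1-\delta$, where $\mathrm{kl}(1/2,1-\delta)\gtrsim\log(1/\delta)$, while $\mu_1+\epsilon-\mu_i\leq2\epsilon$ (as $\mu_i\geq\mu_1-\epsilon$) gives the $\epsilon^{-2}$ scaling. This pigeonhole step is exactly what explains the $(m-1)$ rather than $m$ in the statement, and reconciling the binary-KL constants with the claimed $\tfrac12\log(1/2.4\delta)$ prefactor is the one piece of routine bookkeeping I would defer.
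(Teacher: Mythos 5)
The paper offers no proof of this theorem: it is imported as a citation from \cite{kaufmann2016complexity}, so the only meaningful comparison is against that source, and your proposal is exactly its argument — the transportation/change-of-measure lemma $\sum_i \E_\rho[T_i]\,\mathrm{KL}(\rho_i,\rho_i')\geq \mathrm{kl}(\P_\rho(\mathcal{E}),\P_{\rho'}(\mathcal{E}))$ with single-coordinate Gaussian perturbations, and the pigeonhole step over the $m$ acceptable outputs to obtain the $(m-1)\epsilon^{-2}$ term for $k=1$. The one caveat sits in the bookkeeping you deferred: the pigeonhole route only yields $\mathrm{kl}(1/2,1-\delta)\approx\tfrac12\log(1/(4\delta))$ rather than $\mathrm{kl}(\delta,1-\delta)\geq\log(1/(2.4\delta))$, and combined with the worst-case perturbation of $2\epsilon$ this lands a constant factor of roughly $2$ below the stated prefactor $\tfrac12\log(1/(2.4\delta))\,(m-1)\epsilon^{-2}$; since the constants in this restatement are a paraphrase of the cited result and nothing downstream depends on them, this is cosmetic rather than a flaw in the method.
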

\noindent Note that by the definition of $m$ we have that $\mu_m - \mu_{m+1} > 0$.
We emphasize that the sample complexity of Theorem~\ref{thm:kaufmann_eps_good} for both $k=1$ or $k=m$ is necessarily $\Omega(n)$ regardless of the number of $\epsilon$-good arms $m$.
As discussed below, the $k=1$ lower bound is achievable up to $\log\log$ factors \cite{icml2013_karnin13}.
The special case of $k=m$ is notably the TOP-$k$ identification problem where lower bounds were recently sharpened with additional log factors independently by \cite{simchowitz2017simulator,chen2017nearly}.
In particular, if for some $\mu_0$ we have $\mu_i = \mu_0+\epsilon$ for $i \leq m$ and $\mu_i = \mu_0$ for $i < m$ then their lower bounds on the expected sample complexity scale like $k \epsilon^{-2} \log(n-k) + (n-k) \epsilon^{-2} \log(k)$, which is always larger than $n\epsilon^{-2}$ that is predicted by the above theorem.




\section{Proof of lower bounds}
We now briefly provide some intuition behind the proof. 
Suppose $m>1$ and $k=1$ and consider the easier problem where the permutation set averaged over is just the identity permutation $\pi_1 = (1,2,\dots,n)$ and the permutation $\pi_2$ that swaps $\{1,\dots,m\}$ and some fixed $\sigma \subset [n]\setminus [m]$ with $|\sigma| = m$.
That is, the algorithm knows the instance it is playing is either $\pi_1(\rho)=\rho$ or $\pi_2(\rho)$ where $\rho$ is known but the permutation $\pi_1$ or $\pi_2$ is not.
Information theoretic arguments say that at least $\tau \approx \min_{i \in \sigma} (\mu_1 - \mu_i)^{-2}$ observations from $[m] \cup \sigma$ are necessary in order to determine whether the underlying instance is $\pi_1(\rho)$ versus $\pi_2(\rho)$.
But if the algorithm cannot distinguish between $\pi_1$ and $\pi_2$ with fewer than $\tau$ samples, then we can also argue that if $\pi_1$ and $\pi_2$ are chosen with equal probability, then taking nearly $\tau$ samples from the arms in $\sigma$ with sub-optimal means is unavoidable in expectation. The choice of $\sigma$ was arbitrary and there are $\frac{n}{m}-1$ disjoint choices (e.g., $\{m+1,\dots,2m\},\{2m+1,\dots,3m\},\dots$) resulting in a lower bound of about $\frac{1}{m} \sum_{i=m+1}^n (\mu_1-\mu_i)^{-2}$.

The $k >1$ case is trickier because if we used just $\pi_1$ and $\pi_2$ as above, as soon as we found just one $\epsilon$-good arm (and thus being able to accurately discern whether the instance is $\pi_1(\rho)$ or $\pi_2(\rho)$) the algorithm would immediately know of $m-1$ other $\epsilon$-good arms. 
To overcome this, we choose a large enough set $\sigma \subset [m]$ such that $\sigma \cap \widehat{S}$ is non-empty with constant probability on the identity permutation. 
This way, if we swap this set $\sigma \subset [m]$ with some other set in $[n]\setminus [m]$ of size $|\sigma|$, then the algorithm would error with constant probability on this alternative permutation.
The next lemma guarantees the existence of such a set of size $\lceil m/k \rceil$ and the final result follows from the fact that there are about $\frac{n}{\lceil m /k \rceil}$ such disjoint choices in $[n] \setminus [m]$.

We introduce the following notation: for any $j \leq m$ let $\binom{[m]}{j}$ denote all subsets of $\{1,\dots,m\}$ of size $j$.
\begin{lemma}\label{lem:min_prob_in_set}
Fix $m \in \mathbb{N}$ and let $S$ be a random subset of size $k \leq m$ drawn from an arbitrary distribution over $\binom{[m]}{k}$.
For any $\ell \leq m-k$ there exists a subset $\sigma \subset [m]$ with $|\sigma| = \ell$ such that 
\begin{align*}\P( \sigma \cap S \neq \emptyset ) \geq 1 - \binom{m-k}{\ell}/\binom{m}{\ell} \geq 1 - e^{-\ell k/m}
\end{align*} 
If $\ell > m-k$ then $\P( \sigma \cap S \neq \emptyset )=1$.
\end{lemma}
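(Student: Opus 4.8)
The plan is to establish the existence of the desired $\sigma$ by the probabilistic method (averaging): rather than constructing a specific good set, I would draw $\sigma$ uniformly at random from $\binom{[m]}{\ell}$ and show that the expectation of $\P(\sigma \cap S \neq \emptyset)$ over this random $\sigma$ already meets the claimed bound. Since the maximum over $\sigma$ is at least the average, some deterministic $\sigma \subset [m]$ with $|\sigma|=\ell$ must achieve a probability at least as large, which is exactly what the lemma asserts.

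The key step, and the one that makes the argument go through despite the distribution of $S$ being \emph{arbitrary}, is to exchange the two sources of randomness. Writing $\P(\sigma \cap S \neq \emptyset) = 1 - \P(\sigma \cap S = \emptyset)$ and taking the expectation over the uniform $\sigma$, Fubini gives $\E_\sigma \P_S(\sigma \cap S = \emptyset) = \E_S \P_\sigma(\sigma \cap S = \emptyset)$. For any \emph{fixed} $S$ of size $k$, the event $\{\sigma \cap S = \emptyset\}$ is precisely $\{\sigma \subseteq [m]\setminus S\}$, and since $|[m]\setminus S| = m-k$, the uniform choice of $\sigma$ yields $\P_\sigma(\sigma \cap S = \emptyset) = \binom{m-k}{\ell}/\binom{m}{\ell}$. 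Crucially, this quantity depends only on $|S|=k$ and not on the identity of $S$, so the outer expectation over the arbitrary distribution of $S$ collapses to the same constant. Hence $\E_\sigma \P_S(\sigma \cap S \neq \emptyset) = 1 - \binom{m-k}{\ell}/\binom{m}{\ell}$, and there exists a $\sigma$ with $\P(\sigma \cap S \neq \emptyset) \geq 1 - \binom{m-k}{\ell}/\binom{m}{\ell}$.

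It then remains to prove the clean exponential bound $\binom{m-k}{\ell}/\binom{m}{\ell} \leq e^{-\ell k/m}$. I would write the ratio as the telescoping product $\prod_{i=0}^{\ell-1} \frac{m-k-i}{m-i}$, bound each factor by $1 - \frac{k}{m-i} \leq 1 - \frac{k}{m}$ (using $m-i \leq m$), obtaining $(1-k/m)^\ell$, and finish with the elementary inequality $1-x \leq e^{-x}$. For the boundary case $\ell > m-k$, the convention $\binom{m-k}{\ell}=0$ already forces the probability to be $1$; equivalently, by pigeonhole $|\sigma| + |S| = \ell + k > m$ forces any size-$\ell$ subset of $[m]$ to intersect every size-$k$ subset, so $\P(\sigma \cap S \neq \emptyset)=1$ for every such $\sigma$.

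I do not anticipate a genuine obstacle here: the argument is a one-line averaging once the randomness is set up correctly. The only conceptual subtlety worth flagging is recognizing that the arbitrariness of the distribution of $S$ is harmless precisely because we randomize over $\sigma$ (which we are free to choose), and the disjointness probability for a fixed $S$ is a symmetric function of $S$ depending only on its cardinality.
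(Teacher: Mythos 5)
Your proposal is correct and follows essentially the same route as the paper: an averaging/probabilistic-method argument over a uniformly random $\sigma$, reducing to the observation that for fixed $S$ of size $k$ the disjointness probability is exactly $\binom{m-k}{\ell}/\binom{m}{\ell}$, followed by a telescoping-product bound giving $e^{-\ell k/m}$. The only cosmetic difference is that you factor the ratio as a product of $\ell$ terms each bounded by $1-k/m$ while the paper uses $k$ terms each bounded by $1-\ell/m$; both yield the identical bound.
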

\begin{proof}
Because the max of a set of positive numbers is always at least the average, we have
\begin{align*}
    \max_{\sigma \in \binom{[m]}{\ell}} \P( \sigma \cap S \neq \emptyset ) &\geq \frac{1}{\binom{m}{\ell}} \sum_{\sigma \in \binom{[m]}{\ell}} \P( \sigma \cap S \neq \emptyset ) \\
    &= \frac{1}{\binom{m}{\ell}} \sum_{\sigma \in \binom{[m]}{\ell}} \sum_{s \in \binom{[m]}{k}} \P( S = s ) \1\{ \sigma \cap s \neq \emptyset ) \\
    &= \frac{1}{\binom{m}{\ell}} \sum_{s \in \binom{[m]}{k}} \P( S = s ) \sum_{\sigma \in \binom{[m]}{\ell}}  \1\{ \sigma \cap s \neq \emptyset ) \\
    &= \frac{1}{\binom{m}{\ell}} \sum_{s \in \binom{[m]}{k}} \P( S = s ) \left( \binom{m}{\ell} - \binom{m-k}{\ell} \right) \\
    &= 1 - \binom{m-k}{\ell}/\binom{m}{\ell}
\end{align*}
where the last line follows from the fact that $\sum_{s \in \binom{[m]}{k}} \P( S = s )=1$ because it is a probability distribution.
Now
\begin{align*}
    \binom{m-k}{\ell}/\binom{m}{\ell} &= \frac{(m-k)! \, (m-\ell)!}{(m-k-\ell)! \, m!} \\
    &= \prod_{i=0}^{k-1} \frac{m-i-\ell}{m-i} = \prod_{i=0}^{k-1} \left( 1 - \frac{\ell}{m-i} \right) \leq \prod_{i=0}^{k-1} \left( 1 - \frac{\ell}{m} \right) \leq e^{- \ell k/m}.
\end{align*}
\end{proof}

Fix any $\sigma \subset [m]$ with $|\sigma|=\lceil m/k \rceil$ that satisfies $\P_\rho\left( \widehat{S} \cap \sigma \neq \emptyset \right) \geq 1- e^{-1}$ (which must exist by the above lemma).
Now fix any $\sigma' \subset [n]\setminus [m]$ with $|\sigma'|=|\sigma|$ and define $\rho'$ as swapping the arms of $\sigma$ and $\sigma'$, maintaining their relative ordering of the indices within the sets. 
Note that by the correctness assumption at the relative stopping times of $\rho$ and $\rho'$ we have 
\begin{align*}
\P_\rho( \widehat{S} \subset [m]) \geq 1-\delta, \qquad \P_{\rho'}( \widehat{S} \cap \sigma \neq \emptyset ) \leq \delta, \qquad \P_{\rho}(\widehat{S} \cap \sigma \neq \emptyset) \geq 1- e^{-1}
\end{align*}
which implies 
\begin{align}\label{eqn:TV_bound}
\mathrm{TV}(\P_\rho, \P_{\rho'}) = 
\sup_{\mathcal{E}} |\P_\rho(\mathcal{E})-\P_{\rho'}(\mathcal{E})| \geq |\P_\rho(\widehat{S} \cap \sigma \neq \emptyset)- \P_{\rho'}(\widehat{S} \cap \sigma \neq \emptyset)| \geq 1-\delta - e^{-1}.
\end{align}

\begin{remark}
Given \eqref{eqn:TV_bound}, one is tempted to apply Pinsker's inequality to obtain the right-hand-side of Lemma~1 from \cite{kaufmann2016complexity} and then provide a lower bound on $\E_\rho[\sum_{i \in \sigma \cup \sigma'} T_i]$. The difficulty here is that once we cover $[n] \setminus [m]$ with alternative $\sigma'$ sets, they would all share the same $\sigma$ in this lower bound, which suggests putting all samples on $\sigma$ and a trivial lower bound.
Alternatively, one could consider using the technique of \cite{chen2017nearly} which compares a given instance to a degenerate instance where the means of $\sigma'$ would be copied to $\sigma$ and argue that the probability of error is at least $1/2$ since there truly is no difference. This strategy is successful if $k=1$ so that $|\sigma|=m$ but breaks down when $k >1$ because one cannot reason about what the algorithm would have to do if the means of $\sigma$ were changed like one could if $k=1$. 
Consequently, we employ the use of the Simulator argument from \cite{simchowitz2017simulator} that is much more powerful at the cost of the introduction of some machinery. 
\end{remark}

\subsubsection*{The Simulator (background)}
The simulator argument is a kind of thought experiment where the player is playing against a non-stationary distribution. 
In the real game when the player pulls arm $I_t = i$ arm at time $t$ she observes a sample from the $i$th distribution of instance $\rho$: $X_{i,t} \sim \rho_i$. However, when playing against the simulator she observes a sample form the $i$th distribution of an instance denoted $\mathrm{Sim}(\rho, \{I_1,\dots,I_{t}\} )$ that depends on all past requests: ${X}_{i,t} \sim \mathrm{Sim}(\rho, \{I_1,\dots,I_{t}\} )_{i}$ with probability law $Q$ given $\rho$, $\{I_s=i_s \}_{s=1}^t$. 
That is, instead of receiving rewards from a stationary distribution $\rho$ at each time $t$, the simulator is an instance that depends on all the indices of past pulls (but not their values). 
For any set $A \subset \R$ define
\begin{align*}
\P_{\mathrm{Sim}(\rho,(i_1,\dots,i_t))}\left( X_{i_t,t} \in A \right) \coloneqq Q\left( X_{i_t,t} \in A | \rho, \{I_s=i_s \}_{s=1}^t \right) .
\end{align*}
We allow the algorithm to have internal randomness with probability law $P$ so that for $B \subset [n]$ define
\begin{align*}
\P_{\mathrm{Alg}((i_1,x_1,\dots,i_{t-1},x_{t-1}))}\left(I_t \in B \right) \coloneqq P\left(I_t \in B | \{I_s=i_s, X_{I_s}=x_s \}_{s=1}^{t-1} \right)
\end{align*}
so that for any event $E \in \mathcal{F}_T$ we define
\begin{align*}
&\P_{\mathrm{Alg},\mathrm{Sim}(\rho)}( E )\\
 &\coloneqq \sum_{i_1,\dots,i_T} \int_{x_1,\dots,x_T} \1_E \prod_{t=1}^T Q\left( X_{I_t} = x_t | \rho, \{I_s=i_s \}_{s=1}^t \right) P\left(I_t = i_t | \{I_s=i_s, X_{I_s}=x_s \}_{s=1}^{t-1} \right) dx_1\dots dx_T \\
&= \sum_{i_1,\dots,i_T} \int_{x_1,\dots,x_T} \1_E \prod_{t=1}^T \P_{\mathrm{Sim}(\rho,(i_1,\dots,i_t))}\left( X_{I_t} = x_t  \right) \P_{\mathrm{Alg}((i_1,x_1,\dots,i_{t-1},x_{t-1}))}\left(I_t = i_t \right)dx_1\dots dx_T
\end{align*}
so that for any $T$ we have $KL\left( \P_{\mathrm{Alg},\mathrm{Sim}(\rho)}, \P_{\mathrm{Alg},\mathrm{Sim}(\rho')} \right)  =$
\begin{align*}
&  \sum_{i_1,\dots,i_T} \int_{x_1,\dots,x_T} \P_{\mathrm{Alg},\mathrm{Sim}(\rho)}( \{I_s=i_s, X_{I_s}=x_s \}_{s=1}^{T} ) \log\left( \frac{\P_{\mathrm{Alg},\mathrm{Sim}(\rho)}( \{I_s=i_s, X_{I_s}=x_s \}_{s=1}^{T} )}{\P_{\mathrm{Alg},\mathrm{Sim}(\rho')}( \{I_s=i_s, X_{I_s}=x_s \}_{s=1}^{T} )} \right) dx_1\dots dx_T \\
&=  \sum_{i_1,\dots,i_T} \int_{x_1,\dots,x_T} \P_{\mathrm{Alg},\mathrm{Sim}(\rho)}( \{I_s=i_s, X_{I_s}=x_s \}_{s=1}^{T} ) \log\left( \frac{\prod_{t=1}^T \P_{\mathrm{Sim}(\rho,(i_1,\dots,i_t))}\left( X_{I_t} = x_t \right) }{\prod_{t=1}^T \P_{\mathrm{Sim}(\rho',(i_1,\dots,i_t))}\left( X_{I_t} = x_t \right) } \right) dx_1\dots dx_T \\
&=  \sum_{t=1}^T\sum_{i_1,\dots,i_T} \int_{x_1,\dots,x_T}  \P_{\mathrm{Alg},\mathrm{Sim}(\rho)}( \{I_s=i_s, X_{I_s}=x_s \}_{s=1}^{T} ) \log\left( \frac{ \P_{\mathrm{Sim}(\rho,(i_1,\dots,i_t))}\left( X_{I_t} = x_t \right) }{ \P_{\mathrm{Sim}(\rho',(i_1,\dots,i_t))}\left( X_{I_t} = x_t  \right) } \right) dx_1\dots dx_T\\
&=  \sum_{t=1}^T\sum_{i_1,\dots,i_T}  \P_{\mathrm{Alg},\mathrm{Sim}(\rho)}( \{I_s=i_s \}_{s=1}^{T} ) \int_{x_t} \P_{\mathrm{Sim}(\rho,(i_1,\dots,i_t))}\left( X_{I_t} = x_t \right) \log\left( \frac{ \P_{\mathrm{Sim}(\rho,(i_1,\dots,i_t))}\left( X_{I_t} = x_t \right) }{ \P_{\mathrm{Sim}(\rho',(i_1,\dots,i_t))}\left( X_{I_t} = x_t  \right) } \right) dx_t \\
&=  \sum_{t=1}^T\sum_{i_1,\dots,i_T}   \P_{\mathrm{Alg},\mathrm{Sim}(\rho)}( \{I_s=i_s \}_{s=1}^{T} )KL\left(\P_{\mathrm{Sim}(\rho,(i_1,\dots,i_t))} , \P_{\mathrm{Sim}(\rho',(i_1,\dots,i_t))} \right) \\
&= \sum_{i_1,\dots,i_T}   \P_{\mathrm{Alg},\mathrm{Sim}(\rho)}( \{I_s=i_s \}_{s=1}^{T} )\sum_{t=1}^T KL\left(\P_{\mathrm{Sim}(\rho,(i_1,\dots,i_t))} , \P_{\mathrm{Sim}(\rho',(i_1,\dots,i_t))} \right) \\
&\leq \max_{i_1,\dots,i_T} \sum_{t=1}^T KL\left(\P_{\mathrm{Sim}(\rho,(i_1,\dots,i_t))} , \P_{\mathrm{Sim}(\rho',(i_1,\dots,i_t))} \right)
\end{align*}
The simulator will be defined so that the right hand side is always finite for any $T$.
When it is clear from context we will simply write $\P_{\rho}( E )$ or $\P_{\mathrm{Sim}(\rho)}( E )$ to represent $\P_{\mathrm{Alg},\rho}( E )$ or $\P_{\mathrm{Alg},\mathrm{Sim}(\rho)}( E )$, respectively.
Let $\Omega_t = \{I_1,\dots,I_t\}$ denote the history of all arm pulls requested by the player up to time $t$. Note that $\Omega_t$ is a multi-set so that $|\Omega_t|=t$.
\begin{definition}
We say an event $W$ is \emph{truthful} under a simulator $\mathrm{Sim}$ with respect to instance $\rho$ if for all events $E \in \mathcal{F}_T$
\begin{align*}
\P_\rho(E \cap W) = \P_{\mathrm{Sim}(\rho, \Omega_T)}(E \cap W).
\end{align*}
\end{definition}
\begin{lemma}[\cite{simchowitz2017simulator}]
Let $\rho^{(1)}$ and $\rho^{(2)}$ be two instances, $\mathrm{Sim}( \cdot, \cdot)$ be a simulator, and let $W_i$ be two truthful $\mathcal{F}_T$-measureable events under $\mathrm{Sim}( \rho^{(i)}, \Omega_T)$ for $i=1,2$ where $\Omega_T$ is the history of pulls up to a stopping time $T$. Then
\begin{align*}
\P_{\rho^{(1)}}(W_1^c) + \P_{\rho^{(2)}}(W_2^c) \geq \mathrm{TV}(\rho^{(1)}, \rho^{(2)}) - Q\left( KL\left( \P_{\mathrm{Alg},\mathrm{Sim}(\rho^{(1)})}, \P_{\mathrm{Alg},\mathrm{Sim}(\rho^{(2)})} \right) \right)
\end{align*} 
where $Q(\beta) = \min\{1-\tfrac{1}{2}e^{-\beta}, \sqrt{\beta/2}\}$. 
\end{lemma}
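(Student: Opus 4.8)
The plan is to route the comparison of the two \emph{real} instances $\rho^{(1)},\rho^{(2)}$ through their \emph{simulated} counterparts. The point of the simulator is that it is engineered so that $KL(\P_{\mathrm{Alg},\mathrm{Sim}(\rho^{(1)})},\P_{\mathrm{Alg},\mathrm{Sim}(\rho^{(2)})})$ is finite and controlled (precisely the quantity bounded by the telescoping computation displayed above this lemma), whereas $KL(\P_{\rho^{(1)}},\P_{\rho^{(2)}})$ may be arbitrarily large because the algorithm is free to concentrate its pulls on the swapped arms $\sigma\cup\sigma'$. Truthfulness of the events $W_i$ is exactly the bridge that lets us pass between the real and simulated worlds while paying only the error probabilities $\P_{\rho^{(i)}}(W_i^c)$.

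First I would establish that, for each $i\in\{1,2\}$ and every event $E\in\mathcal{F}_T$, the real and simulated laws are uniformly close in the sense
\[ |\P_{\rho^{(i)}}(E) - \P_{\mathrm{Sim}(\rho^{(i)})}(E)| \le \P_{\rho^{(i)}}(W_i^c). \]
This follows by decomposing each side along $W_i$ and $W_i^c$ and invoking the truthfulness identity $\P_{\rho^{(i)}}(E\cap W_i)=\P_{\mathrm{Sim}(\rho^{(i)})}(E\cap W_i)$: the $W_i$ parts cancel, leaving a difference of two nonnegative quantities, each of which is at most its respective $W_i^c$-mass, and those masses are equal by applying truthfulness once more with $E$ the whole space. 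Next, for a fixed $E$ the triangle inequality gives
\[ |\P_{\rho^{(1)}}(E) - \P_{\rho^{(2)}}(E)| \le |\P_{\mathrm{Sim}(\rho^{(1)})}(E) - \P_{\mathrm{Sim}(\rho^{(2)})}(E)| + \P_{\rho^{(1)}}(W_1^c) + \P_{\rho^{(2)}}(W_2^c), \]
and taking the supremum over $E\in\mathcal{F}_T$ turns both difference-of-probability terms into total variation distances, yielding
\[ \mathrm{TV}(\rho^{(1)},\rho^{(2)}) \le \mathrm{TV}(\mathrm{Sim}(\rho^{(1)}),\mathrm{Sim}(\rho^{(2)})) + \P_{\rho^{(1)}}(W_1^c) + \P_{\rho^{(2)}}(W_2^c). \]

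It then remains only to bound $\mathrm{TV}(\mathrm{Sim}(\rho^{(1)}),\mathrm{Sim}(\rho^{(2)}))$ by the simulated KL divergence, for which I would combine Pinsker's inequality $\mathrm{TV}\le\sqrt{KL/2}$ with the Bretagnolle--Huber inequality $\mathrm{TV}\le 1-\tfrac12 e^{-KL}$, so that the TV is at most $\min\{1-\tfrac12 e^{-KL},\sqrt{KL/2}\}=Q(KL)$ with $KL=KL(\P_{\mathrm{Alg},\mathrm{Sim}(\rho^{(1)})},\P_{\mathrm{Alg},\mathrm{Sim}(\rho^{(2)})})$; rearranging the previous display gives the claim. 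I expect the main obstacle to be the first step: one must be careful that truthfulness, which is stated only for intersected events of the form $E\cap W_i$, nevertheless controls the \emph{unconditioned} gap $|\P_{\rho^{(i)}}(E)-\P_{\mathrm{Sim}(\rho^{(i)})}(E)|$ uniformly in $E$. Everything after that is a triangle inequality and two off-the-shelf divergence inequalities. The genuinely hard work---designing $\mathrm{Sim}$ and verifying that the chosen $W_i$ are truthful while keeping the right-hand KL finite---lives outside this lemma, which takes truthfulness as a hypothesis.
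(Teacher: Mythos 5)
Your proof is correct. Note that the paper itself gives no proof of this lemma---it is imported verbatim from \cite{simchowitz2017simulator}---so there is no in-paper argument to compare against; your derivation (truthfulness transfers each real law to its simulated counterpart up to an error of $\P_{\rho^{(i)}}(W_i^c)$, a triangle inequality reduces $\mathrm{TV}(\rho^{(1)},\rho^{(2)})$ to the simulated total variation, and Pinsker plus Bretagnolle--Huber bound that by $Q(\mathrm{KL})$) is exactly the standard argument used in the cited source, and your handling of the key subtlety (applying truthfulness with $E$ equal to the whole space to equate the $W_i^c$ masses) is the right one.
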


\subsubsection*{Constructing the Simulator}

Recall the definitions of $\rho,\rho'$ and $\sigma,\sigma'$ from above.
For some $\tau \in \mathbb{N}$ and multiset $\Omega$ of requested arm pulls, define $W_\sigma(\Omega)=\{ \sum_{i \in \Omega} \1\{i \in \sigma\} \leq \tau \}$ and $W_{\sigma'}(\Omega)=\{ \sum_{i \in \Omega} \1\{i \in \sigma'\} \leq \tau \}$.
For these events, an instance $\nu \in \{\rho,\rho'\}$, and any multiset $\Omega_t$ denoting the indices the player has played up to the current time $t$, define a simulator 
\begin{align*}
\mathrm{Sim}(\nu,\Omega_t)_i = \begin{cases}
\nu_i & \text{ if } i \notin \sigma \cup \sigma' \\
\nu_i & \text{ if } i \in \sigma \cup \sigma', \ W_\sigma(\Omega_t) \cap W_{\sigma'}(\Omega_t)  \\ 
\rho_{i} & \text{ if } i \in \sigma, \ W_\sigma^c(\Omega_t) \cup W_{\sigma'}^c(\Omega_t) \\
\rho_{\sigma(\sigma'^{-1}(i))} & \text{ if } i \in \sigma', \ W_\sigma^c(\Omega_t) \cup W_{\sigma'}^c(\Omega_t)
\end{cases}
\end{align*}
where $\sigma(i)$ denotes the $i$th element of $\sigma$ and $\sigma^{-1}(i) \in \{1,\dots,|\sigma|\}$ so that $\sigma( \sigma'^{-1}(i) ) \in \sigma$ for any $i \in \sigma'$. 
Note that $\mathrm{Sim}(\nu,\Omega_t)_i$ and $\nu_i$ potentially differ only on those arms $i \in \sigma'$, and only if $W_\sigma^c(\Omega_t) \cup W_{\sigma'}^c(\Omega_t) \coloneqq \max\{ \sum_{j \in \Omega_t} \1\{j \in \sigma\}, \sum_{j \in \Omega_t} \1\{j \in \sigma'\} \} > \tau$. 
That is, if $\max\{ \sum_{j \in \Omega_t} \1\{j \in \sigma\}, \sum_{j \in \Omega_t} \1\{j \in \sigma'\} \} > \tau$ then  $\mathrm{Sim}(\rho,\Omega_t)_i = \mathrm{Sim}(\rho',\Omega_t)_i$ for all $i \in [n]$.
On the other hand, if $W_\sigma(\Omega_t) \cap W_{\sigma'}(\Omega_t) \coloneqq \max\{ \sum_{j \in \Omega_t} \1\{j \in \sigma\}, \sum_{j \in \Omega_t} \1\{j \in \sigma'\} \} \leq \tau$ then $\mathrm{Sim}(\nu,\Omega_t)_i = \nu$ for $\nu \in \{\rho,\rho'\}$.
Thus, $W_{\sigma^\prime}(\Omega_t)$ is truthful under $\mathrm{Sim}(\rho,\Omega_t)_i$ and $W_\sigma(\Omega_t)$ is truthful under $\mathrm{Sim}(\rho^\prime,\Omega_t)$.
Using these observations, we can easily upper bound the KL divergence: 
\begin{align*}
\max_{i_1,\dots,i_T \in [n]} \sum_{t=1}^T KL\left(\mathrm{Sim}( \rho,\{i_s\}_{s=1}^t), \mathrm{Sim}( \rho',\{i_s\}_{s=1}^t)\right) &\leq \max_{i \in \sigma} \tau KL(\rho_i, \rho_i') + \max_{j \in \sigma'} \tau KL(\rho_j, \rho_j') \\
&=  \max_{i = 1,\dots,\ell} \tau (\mu_{\sigma(i)} - \mu_{\sigma'(i)})^2.
\end{align*}

As shown in \cite[Lemma 1]{simchowitz2017simulator} averaging over all permutations is equivalent to constructing a symmeterized version of the algorithm such that given any bandit instance, the algorithm randomly permutes the arms internally and then after making its set selection, returns the set inverted by the randomly chosen permutation.
This modified algorithm is symmetric in the sense that 
\begin{align*}
    \P_\rho( (i_1,\dots,i_T, s)=(I_1,\dots,I_T, \widehat{S})) = \P_{\pi(\rho)}( (i_1,\dots,i_T,, s)=(\pi(I_1),\dots,\pi(I_T), \pi(\widehat{S}))) .
\end{align*}
In what follows, we assume the algorithm is symmetric which, in particular, implies
\begin{align*}
\P_\rho(W_{\sigma'}^c) + \P_{\rho'}(W_{\sigma}^c) &= 2 \P_\rho(W_{\sigma'}^c).
\end{align*}
Putting all the pieces together we have
\begin{align*}
\P_{\rho}\left( \sum_{i \in \sigma'} T_i > \tau \right) = \P_{\rho}(W_{\sigma'}^c) 
&= \frac{1}{2}\left( \P_\rho(W_{\sigma'}^c) + \P_{\rho'}(W_{\sigma}^c) \right) \\
&\geq \frac{1}{2}\left( 1-\delta-e^{-1} - \sqrt{\tau \max_{i = 1,\dots,\ell}  (\mu_{\sigma(i)} - \mu_{\sigma'(i)})^2/2} \right) \\
&> \frac{1}{2}(1/8 - \delta)
\end{align*}
if $\tau = \frac{1}{2\max_{i = 1,\dots,\ell} (\mu_{\sigma(i)} - \mu_{\sigma'(i)})^2}$.
By Markov's inequality, $\E_\rho[ \sum_{i \in \sigma'} T_i ] \geq \tau \P_{\rho}\left( \sum_{i \in \sigma'} T_i > \tau \right)$.
Noting that $\sigma' \subset [n]\setminus [m]$ was arbitrary, we apply the above calculation for all connected subsets of size $\lceil m/k \rceil$
\begin{align*}
\E_{\rho}\left[ \sum_{i = m+1}^n T_i \right] &\geq \frac{1}{4}({1}/{8} - \delta) \sum_{r=1}^{(n-m)k/m} (\mu_{1} - \mu_{m+ r m/k})^{-2} \\
&\geq \frac{1}{4}({1}/{8} - \delta) \frac{k}{m} \sum_{i=m + m/k + 1}^{n} (\mu_{1} - \mu_{i})^{-2} \\
&\geq \frac{1}{4}({1}/{8} - \delta) \left[-(\mu_{1} - \mu_{m+1})^{-2} +  \frac{k}{m} \sum_{i=m + 1}^{n} (\mu_{1} - \mu_{i})^{-2} \right] \\
&\geq \frac{1}{64} \left[-(\mu_{1} - \mu_{m+1})^{-2} +  \frac{k}{m} \sum_{i=m + 1}^{n} (\mu_{1} - \mu_{i})^{-2} \right]
\end{align*}
where the last line follows since $\delta \in (0,\frac{1}{16})$.

\section{Additional Algorithms}
\label{sec:additional_algorithms}

In this section, we briefly introduce two additional algorithms that are very similar to the Algorithm \ref{fdr_alg} presented earlier but have stronger guarantees for the task of identifying means above a threshold. A FWER-TPR (family-wise error rate-true positive rate) guarantee outputs a set $\Q_t$ such that $\P(\exists t : \Q_t \cap \H_0 \neq \emptyset) \leq c \delta$ and $\E[|\Q_t \cap \H_1|] \geq (1-\delta)k$ for large enough $t$. A FWER-FWPD (family-wise error rate-family-wise probability of detection) guarantee is stronger since it requires that the outputted set $\calR_t$ satisfies $\P(\exists t : \calR_t \cap \H_0 \neq \emptyset) \leq c \delta$ and $| \calR_t \cap \H_1| \geq k$ for large enough $t$. For more formal examples of these guarantees, see Theorems  \ref{main_thm_fwer} and \ref{fwer_fdwp_thm}.

The algorithm suggests different sets depending on the objective. If FWER-TPR is desired, the algorithm maintains a set  $\mathcal{Q}_{t}$ and adds arms whose lower confidence bounds are above the threshold $\mu_0$ (Line~\ref{lin:fwer_tpr_set}). If FWER-FWPD is the goal, then an additional arm $J_t$ is pulled each time based on an upper confidence bound criterion and arms are accepted into the set $\calR_{t+1}$ (Line~\ref{lin:fwer_fwpd_set}) if their lower confidence bound is above the threshold $\mu_0$. 

\begin{algorithm}[t]
\caption{Infinite UCB Algorithm: FWER-TPR and FWER-FWPD}
\label{fwer_alg}
\footnotesize
\begin{algorithmic}[1]
\STATE $\delta_r = \frac{\delta}{r^2}$, $\delta^\prime_r = \frac{\delta_r}{6.4 \log(36/\delta_r)}$ $R_0 = 0$, $\ell = 0$, $\mathcal{S}_{0} = \emptyset$, $\mathcal{Q}_{0} = \emptyset$
\FOR{ $t =1,2, \ldots$}
\IF{$t \geq 2^\ell \ell$}
\STATE Draw a set $A_{\ell+1}$ uniformly at random from $\binom{[n]}{M_{\ell+1}}$, where $M_{\ell} := n \wedge 2^{\ell}$
\STATE $\ell = \ell + 1$
\ENDIF
\STATE $R_t = 1+  R_{t-1} \cdot \1\{R_{t-1} < \ell \} $
\IF{there exists $i \in A_{R_t} \setminus \mathcal{S}_{t}$ such that $T_{i,R_t}(t) = 0$}
\STATE  Pull an arm $I_t$ belonging to $\{i \in A_{R_t} \setminus \mathcal{S}_{t} : T_{i,R_t}(t) = 0 \}$
\ELSIF{FWER-TPR}
\STATE Pull arm $I_t = \text{argmax}_{i \in A_{R_t} \setminus \mathcal{Q}_{t}} \widehat{\mu}_{i,R_t, T_{i,R_t}(t)} + U(T_{i,R_t}(t), \delta)$
\STATE $\mathcal{Q}_{t+1} = \mathcal{Q}_{t} \cup  \{i \in A_{R_t} : \widehat{\mu}_{i,R_t,T_{i,R_t}(t)} - U(T_{i,R_t}(t), \frac{\delta}{|A_{R_t}| R_t^2} ) \geq \mu_0 \}$\hfill \texttt{\small\color{blue} \% FWER Thm.\ref{main_thm_fwer}}\label{lin:fwer_tpr_set}
\ELSIF{FWER-FWPD}
\STATE $\xi_{t,R_t} = \max\{2 |\S_t \cap A_{R_t} |, \frac{5}{3(1-4\delta_{R_t})} \log(1/\delta_{R_t})R_t^2 \}$
\STATE Pull arm $I_t = \text{argmax}_{i \in A_{R_t} \setminus \mathcal{S}_{t}} \widehat{\mu}_{i,R_t, T_{i,R_t}(t)} + U(T_{i,R_t}(t), \frac{\delta}{\xi_{t,R_t}})$
\STATE $s(p) = \{i \in A_{R_t} : \widehat{\mu}_{i,R_t, T_{i,R_t}(t)} - U(T_{i,R_t}(t), \frac{p}{|A_{R_t}| } \delta^\prime_{R_t} \geq \mu_0 \}$
\STATE $\mathcal{S}_{t+1} = \mathcal{S}_{t} \cup s(\widehat{p})$ where $\widehat{p} = \text{max}\{p \in [|A_{R_t}|] : |s(p)| \geq p \}$ 
\IF{$\S_t \cap A_{R_t} \neq \emptyset$} 
\STATE $\nu_{t,R_t} = \max(|\S_t \cap A_{R_t}|,1)$
\STATE Pull arm $J_t = \text{argmax}_{i \in \S_t \cap A_{R_t}  \setminus \calR_t}   \widehat{\mu}_{i,R_t,T_{i,R_t}(t)} + U(T_{i,R_t}(t), \frac{\delta_{R_t}}{\nu_{t,R_t}})$
\STATE $\chi_{t,R_t} = |A_{R_t}| - (1-2\delta^\prime_{R_t}(1+4\delta^\prime_{R_t}))|\S_t \cap A_{R_t}| +\frac{4(1+4\delta^\prime_{R_t})}{3} \log(5 \log_2(| A_{R_t}| /\delta^\prime_{R_t})/\delta^\prime_{R_t})$
\STATE $\calR_{t+1} = \calR_t \cup \{ i \in \S_t \cap A_{R_t} :  \widehat{\mu}_{i, R_t, T_{i,R_t}(t)} - U(T_{i,R_t}(t),\frac{\delta}{\chi_{t,R_t}}) \geq \mu_0 \}$\hfill \texttt{\small\color{blue} \% FWER Thm.\ref{fwer_fdwp_thm}}\label{lin:fwer_fwpd_set}
\ENDIF
\ENDIF
\ENDFOR
\end{algorithmic}
\end{algorithm}

\section{Proofs of Upper Bounds}
\label{sec:upper_bound_proofs}

The proofs for the FDR-TPR  result (the proof of Theorem~\ref{main_thm_fdr_log} in Section \ref{fdr_tpr_proof}) should be read first. Then, one can read the proofs for any of the other results. We introduce some notation that we use throughout the proofs. We use $c$ to denote a positive constant whose value may change from line to line.
Define
\begin{align*}
\rho_{i,r} &= \sup \{ \rho \in (0,1] : \cap_{t=1}^\infty \{|\widehat{\mu}_{i,r,t} - \mu_i| \leq U(t,\rho)\} \}.
\end{align*}
We note that $\{\rho_{i,r}\}_{i \in [n], r \in \N}$ are independent and $\P(\rho_{i,r} \leq \delta) \leq \delta$ since by definition of $U(\cdot, \cdot)$ for any bracket $r \in \mathbb{N}$ and $\alpha \in (0,1)$, $\P( \cap_{t=1}^\infty \{|\widehat{\mu}_{i,r,t} - \mu_i| \leq U(t,\alpha)) \geq 1-\alpha$. We define
\begin{align*}
\I_r = \{i \in \H_{1} \cap A_r : \rho_{i,r} \leq \delta \}.
\end{align*}
to be those arms in bracket $r$ whose empirical means concentrate well in the sense that $\rho_{i,r} \leq \delta$. We also define $U^{-1}(\gamma,\delta) = \min(t : U(t,\delta) \leq \gamma)$. It can be shown for a sufficiently large constant $c$ that $U^{-1}(\gamma,\delta) \leq c \gamma^{-2} \log(\log(\gamma^{-2})/\delta)$. Recall that we make that simplifying assumption that $\mu_0, \mu_1, \ldots, \mu_n \in [0,1]$ and that we define $\log(x) \coloneqq \max(\ln(x),1)$. 

We note that although all of our upper bounds apply to the expectation of a stopping time, it is possible to obtain high-probability bounds by arguing that with high probability there is an appropriately sized bracket with enough ``good" arms, e.g., an $\epsilon$-good arm. Unfortunately, this argument would lead to an upper bound that scales as $\log^2(1/\delta)$ and would lose the dependence on the individual gaps of the arms with mean greater than $\mu_1 - \epsilon$ or $\mu_0$. 

\subsection{Proof of FDR-TPR}
\label{fdr_tpr_proof}
Recall the relevant notation that $\Delta_{i,j} \coloneqq \mu_i - \mu_j$ and $\Delta_{j,0} \coloneqq \mu_j -\mu_0$. We restate Theorem \ref{thm_fdr_paper} from the main body of the paper with the doubly logarithmic terms. We only consider the gap-independent upper bound here; in the following section, we will prove a stronger result, which implies the the gap-dependent upper bound.

\begin{theorem}
\label{main_thm_fdr_log}
Let $\delta \leq (0,1/40)$. Let $k \in [|\H_1|]$.
 For all $j \in [m]$, define
\begin{align*}
\fdrHt(\mu_0;j) & \coloneqq  \frac{n}{j}k \Delta_{j,0}^{-2} \log\left( \log(\tfrac{n}{j}k) \log( \Delta_{j,0}^{-2})/\delta\right) .
\end{align*}
Let $(\mc{F}_t)_{t \in \N}$ be the filtration generated by playing Algorithm \ref{fdr_alg} on problem $\rho$. Then, Algorithm \ref{fdr_alg} has the property that for all $t \in \N$, $\E[ \frac{|\S_t \cap \H_0|}{|\S_t| \wedge 1}] \leq 2 \delta$ and there exists a stopping time $\tau_k$ wrt $(\mc{F})_{t \in \N}$ such that
\begin{align}
\E[\tau_k] & \leq c \min_{k \leq j \leq m}  \fdrHt(\mu_0;j) \log(\fdrHt(\mu_0;j)) \label{main_thm_res_ind} 
\end{align}
where $c$ is a universal constant and for all $t \geq \tau_k$, $\E[|\S_t \cap \H_1|] \geq (1-\delta)k$. 
\end{theorem}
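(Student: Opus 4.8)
The plan is to prove the three conclusions separately, reusing the Benjamini--Hochberg (BH) structure of Algorithm~\ref{fdr_alg} for the FDR claim and an occupancy-plus-concentration argument for the sample-complexity and TPR claims. For the bound $\E[|\S_t \cap \H_0|/(|\S_t| \wedge 1)] \leq 2\delta$, I would first fix a single bracket $r$ and show that the BH rule, run with the rescaled confidence levels $\frac{p}{|A_r|}\delta'_r$, controls the expected fraction of its accepted arms lying in $\H_0$ at level $O(\delta_r)$. The key per-arm fact is that for $i \in \H_0$ the event $\{\widehat\mu_{i,r,t} - U(t,\alpha) \geq \mu_0\}$ has probability at most $\alpha$ for every fixed $\alpha$, and the BH index $\widehat p$ is defined precisely so that the count of discoveries balances the inflated confidence $\frac{\widehat p}{|A_r|}\delta'_r$. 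Summing the per-bracket contributions over $r$ using $\delta_r = \delta/r^2$ (so $\sum_r \delta_r \leq 2\delta$) and observing that $\S_t$ is the union of the per-bracket accept sets yields the global FDR statement simultaneously for all $t$. This part follows the template of \cite{jamieson2018bandit} and I expect it to be the routine piece.

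For the sample complexity I would fix $j \in [k,m]$ and single out a target bracket $\ell^*$ of size $M_{\ell^*} = \Theta(\tfrac{n}{j}k)$, the smallest power of two exceeding a constant multiple of $\tfrac{n}{j}k$. Because each bracket is a uniformly random subset, each of the top $j$ arms lies in $A_{\ell^*}$ with probability $M_{\ell^*}/n$, so $\E[|A_{\ell^*} \cap [j]|] = M_{\ell^*}\,j/n = \Theta(k)$, and with the bracket suitably oversized this expectation is at least $k$. On the concentration events, once an arm $i \in A_{\ell^*} \cap [j]$ has been pulled about $U^{-1}(\Delta_{j,0},\delta) \lesssim \Delta_{j,0}^{-2}\log(1/\delta)$ times its lower confidence bound exceeds $\mu_0$ and it enters $\S$; a UCB-sampling argument bounds the total pulls the bracket absorbs before $k$ of its arms clear the threshold by $O(M_{\ell^*}\,\Delta_{j,0}^{-2}\log(1/\delta)) = \fdrHt(\mu_0;j)$. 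The round-robin schedule allots bracket $\ell^*$ only a $1/\ell^*$ fraction of the rounds after it opens, with $\ell^* = \Theta(\log(\tfrac{n}{j}k))$, while opening it costs $2^{\ell^*}\ell^* = O(M_{\ell^*}\log M_{\ell^*})$ rounds (lower order); multiplying the in-bracket cost by $\ell^*$ yields the outer $\log(\fdrHt(\mu_0;j))$ factor, and taking the minimum over $j$ gives \eqref{main_thm_res_ind}. I would set $\tau_k = \min\{t : |\S_t| \geq k\}$, a genuine stopping time, and bound $\E[\tau_k]$ by the time at which bracket $\ell^*$ resolves $k$ good arms.

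The delicate point, and the main obstacle, is controlling $\E[\tau_k]$ without incurring a $\log^2(1/\delta)$ factor. A naive analysis would fix a high-probability event that bracket $\ell^*$ contains at least $k$ arms of $\H_1$ that \emph{all} concentrate at level $\delta$; guaranteeing $k$ simultaneous concentrations through a union bound inflates each arm's budget by a second $\log(1/\delta)$. Instead I would work entirely in expectation: using the independence of the variables $\rho_{i,r}$ and $\P(\rho_{i,r} \leq \delta) \leq \delta$, the expected number of well-concentrating top-$j$ arms in the bracket, $\E[|\I_{\ell^*}|]$, is already $\Theta(k)$, and by linearity the expected number of pulls placed on the bracket before $k$ arms clear the threshold is $\sum_i \E[\text{pulls of }i]$, each summand contributing a single $\Delta_{j,0}^{-2}\log(1/\delta)$. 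Because the criterion $|\S_t| \geq k$ is met as soon as \emph{any} $k$ arms are accepted rather than a prescribed $k$-subset, I never demand a fixed set of arms to concentrate jointly, which is exactly what removes the extra logarithm. Since brackets $\ell > \ell^*$ contain strictly more good arms, I would finally argue that the occasional draws in which a bracket falls short of $k$ good arms contribute a dominated term, so $\E[\tau_k]$ stays bounded by the minimizing $\fdrHt$.

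The TPR statement I would obtain directly by tracking accepted true positives rather than through the FDR ratio. The well-concentrating arms of $\H_1$ in bracket $\ell^*$ that have been resolved by time $\tau_k$ enter $\S$ and, since $\S_t$ is nondecreasing in $t$, remain in $\S_t$ for all $t \geq \tau_k$; by the same linearity-of-expectation count as above their expected number is at least $(1-\delta)k$, giving $\E[|\S_t \cap \H_1|] \geq (1-\delta)k$ for every $t \geq \tau_k$. Monotonicity of $\S_t$ ensures persistence, completing the argument.
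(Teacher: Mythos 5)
Your architecture is essentially the paper's: per-bracket Benjamini--Hochberg summed over $\delta_r=\delta/r^2$ for the FDR claim, a target bracket of size $\Theta(\tfrac{n}{j}k)$ containing $\Theta(k)$ of the top $j$ arms, a UCB count of the pulls that bracket absorbs, a multiplicative round-robin overhead, and a geometric-series argument over later brackets when the target bracket falls short. Two concrete points in your plan would fail as stated, and they are exactly the points where the paper deploys specific devices.

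First, the mechanism for avoiding $\log^2(1/\delta)$. You propose pure linearity of expectation via $\E[\log(1/\rho_{i,r})]=O(1)$, but the quantity to be controlled is not $\E[T]$: the round-robin overhead multiplies the in-bracket count $T$ by the number of open brackets, which is $\Theta(\log T)$ rather than $\Theta(\ell^*)$ (brackets keep opening while bracket $\ell^*$ is still working, so by the time it has absorbed $T$ pulls roughly $\log_2 T$ brackets are open), so you must bound $\E[T\log T]$; moreover the in-bracket cost has to be decoupled from the indicator that all earlier brackets failed. Linearity of expectation does not commute with $T\mapsto T\log T$, and your plan gives no handle on this. It is salvageable with a second-moment computation exploiting independence of the $\rho_{i,r}$ (each $\log(1/\rho_{i,r})$ is stochastically dominated by an exponential), but the paper instead conditions on the aggregate concentration events $E_{0,r},E_{1,r}$ -- each holding with probability at least $1-\delta$ by Lemma 8 of \cite{jamieson2018bandit} -- on which the bound on $T$ becomes \emph{deterministic}; this is what makes both the $T\log T$ step and the factorization $\E[\tau^{(r)}\1\{E_r\}]\,\P(\cap_{l<r}E_l^c)$ immediate while keeping a single $\log(1/\delta)$. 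Second, the stopping time. With $\tau_k=\min\{t:|\S_t|\geq k\}$, the event $|\S_{\tau_k}|\geq k$ says nothing about how many of those arms are well-concentrating members of $\H_1\cap A_{\ell^*}$ -- they may come from other brackets or from $\H_0$ -- so your claim that "their expected number is at least $(1-\delta)k$" does not follow; the most you can extract is $\E[|\S_{\tau_k}\cap\H_1|]\geq(1-2\delta)k$ via the FDR bound, and even that requires FDR control at a stopping time rather than at fixed $t$. The paper's $\tau_k$ is instead the first time that \emph{all} of $\I_r\cap A_r\cap\H_1$ (the well-concentrating good arms of the target bracket) lie in $\S_t$, on the event that the bracket contains at least $k$ arms of $[j_0]$; then $\E[|\S_t\cap\H_1|]\geq\E[|\I_r\cap A_r\cap\H_1|]\geq(1-\delta)|A_r\cap\H_1|\geq(1-\delta)k$ is immediate, and this $\tau_k$ is still dominated by the bracket-$r$ pull count.
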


We briefly sketch the proof. Let $j_0 \in \{k, \ldots, m \}$ minimize the upper bound \eqref{main_thm_res_ind}. Then, there exists a bracket $r_0$ with size  $\Theta(\frac{n}{j_0} k)$ such that with constant probability $A_{r_0}$ has at least $k$ arms in $[j_0]$ and the empirical means concentrate well enough (defined formally in Lemma \ref{bracket_lemma} as the event $E_{r_0} \coloneqq E_{r_0} \cap E_{0,r_0} \cap E_{1,r_0}$). The argument controls $\E[\tau_k]$ by partitioning the sample space according to which bracket $r_0+s$ is the first such that the good event $E_{r_0+s}$ occurs, i.e., according to $\{ E_{r_0}, E_{r_0}^c \cap E_{r_0+1}, E_{r_0}^c \cap E_{r_0+1}^c \cap  E_{r_0+2}, ... \}$. Lemma \ref{bracket_lemma} shows that $\E[ \1\{ E_{r_0} \}  \tau_{k}]$ has the same upper bound as \eqref{main_thm_res_ind} and that $\E[ \1\{ E_{r_0+s} \}  \tau_{k}]$ has an upper bound that is larger than line \eqref{main_thm_res_ind} by a factor exponential in $s$. On the other hand, because the brackets are independent and growing exponentially in size, the probability of $ E_{r_0+s} \cap(\cap_{r=0}^{s-1}E_{r_0+r}^c) $ decreases exponentially in $s$, enabling control of the exponential increase in $\E[ \1\{ E_{r_0+s} \}  \tau_{r_0+s,k}]$ and, by extension, $\E[\tau_k]$.

Lemma \ref{fdr_lemma} bounds the false discovery rate of Algorithm \ref{fdr_alg}.

\begin{lemma}
\label{fdr_lemma}
For all $t \in \N$, $\E[ \frac{|\S_t \cap \H_0|}{|\S_t| \wedge 1}] \leq 2 \delta$.
\end{lemma}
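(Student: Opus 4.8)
The plan is to reduce the anytime, cumulative false discovery rate of Algorithm~\ref{fdr_alg} to a family of independent per-bracket Benjamini--Hochberg problems and then pay for the temporal accumulation with a single maximal inequality. First I would fix the time $t$ of the statement and a bracket $r$, and for each null arm $i \in \H_0 \cap A_r$ define its \emph{first-crossing position} $\kappa_{i,r} = \min\{ p \in [|A_r|] : \exists\, s \leq t,\ \widehat{\mu}_{i,r,T_{i,r}(s)} - U(T_{i,r}(s), \tfrac{p}{|A_r|}\delta'_r) \geq \mu_0 \}$, the smallest Benjamini--Hochberg position at which arm $i$ is ever declared to exceed $\mu_0$ in bracket $r$. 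Since $\mu_i \leq \mu_0$ for $i \in \H_0$ and $T_{i,r}(s) \in \N$, the event $\{\kappa_{i,r} \leq p\}$ is contained in $\{\exists\, u \in \N : \widehat{\mu}_{i,r,u} - \mu_i \geq U(u, \tfrac{p}{|A_r|}\delta'_r)\}$, so the \emph{anytime} guarantee of $U$ yields the super-uniformity bound $\P(\kappa_{i,r} \leq p) \leq \tfrac{p}{|A_r|}\delta'_r$ for every $p$. Because observations are not shared across brackets, the variables $\{\kappa_{i,r}\}_{i}$ are mutually independent within a bracket and across brackets; this is the only probabilistic input.

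The key structural observation is that within a bracket the accepted set only grows, so the cutoff $\widehat{p}$ enters only through its running maximum $\widehat{P}_r := \max_{s \leq t} \widehat{p}_r(s)$. Writing $N_r(p) = |\{ i \in \H_0 \cap A_r : \kappa_{i,r} \leq p \}|$, letting $V_r$ be the number of null arms ever added to $\S$ via bracket $r$ and $R_r = |\S_t \cap A_r|$, I would argue two inequalities. Any null added via bracket $r$ was accepted at some position $\leq \widehat{P}_r$, so monotonicity in $p$ gives $V_r \leq N_r(\widehat{P}_r)$; and at the step attaining the cutoff $\widehat{P}_r$ the algorithm adjoins a set $s(\widehat{P}_r) \subseteq A_r$ of size $\geq \widehat{P}_r$, whence $R_r \geq \widehat{P}_r$. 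Combining, $\tfrac{V_r}{R_r \vee 1} \leq \tfrac{N_r(\widehat{P}_r)}{\widehat{P}_r \vee 1} \leq \sup_{1 \leq p \leq |A_r|} \tfrac{N_r(p)}{p}$. Partitioning the total false discoveries by bracket, $|\S_t \cap \H_0| \leq \sum_r V_r$, and using $|\S_t| \geq R_r$ for each $r$, I obtain $\E\!\left[ \tfrac{|\S_t \cap \H_0|}{|\S_t| \vee 1} \right] \leq \sum_r \E\!\left[ \sup_{1 \leq p \leq |A_r|} \tfrac{N_r(p)}{p} \right]$.

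It then remains to prove the per-bracket bound $\E[\sup_{1 \leq p \leq |A_r|} N_r(p)/p] \leq \delta_r$ and to sum $\sum_r \delta_r = \delta \sum_{r} r^{-2} = \tfrac{\pi^2}{6}\delta < 2\delta$. This last estimate is where the correction $\delta'_r = \delta_r/(6.4\log(36/\delta_r))$ is spent. I would control the supremum by dyadic peeling: on $p \in [2^{j}, 2^{j+1})$ one has $N_r(p)/p \leq 2\,N_r(2^{j+1})/2^{j+1}$, and $N_r(2^{j+1})$ is a sum of independent indicators with mean $\leq 2^{j+1}\delta'_r$, so a Chernoff tail bounds each scale; summing the tails over the $O(\log|A_r|)$ scales and integrating gives $\E[\sup_p N_r(p)/p] \lesssim \delta'_r \log(1/\delta'_r)$, and the constants $6.4$ and $\log(36/\cdot)$ are calibrated precisely so that this is at most $\delta_r$.

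The hard part is the accumulation, not any single Benjamini--Hochberg application. A textbook FDR bound controls one static rejection set, but here $\S_t$ is a union over all past times and brackets, so distinct nulls may cross the threshold at distinct times and never appear together in a single snapshot; a per-time guarantee therefore does not bound the cumulative quantity. Two devices overcome this: the anytime confidence width $U(\cdot,\cdot)$, which makes ``ever crosses at level $\alpha$'' a probability-$\alpha$ event and thereby legitimizes $\kappa_{i,r}$ as an anytime analogue of a $p$-value; and the maximal inequality over the running cutoff $\widehat{P}_r$, whose unavoidable extra logarithmic factor is exactly what the definition of $\delta'_r$ absorbs. Finally, since the entire reduction never fixes the value of $t$ beyond taking a maximum up to $t$, the bound holds simultaneously for all $t \in \N$, as required.
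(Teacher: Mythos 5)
Your proof is correct, and its outer skeleton is exactly the paper's: decompose the false discoveries bracket-by-bracket, replace the common denominator $|\S_t|$ by the per-bracket count $|\S_t \cap A_r|$ (valid since $\S_t \supseteq \S_t \cap A_r$), bound each bracket's contribution by $\delta_r = \delta/r^2$, and sum to $\delta\pi^2/6 \leq 2\delta$. Where you diverge is the per-bracket step: the paper simply invokes Lemma~1 of \cite{jamieson2018bandit} as a black box, whereas you reconstruct its proof from scratch --- defining the first-crossing positions $\kappa_{i,r}$, using the anytime bound $U(\cdot,\cdot)$ to get super-uniformity $\P(\kappa_{i,r} \leq p) \leq \tfrac{p}{|A_r|}\delta'_r$, reducing the cumulative-over-time quantity to $\sup_{p} N_r(p)/p$ via the running maximum of the BH cutoff, and controlling that supremum by dyadic peeling, with the $6.4\log(36/\delta_r)$ correction absorbing the resulting $\delta'_r\log(1/\delta'_r)$ factor. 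This reconstruction is faithful to what the cited lemma actually proves, and your identification of the genuinely delicate points (the temporal accumulation of $\S_t$, and the role of $\delta'_r$) is accurate; the one detail worth making explicit is that independence should be asserted for the surrogate events $\{\exists u: \widehat{\mu}_{i,r,u} - \mu_i \geq U(u, \tfrac{p}{|A_r|}\delta'_r)\}$, which depend only on arm $i$'s iid observation stream in bracket $r$, rather than for $\kappa_{i,r}$ itself, whose definition involves the adaptively chosen pull counts $T_{i,r}(s)$. In short: same decomposition, but you prove in two paragraphs what the paper imports in one citation.
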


\begin{proof}
\begin{align*}
\E[ \frac{|\S_t \cap \H_0|}{|\S_t| \wedge 1}] & \leq \E[ \frac{\sum_{l=1}^{\infty} |\S_t \cap A_l \cap \H_0|}{  |\S_t| \wedge 1}] \\
& \leq  \sum_{l=1}^{\infty}  \E[ \frac{ |\S_t \cap A_l \cap \H_0|}{   |\S_t \cap A_l|\wedge 1}] \\
& \leq \delta \sum_{l=1}^\infty \frac{1}{l^2} \\
& = \delta \frac{\pi^2}{6}
\end{align*}
where we used Lemma 1 of \cite{jamieson2018bandit}.

\end{proof}

Lemma \ref{bracket_lemma}, below, is the key result for establishing Theorem \ref{main_thm_fdr_log}. For $k \in [|\H_1|]$ and $j_0 \in \{k,\ldots, |\H_1|\}$, it bounds the expected number of iterations that it takes a bracket $r$ (of size at least $2^r \geq k$) to add $k$ arms to the set $\S_t$ when the events $E_r \cap E_{0,r} \cap E_{1,r}$ occur where
\begin{align*}
E_r & = \{|[j_0]  \cap A_r| \geq k\}, \\
E_{0,r} & = \{ \sum_{i \in \H_0  \cap A_r} \Delta_{j_0,i}^{-2} \log(\frac{1}{\rho_{i,r} }) \leq 5 \sum_{i \in \H_0  \cap A_r} \Delta_{j_0,i}^{-2} \log(\frac{1}{\delta }) \}, \\
E_{1,r} & = \{ \sum_{i \in [j_0]  \cap A_r} \Delta_{i \vee j_0,0}^{-2} \log(\frac{1}{\rho_{i,r} }) \leq 5 \sum_{i \in [j_0]  \cap A_r} \Delta_{i \vee j_0,0}^{-2} \log(\frac{1}{\delta }) \}.
\end{align*}
Event $E_r $ says that there are at least $k$ arms in $A_r$ with $\mu_i \geq \mu_{j_0}$. The event $E_{0,r}$ says that the empirical means of the arms in $\H_0  \cap A_r$ concentrate well on the whole; event $E_{1,r}$ makes the analogous claim about $[j_0]  \cap A_r$. We remark that the the events $E_{0,r}$ and $E_{1,r}$ allow us to avoid using a union bound.

\begin{lemma} 
\label{bracket_lemma}
Fix $\delta \in (0,1/40)$, $k \in [|\H_1|]$, $j_0 \in \{k, \ldots, |\H_1|\}$, and $r \in \N$ such that $2^r \geq k$. 
Let $(\mc{F}_t)_{t \in \N}$ be the filtration generated by playing Algorithm \ref{fdr_alg} on problem $\rho$. Then, there exists a stopping time $\tau_k$ wrt $(\mc{F}_t)_{t \in \N}$ such that for all $t \geq \tau_k$, $\E[|\S_t \cap \H_1|] \geq (1-\delta)k$, and
\begin{align}
\E[ \1\{ E_r \cap E_{0,r} \cap E_{1,r} \}  \tau_{k}] & \leq c [2^{r-1}(r-1)+ |A_r|  \Delta_{j_0,0}^{-2} \log(  r \frac{\log(\Delta_{j_0,0}^{-2})}{\delta})  \log(|A_r|  \Delta_{j_0,0}^{-2} \log(  r \frac{\log(\Delta_{j_0,0}^{-2})}{\delta})) ] \label{bracket_lemma_ind}
\end{align}
where $c$ is a universal constant.
\end{lemma}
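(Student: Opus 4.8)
The plan is to reduce the claim to a deterministic sample-count bound that holds on the good event $E_r \cap E_{0,r} \cap E_{1,r}$, and then transfer that bound from bracket $r$'s internal clock to the global clock. Throughout I would condition on the probability-one events $B(\rho_{i,r}) = \cap_t\{|\widehat\mu_{i,r,t} - \mu_i|\leq U(t,\rho_{i,r})\}$ furnished by the definition of $\rho_{i,r}$. First I would analyze bracket $r$ in isolation as a UCB routine over the arms $A_r$. The key structural fact about the set-building rule (Line~\ref{lin:fdr_tpr_set}) is that $s(p)$ is nondecreasing in $p$, since raising $p$ lowers the bonus $U(T_{i,r},\tfrac{p}{|A_r|}\delta'_r)$ and hence raises each lower confidence bound; consequently, once at least $k$ arms satisfy $\widehat\mu_{i,r,T_{i,r}} - U(T_{i,r},\tfrac{k}{|A_r|}\delta'_r)\geq\mu_0$ we get $|s(k)|\geq k$, forcing $\widehat p\geq k$ and $s(k)\subseteq s(\widehat p)\subseteq\S_{t+1}$. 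On $E_r$ there are at least $k$ arms of $[j_0]\cap A_r$, all with $\mu_i\geq\mu_{j_0}>\mu_0$, so it suffices to show each of these crosses within a controlled number of its own pulls.

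Second, I would carry out the standard UCB bookkeeping to bound the total number of internal pulls $N_r$ bracket $r$ makes before all of $[j_0]\cap A_r$ crosses. For a good arm $i\in[j_0]\cap A_r$, combining $\widehat\mu_{i,r,T}\geq\mu_i - U(T,\rho_{i,r})$ with $\mu_i-\mu_0=\Delta_{i,0}\geq\Delta_{j_0,0}$ shows it crosses once $T_{i,r}\gtrsim\Delta_{i\vee j_0,0}^{-2}\max\{\log(\tfrac1{\rho_{i,r}}),\log(\tfrac{|A_r|}{k\delta'_r})\}$; the argmax sampling rule (comparing upper confidence bounds at level $\delta$ against the still-large UCB of an as-yet-uncrossed good witness, which exceeds $\mu_{j_0}$ on concentration) caps the pulls of each $i\in\H_0\cap A_r$ at $\lesssim\Delta_{j_0,i}^{-2}\max\{\log(\tfrac1{\rho_{i,r}}),\log(\tfrac1\delta)\}$ before every good arm has crossed. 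Summing and applying $U^{-1}(\gamma,\delta)\leq c\gamma^{-2}\log(\log(\gamma^{-2})/\delta)$ yields
\[
N_r \lesssim \sum_{i\in[j_0]\cap A_r}\Delta_{i\vee j_0,0}^{-2}\log(\tfrac 1{\rho_{i,r}}) + \sum_{i\in\H_0\cap A_r}\Delta_{j_0,i}^{-2}\log(\tfrac1{\rho_{i,r}}) + |A_r|\Delta_{j_0,0}^{-2}\log(\tfrac{r\log(\Delta_{j_0,0}^{-2})}{\delta}).
\]
This is exactly where $E_{0,r}$ and $E_{1,r}$ enter: they replace each random $\log(\tfrac1{\rho_{i,r}})$-weighted sum by $5\log(\tfrac1\delta)$ times the same weights, collapsing the first two terms to $\lesssim|A_r|\Delta_{j_0,0}^{-2}\log(\tfrac1\delta)$ after using $\Delta_{i\vee j_0,0}=\Delta_{j_0,0}$ on $[j_0]$ and $\Delta_{j_0,i}^{-2}\leq\Delta_{j_0,0}^{-2}$ on $\H_0$. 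Crucially, this sidesteps a union bound over the $|A_r|$ arms and keeps the confidence dependence at $\log(\tfrac1\delta)$ rather than $\log^2(\tfrac1\delta)$.

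Third, I would convert the internal count $N_r$ into the global stopping time $\tau_k$. Bracket $r$ opens only at global round $(r-1)2^{r-1}$, which supplies the additive $2^{r-1}(r-1)$; thereafter the round-robin rule $R_t=1+R_{t-1}\1\{R_{t-1}<\ell\}$ grants it one pull every $\ell(t)\leq\log_2(t)+1$ rounds. Hence the global time at which bracket $r$ reaches $N_r$ internal pulls, and therefore $\tau_k$ on the good event, is at most $2^{r-1}(r-1) + N_r\,\ell(\tau_k)$; solving this self-consistently against $\ell(\tau_k)\lesssim\log(\tau_k)$ produces the outer $\log(|A_r|\Delta_{j_0,0}^{-2}\log(\cdots))$ factor appearing in \eqref{bracket_lemma_ind}. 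Since every ingredient is deterministic on $E_r\cap E_{0,r}\cap E_{1,r}$ (using $|A_r|=M_r$ and $|[j_0]\cap A_r|\leq|A_r|$), the pointwise inequality $\1\{E_r\cap E_{0,r}\cap E_{1,r}\}\tau_k\leq(\text{RHS of \eqref{bracket_lemma_ind}})$ holds, and taking expectations gives the claim. The TPR statement follows because $\S_t$ is monotone, so the $k$ crossed good arms persist; the factor $(1-\delta)$ absorbs the expected fraction of good arms landing in the exceptional set $\I_r=\{i\in\H_1\cap A_r:\rho_{i,r}\leq\delta\}$, whose size is controlled by $\P(\rho_{i,r}\leq\delta)\leq\delta$.

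I expect the main obstacle to be the second step: coupling the UCB \emph{sampling} rule (which distributes pulls and uses confidence $\delta$) with the Benjamini--Hochberg \emph{acceptance} rule (which governs entry into $\S_t$ through the data-dependent levels $\tfrac{p}{|A_r|}\delta'_r$), while simultaneously replacing per-arm high-probability concentration by the aggregate events $E_{0,r},E_{1,r}$ to avoid the spurious $\log|A_r|$ and $\log^2(\tfrac1\delta)$ penalties. Accounting cleanly for the accepted, removed, and still-sampled arms against a single pull budget, and verifying that $\widehat p\geq k$ is actually \emph{forced} rather than merely permitted, is the delicate part; the global-clock conversion of the third step is routine bookkeeping by comparison.
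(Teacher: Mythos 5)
Your architecture matches the paper's: the same stopping time built from the well-concentrating good arms of bracket $r$, the same use of the aggregate events $E_{0,r},E_{1,r}$ to avoid a per-arm union bound and a $\log^2(1/\delta)$, and the same $2^{r-1}(r-1)+T\log T$ conversion from bracket-internal pulls to global rounds. But there are two concrete gaps in the central counting step. First, your acceptance bookkeeping fixes the single level $\tfrac{k}{|A_r|}\delta'_r$ for every arm, so each arm that must enter $s(\cdot)$ costs about $\Delta_{j_0,0}^{-2}\log\bigl(|A_r|\log(\Delta_{j_0,0}^{-2})/(k\delta'_r)\bigr)$ pulls; summed over the up-to-$|A_r|$ arms of $\H_1\cap A_r$ that the sampler keeps revisiting, this leaves an extra additive $\log(|A_r|/k)$, which can be of order $r$, inside the logarithm, whereas \eqref{bracket_lemma_ind} carries only $\log(r\log(\Delta_{j_0,0}^{-2})/\delta)$. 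Your displayed bound for $N_r$ silently drops this term, so it does not follow from your own per-arm requirement. The paper removes it by charging the $j$th arm to be accepted only the level $\tfrac{j}{|A_r|}\delta'_r$ --- formalized as a maximum over bijections $\sigma:\H_1\to[|\H_1|]$ --- and then telescoping via $\sum_{i\leq p}\log(|A_r|/i)\leq|A_r|$. Without this step the gap-independent rate, and hence the $\log\log(\tfrac{n}{j}k)$ rather than $\log(\tfrac{n}{j}k)$ dependence in Theorem~\ref{main_thm_fdr_log}, is lost.

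Second, your two explicit sums range only over $[j_0]\cap A_r$ and $\H_0\cap A_r$, leaving the arms of $\H_1\cap[j_0]^c\cap A_r$ unaccounted for. These arms are covered by neither $E_{0,r}$ nor $E_{1,r}$, so you cannot appeal to concentration for them, and they can constitute a constant fraction of $A_r$. The paper's device is to split each such round on whether $\widehat{\mu}_{I_t,T_{I_t}(t)}$ lies above or below $\mu_0+\Delta_{j_0,0}/2$: below, the arm's upper confidence bound at level $\delta$ deterministically falls under $\mu_{j_0}$ (the value maintained by any still-unaccepted concentrating witness in $[j_0]$) after $U^{-1}(\Delta_{j_0,0}/2,\delta)$ pulls; above, the arm deterministically enters $s(p)$ after $U^{-1}(\Delta_{j_0,0}/2,\tfrac{p}{|A_r|}\delta'_r)$ pulls. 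Neither branch requires these arms to concentrate. You correctly flagged this coupling of the sampling rule with the acceptance rule as the delicate point, but the mechanism that actually resolves it is absent from the proposal.
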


\begin{proof}
\textbf{Step 1: Define stopping time.} Define
\begin{align*}
\tau_k & = \min(t \in \N \cup \{\infty\} :|A_r \cap [j_0]| \geq k \text{ and } \I_r \cap A_r \cap \H_1 \subset \S_t). 
\end{align*}

Observe that for all $t \geq \tau_k$, $\E[|\S_t \cap \H_1|] \geq (1-\delta)k$ since for $t \geq \tau_k$ 
\begin{align*}
\E[|\S_t \cap \H_1|] \geq \E[|\I_r \cap A_r \cap \H_1|] \geq (1-\delta) |A_r \cap \H_1| \geq (1-\delta)k.
\end{align*}

\textbf{Step 2: Relate to bracket $r$.} 

In the interest of brevity, define $E \coloneqq  E_r \cap E_{0,r} \cap E_{1,r}$ and since we will only focus on bracket $r$, write $\widehat{\mu}_{i,t}$, $T_i(t)$, $\I$, and $\rho_i$  instead of $\widehat{\mu}_{i,r,t}$, $T_{i,r}(t)$, $\I_r$, and $\rho_{i,r}$. We will bound the number of rounds until $\I \cap A_r \cap \H_1 \subset \S_t$. Define
\begin{align*}
T & = |\{ t \in \N : \I \cap A_r \cap [j_0] \not \subset \S_t \text{ and } R_t = r\}|,
\end{align*}
i.e., the number of rounds that the algorithm works on the $r$th bracket and $\I \cap A_r \cap \H_1 \not \subset \S_t$. 

Next, we bound the number of brackets $r+s$ that are opened before $ \I \cap A_r \cap [j_0] \subset \S_t$. The $r+1$ bracket is opened after bracket $r$ is sampled $2^r$ times and similarly the $r+s$th bracket is opened after bracket $r$ is sampled 
$\sum_{i=0}^{s-1} 2^{r+i} \geq 2^{r+s-1}$  
times. Thus,
\begin{align*}
 2^{r+s-1} & \leq T \implies 
 r+s-1 \leq \log(T).
 \end{align*}
So while $ \I \cap A_r \cap \H_1 \not \subset \S_t$, every time bracket $r$ is sampled, at most  $\log(T)$ total brackets are sampled. Thus, we have that once the algorithm starts working on bracket $r$, after 
\begin{align}
\log(T) T \label{fdr_log_bound}
\end{align}
additional rounds, we have that $ \I \cap A_r \cap [j_0] \subset \S_t$.

We note that after $2^{r-1} (r-1)$ rounds, the algorithm starts working on bracket $r$. Thus,
\begin{align}
\1\{ E \} \tau_{k} & \leq  [2^{r-1}(r-1)+ \1\{ E \} \log(T)  T] \nonumber\\
& =  [2^{r-1}(r-1)+  \log(\1\{ E \} T)  \1\{ E \} T] \label{no_expect_bound}
\end{align}

\textbf{Step 3: Bounding $\1\{ E \} T$.}
Note that we can write
\begin{align*}
 \1\{ E \} T  &= \1\{ E \}\sum_{t=1}^\infty \1 \{  [j_0] \cap \I \cap A_r \not \subset \S_{t}, R_t = r \} \\
 &= \1\{ E \}\sum_{t: R_t=r}^\infty \1 \{  [j_0] \cap \I \cap A_r \not \subset \S_{t} \} \\
 & \leq \1\{ E \}\sum_{t: R_t=r}^\infty \1 \{[j_0] \cap \I \cap A_r \not \subset \S_{t} , I_t \in \H_{0}\}\\
 &\hspace{1in} +  \1 \{[j_0] \cap \I \cap A_r \not \subset \S_{t} , I_t \in \H_{1} \cap [j_0]^c\} + \1\{ I_t \in [j_0]\} \\
  & \leq \1\{ E \}\sum_{t: R_t=r}^\infty \1 \{[j_0] \cap \I \cap A_r \not \subset \S_{t} , I_t \in \H_{0}\}\\
  &\hspace{1in} +  \1 \{[j_0] \cap \I \cap A_r \not \subset \S_{t} , I_t \in \H_{1} \cap [j_0]^c, \widehat{\mu}_{I_t, T_{I_t}(t)} < \mu_0 + \frac{\Delta_{j_0,0}}{2} \} \\
  &\hspace{1in} + \1 \{I_t \in \H_{1} \cap [j_0]^c, \widehat{\mu}_{I_t, T_{I_t}(t)} \geq \mu_0 + \frac{\Delta_{j_0,0}}{2} \} +  \1\{ I_t \in [j_0]\} 
\end{align*}
To begin, we bound the first sum. 

For any $l \in \I \cap [j_0] \cap A_r$ we have $\rho_l \geq \delta$ by definition, so
\begin{align*}
\widehat{\mu}_{l,T_l(t)} + U(T_l(t), \delta) \geq \mu_l - U(T_l(t), \rho_l) + U(T_l(t), \delta) \geq \mu_l \geq \mu_{j_0}.
\end{align*}
For any $i \in \H_0  \cap A_r$,
\begin{align*}
\widehat{\mu}_{i,T_i(t)} + U(T_i(t), \delta) \leq \mu_i + U(T_i(t), \rho_i)+ U(T_i(t), \delta) \leq  \mu_i + 2U(T_i(t), \rho_i \delta).
\end{align*}
Thus, $\widehat{\mu}_{i,T_i(t)} + U(T_i(t), \delta) \leq \mu_{j_0}$ if $T_i(t) \geq U^{-1}(\frac{ \Delta_{j_0,i}}{2}, \rho_i \delta)$, so that arm $i$ would not be pulled this many times as long as $[j_0] \cap \I \cap A_r \not \subset \S_{t}$. Thus, 
\begin{align}
 \1\{ E \}\sum_{t: R_t=r}^\infty \1 \{[j_0] \cap \I \cap A_r \not \subset \S_{t} , I_t \in \H_{0}\} & \leq \1\{ E \} \sum_{i \in \H_0  \cap A_r} U^{-1}(\frac{ \Delta_{j_0,i}}{2}, \rho_i \delta) \nonumber \\
& \leq  \1\{ E \} \sum_{i \in \H_0  \cap A_r} c \Delta_{j_0,i}^{-2} \log(\frac{\log( \Delta_{j_0,i}^{-2})}{\delta \rho_i}) \nonumber \\
& =  \1\{ E \} \sum_{i \in \H_0  \cap A_r} c \Delta_{j_0,i}^{-2} \log(\frac{\log( \Delta_{j_0,i}^{-2})}{\delta }) + c \Delta_{j_0,i}^{-2} \log(\frac{1}{\rho_i }) \nonumber \\
& \leq  \1\{ E \} \sum_{i \in \H_0  \cap A_r} c^\prime \Delta_{j_0,i}^{-2} \log(\frac{\log( \Delta_{j_0,i}^{-2})}{\delta }) \nonumber \\
& \leq \sum_{i \in \H_0  \cap A_r} c^\prime\Delta_{j_0,i}^{-2} \log(\frac{\log( \Delta_{j_0,i}^{-2})}{\delta }) \label{lilucb_arg}
\end{align}
where the second to last inequality follows from $E_{r} \subseteq E$.

Next, we consider the second sum.  If $[j_0] \cap \I \cap A_r \not \subset \S_{t} $, for any arm $i$ satisfying $\widehat{\mu}_{i, T_i(t)} < \mu_0 + \frac{\Delta_{j_0,0}}{2}$, we have that 
\begin{align*}
\widehat{\mu}_{i, T_i(t)} + U(T_i(t), \delta) <  \mu_0 + \frac{\Delta_{j_0,0}}{2}+ U(T_i(t), \delta) 
\end{align*}
so that if $T_i(t) \geq U^{-1}(\frac{\Delta_{j_0,0} }{2}, \delta)$, then $\widehat{\mu}_{i, T_i(t)} + U(T_i(t), \delta) < \mu_{j_0}$ and therefore arm $i$ is not pulled again until $[j_0] \cap \I \cap A_r  \subset \S_{t} $. Thus, 
\begin{align*}
\hspace{1in}&\hspace{-1in}\sum_{t: R_t=r}^\infty  \1 \{[j_0] \cap \I \cap A_r \not \subset \S_{t} ,  I_t \in \H_{1} \cap [j_0]^c \cap A_r, \widehat{\mu}_{I_t, T_{I_t}(t)} < \mu_0 + \frac{\Delta_{j_0,0}}{2} \} \\
& \leq  \sum_{i \in  \H_{1} \cap [j_0]^c \cap A_r} U^{-1}(\frac{\Delta_{j_0,0} }{2}, \delta) \\
& \leq c |\H_{1} \cap [j_0]^c \cap A_r| \Delta_{j_0,0}^{-2} \log(\frac{\log(\Delta_{j_0,0}^{-2})}{\delta}).
\end{align*}
Next, we bound the final summands
\begin{align*}
 \1\{ E \}\sum_{t: R_t=r}^\infty \1 \{I_t \in \H_{1} \cap [j_0]^c, \widehat{\mu}_{I_t, T_{I_t}(t)} \geq \mu_0 + \frac{\Delta_{j_0,0}}{2} \} +  \1\{ I_t \in [j_0]\} .
\end{align*}
Let $p \leq |A_r|$. If $j \in \H_{1} \cap [j_0]^c  \cap A_r$ and $\widehat{\mu}_{j, T_{j}(t)} \geq \mu_0 + \frac{\Delta_{j_0,0}}{2}$, then
\begin{align*}
\widehat{\mu}_{j, T_{j}(t)}  - U(T_j(t), \delta^\prime_r \frac{p}{|A_r| }) \geq \mu_0 + \frac{\Delta_{j_0,0}}{2}  - U(T_j(t), \delta^\prime_r \frac{p}{|A_r| })
\end{align*}
so that $\widehat{\mu}_{j, T_{j}(t)}  - U(T_j(t), \delta^\prime_r \frac{p}{|A_r| }) \geq \mu_0$ if $T_i(t) \geq U^{-1}(\frac{\Delta_{j_0,0}}{2}, \delta^\prime_r \frac{p}{|A_r| })$, which implies that $j \in s(p)$. 

Next, if $j \in [j_0]  \cap A_r$, then
\begin{align*}
\widehat{\mu}_{j, T_{j}(t)}  - U(T_j(t), \delta^\prime_r \frac{p}{|A_r| }) & \geq \mu_j  - U(T_j(t), \rho_j) - U(T_j(t), \delta^\prime_r \frac{p}{|A_r| }) \\
& \geq \mu_j - 2 U(T_j(t), \rho_j \delta^\prime_r \frac{p}{|A_r|}) 
\end{align*}
so that  $\widehat{\mu}_{j, T_{j}(t)}  - U(T_j(t), \delta^\prime_r \frac{p}{|A_r| }) \geq \mu_0$ if $T_i(t) \geq U^{-1}(\frac{\mu_j - \mu_0}{2}, \rho_j \delta^\prime_r \frac{p}{|A_r| })$, which implies that $j \in s(p)$. 

While there is \emph{some} $p$ associated with each arm when it is added to $s(p)$ and then consequently to $\S_t$, we don't know the order in or time at which particular arms are added.
However, in the worst case, the arms of $\H_1$ are added one at a time to $\S_t$ instead of in a big group so that the first reqires $p=1$, the second $p=2$, etc.  
Letting $\Gamma = \{ f : f: \H_1 \longrightarrow [|H_1|] \text{ is a bijection} \}$, 
\begin{align}
  \1\{ E\}  &\sum_{t:R_t = r}^\infty  \1 \{I_t \in \H_{1} \cap [j_0]^c, \widehat{\mu}_{I_t, T_{I_t}(t)} \geq \mu_0 + \frac{\Delta_{j_0,0}}{2} \}  +  \1\{ I_t \in [j_0]\}   \nonumber\\
& \leq  \1\{ E\} c \max_{\sigma \in \Gamma }  \bigg( \sum_{j \in \H_{1} \cap [j_0]^c \cap A_r} U^{-1}(\frac{\Delta_{j_0,0}}{2}, \delta^\prime_r \frac{\sigma(j)}{|A_r| }) + \sum_{j \in [j_0] \cap A_r}  U^{-1}(\frac{\mu_j - \mu_0}{2}, \rho_j \delta^\prime_r \frac{\sigma(j)}{|A_r| }) \bigg) \nonumber\\
& \leq  \1\{ E\} c \max_{\sigma \in \Gamma }  \bigg(\sum_{j \in \H_{1} \cap [j_0]^c \cap A_r} \Delta_{j_0,0}^{-2} \log( \frac{|A_r| }{\sigma(j)}  \frac{\log(\Delta_{j_0,0}^{-2})}{\delta^\prime_r}) + \sum_{j \in [j_0] \cap A_r}  \Delta_{j,0}^{-2} \log( \frac{|A_r| }{\sigma(j)}  \frac{\log(\Delta_{j,0}^{-2})}{\rho_j \delta^\prime_r}) \bigg)\nonumber\\
& =  \1\{ E\} c \max_{\sigma \in \Gamma } \bigg( \sum_{j \in \H_{1} \cap [j_0]^c \cap A_r}\Delta_{j_0,0}^{-2} \log( \frac{|A_r| }{\sigma(j)}  \frac{\log(\Delta_{j_0,0}^{-2})}{\delta^\prime_r})  \nonumber\\
&\hspace{1in}+ \sum_{j \in [j_0] \cap A_r}  \Delta_{j,0}^{-2} \log( \frac{|A_r| }{\sigma(j)}  \frac{\log(\Delta_{j,0}^{-2})}{ \delta^\prime_r}) + \sum_{j \in [j_0] \cap A_r}\Delta_{j,0}^{-2} \log(\frac{1}{\rho_j}) \bigg)\nonumber\\
& =  \1\{ E\} c \max_{\sigma \in \Gamma } \bigg( \sum_{j \in \H_{1} \cap [j_0]^c \cap A_r} \Delta_{j_0,0}^{-2} \log( \frac{|A_r| }{\sigma(j)}  \frac{\log(\Delta_{j_0,0}^{-2})}{\delta^\prime_r})  \nonumber\\
&\hspace{1in}+ \sum_{j \in [j_0] \cap A_r}  \Delta_{j,0}^{-2} \log( \frac{|A_r| }{\sigma(j)}  \frac{\log(\Delta_{j,0}^{-2})}{ \delta^\prime_r}) + 5  \sum_{j \in [j_0] \cap A_r}  \Delta_{j,0}^{-2} \log(\frac{1}{\delta}) \bigg) \nonumber\\
& \leq c^\prime \max_{\sigma \in \Gamma } \sum_{i \in \H_{1}  \cap A_r}  \Delta_{i \vee j_0,0}^{-2} \log( \frac{|A_r| }{\sigma(i)} r^2 \frac{\log(\Delta_{i \vee j_0,0}^{-2})}{\delta}) \label{gap_dep_bound} \\
& \leq c^\prime  \sum_{i =1}^{|\H_{1}  \cap A_r|}  \Delta_{ j_0,0}^{-2} \log( \frac{|A_r| }{i} r^2 \frac{\log(\Delta_{ j_0,0}^{-2})}{\delta}) \nonumber\\
& \leq c^{\prime \prime} |A_r|  \Delta_{ j_0,0}^{-2} \log(  r \frac{\log(\Delta_{ j_0,0}^{-2})}{\delta}) \label{gap_indep_bound}
\end{align}
where the last line follows from the fact that for any $p \leq |A_r|$, 
$\sum_{i =1}^p  \log( \frac{|A_r| }{i} ) \leq |A_r|$. 

\textbf{Step 4: finishing bound \eqref{bracket_lemma_ind}.}
Using lines \eqref{gap_indep_bound} and \eqref{no_expect_bound},
\begin{align*}
\1\{ E \} \tau_{k} \leq c^\prime [2^{r-1}(r-1)+  \log(|A_r|  \Delta_{j_0,0}^{-2} \log(  r \frac{\log(\Delta_{j_0,0}^{-2})}{\delta}))  |A_r|  \Delta_{j_0,0}^{-2} \log(  r \frac{\log(\Delta_{j_0,0}^{-2})}{\delta})]
\end{align*}
deterministically, which yields line \eqref{bracket_lemma_ind}.

\end{proof}

\begin{proof}[Proof of Theorem \ref{main_thm_fdr_log}]
As in the proof of Lemma \ref{bracket_lemma}, define
\begin{align*}
\tau_k & = \min(t \in \N \cup \{\infty\} : \exists s \text{ such that } |A_s \cap [j_0]| \geq k \text{ and } \I_s \cap A_s \cap \H_1 \subset \S_t), \\
\tau_{k}^{(r)} & = \min(t \in \N \cup \{\infty\} :  |A_r \cap [j_0]| \geq k \text{ and } \I_r \cap A_r \cap \H_1 \subset \S_t)
\end{align*}
As was argued in Step 1 of the proof of Lemma \ref{bracket_lemma}, for all $t \geq \tau_k$, $\E[|\S_t \cap \H_1|] \geq (1-\delta)k$.

\textbf{Step 1: A lower bound on the probability of a good event.}
Let $j_0 \in \{k,\ldots,m\}$ minimize \eqref{main_thm_res_ind}. Define $E_r = E_r \cap E_{0,r} \cap E_{1,r}$. We note that since $\{\rho_{i,r}\}_{i \in [n], r \in \N}$ and the brackets $\{A_r\}_{r \in \N}$ are independent, $\{E_r\}_{r \in \N}$ are independent events.  Let $r_0$ be the smallest integer such that
\begin{align*}
\min(40 \frac{n}{j_0}k,n) \leq 2^{r_0} \leq 80 \frac{n}{j_0}k,
\end{align*}
Note that if $2^{r_0} \geq n$, then the bracket $r_0$ has $n$ arms.

Next, we bound $\P(E_{r_0}^c)$. If $2^{r_0} \geq n$, then $\P(E_{r_0}^c) = 0$, so assume that $2^{r_0} < n$. Note that since the elements of $A_{r_0}$ are chosen uniformly from $[n]$ and $|A_{r_0}| = 2^{r_0} \geq 40 \frac{n}{j_0}k$ we have that
\begin{align*}
\E[|[j_0]  \cap A_{r_0}|] &= \frac{j_0}{n}|A_{r_0}| \\
&\geq 40 k. 
\end{align*}
Then, by a Chernoff bound for hypergeometric random variables,
\begin{align*}
\P(|[j_0] \cap A_{r_0}| \leq  20  k ) \leq \exp(- \frac{1}{8} 40 k ) \leq \exp(-5). 
\end{align*}
Thus, $E_{r_0}$ occurs with probability at least $1-\exp(-5)$. 
Furthermore, we note that for any $r \geq r_0$, $\P(E_r^c) \leq \exp(-5)$.

Furthermore, by Lemma 8 of \cite{jamieson2018bandit}, for any $r \in \N$ and $i =0,1$,
\begin{align*}
\P(E_{i,r}^c) = \E [\P(E_{i,r}^c|A_r)] \leq \delta. 
\end{align*}
Finally, note that for every $r \geq r_0$ and any $\delta \in (0,1/40)$ we have 
\begin{align*}
\P(E_r^c) \leq  \exp(-5) + 2\delta \leq \frac{1}{16}.
\end{align*}
Furthermore, we claim that $\P(\cap_{l=r_0}^{\infty} E_l^c) = 0$. Let $s \geq r_0$; then, using the independence between brackets,
\begin{align*}
\P(\cap_{l=r_0}^{\infty} E_l^c) \leq \P(\cap_{l=r_0}^{s} E_l^c) = \frac{1}{16^s} \longrightarrow 0
\end{align*}
as $s \longrightarrow \infty$, proving the claim.

\textbf{Step 2: Gap-Independent bound on the number of samples.}
For the sake of brevity, write $\tau$ instead of $\tau_k$ and $\tau^{(r)}$ instead of $\tau_{k}^{(r)}$. Then, by the independence between brackets, the fact that $\cup_{r=r_0}^{\infty} E_r \cap (\cap_{r_0\leq l< r} E_l^c)$ occurs with probability $1$, and line \ref{bracket_lemma_ind} of Lemma \ref{bracket_lemma},
\begin{align*}
\E[\tau] & =    \E[\tau \1\{\cup_{r=r_0}^{\infty} E_r \cap (\cap_{r_0\leq l< r} E_l^c)\}] \\
& \leq \sum_{r=r_0}^{\infty} \E[\tau \1\{ E_r \cap (\cap_{r_0\leq l<r} E_l^c) \}]  \\
& \leq \sum_{r=r_0}^{\infty} \E[\tau^{(r)} \1\{ E_r \cap (\cap_{r_0\leq l<r} E_l^c) \}]  \\
& =  \sum_{r=r_0}^{\infty}\E[\tau^{(r)} \1\{ E_r \} ] \P( \cap_{r_0\leq l<r} E_l^c) \\
& \leq  \sum_{r=r_0}^{\infty} [2^{r-1}(r-1)+  \log(|A_r|  \Delta_{j_0,0}^{-2} \log(  r \frac{\log(\Delta_{j_0,0}^{-2})}{\delta}))  |A_r|  \Delta_{j_0,0}^{-2} \log(  r \frac{\log(\Delta_{j_0,0}^{-2})}{\delta}))]\frac{1}{16^{r-r_0}} \\
& \leq \sum_{s=0}^{\infty} [2^{r_0-1}\cdot 2^s(r_0+s-1)\\
& \hspace{.2in} +  \log(2^s|A_{r_0}|  \Delta_{j_0,0}^{-2} \log(  (r_0+s) \frac{\log(\Delta_{j_0,0}^{-2})}{\delta}))  2^s|A_{r_0}|  \Delta_{j_0,0}^{-2} \log(  (r_0+s) \frac{\log(\Delta_{j_0,0}^{-2})}{\delta}))]\frac{1}{16^{s}} .
\end{align*}
We bound the first term as follows:
\begin{align}
\sum_{s=0}^{\infty} \frac{2^{r_0-1} \cdot 2^s(r_0+s-1)}{16^s} & = 2^{r_0-1}\sum_{s=0}^{\infty} \frac{ (r_0+s-1)}{8^s}  \\
& \leq c 2^{r_0} r_0 \label{eq:first_term_main_pf_bd} \\
& \leq c^\prime \frac{n}{j_0} k \log(\frac{n}{j_0} k) \nonumber \\
& \leq c^{\prime \prime} \fdrHt(\mu_0; j_0)\log(\fdrHt(\mu_0; j_0)). \nonumber
\end{align}
where the last inequality follows since $\Delta_{i,j}^{-2} \geq 1$ for all $i<j \in [n] \cup \{0\}$ since $\mu_0, \mu_1,\ldots, \mu_n \in [0,1]$.

We note that 
\begin{align*}
\log(  (r_0+s) \frac{\log(\Delta_{j_0,0}^{-2})}{\delta})) & \leq c[\log(  r_0 \frac{\log(\Delta_{j_0,0}^{-2})}{\delta}))+ \log(  s \frac{\log(\Delta_{j_0,0}^{-2})}{\delta}))] \\
& \leq c^\prime \log(  r_0 \frac{\log(\Delta_{j_0,0}^{-2})}{\delta}))+ c \log(  s )
\end{align*}
and
\begin{align*}
\log(2^s|A_{r_0}|  \Delta_{j_0,0}^{-2} \log(  (r_0+s) \frac{\log(\Delta_{j_0,0}^{-2})}{\delta})) & = \log(|A_{r_0}|  \Delta_{j_0,0}^{-2} \log(  (r_0+s) \frac{\log(\Delta_{j_0,0}^{-2})}{\delta})) + s \\
& \leq \log(|A_{r_0}|  \Delta_{j_0,0}^{-2} c^\prime \log(  r_0 \frac{\log(\Delta_{j_0,0}^{-2})}{\delta})+ c \log(  s )) + s \\
& \leq c^{\prime \prime} \log(|A_{r_0}|  \Delta_{j_0,0}^{-2}  \log(  r_0 \frac{\log(\Delta_{j_0,0}^{-2})}{\delta})))+ c^{\prime \prime \prime} \log(\log(  s )) + s \\
& \leq c^{\prime \prime} \log(|A_{r_0}|  \Delta_{j_0,0}^{-2}  \log(  r_0 \frac{\log(\Delta_{j_0,0}^{-2})}{\delta})))+ c^{\prime \prime \prime \prime}  s \\
\end{align*}
Then,
\begin{align*}
&  \sum_{s=0}^{\infty}   \log(2^s|A_{r_0}|  \Delta_{j_0,0}^{-2} \log(  (r_0+s) \frac{\log(\Delta_{j_0,0}^{-2})}{\delta}))  2^s|A_{r_0}|  \Delta_{j_0,0}^{-2} \log(  (r_0+s) \frac{\log(\Delta_{j_0,0}^{-2})}{\delta}))\frac{1}{16^{s}} \\
 & \leq   \sum_{s=0}^{\infty}  [c^{\prime \prime} \log(|A_{r_0}|  \Delta_{j_0,0}^{-2}  \log(  r_0 \frac{\log(\Delta_{j_0,0}^{-2})}{\delta})))+ c^{\prime \prime \prime \prime}  s]  |A_{r_0}|  \Delta_{j_0,0}^{-2} [c^\prime \log(  r_0 \frac{\log(\Delta_{j_0,0}^{-2})}{\delta}))+ c \log(  s )]\frac{1}{8^{s}} \\
 & \leq c^{\prime \prime} c^\prime  \log(|A_{r_0}|  \Delta_{j_0,0}^{-2}  \log(  r_0 \frac{\log(\Delta_{j_0,0}^{-2})}{\delta}))) |A_{r_0}|  \Delta_{j_0,0}^{-2} \log(  r_0 \frac{\log(\Delta_{j_0,0}^{-2})}{\delta})) \\
 & + c^{\prime \prime \prime \prime \prime }  \log(|A_{r_0}|  \Delta_{j_0,0}^{-2}  \log(  r_0 \frac{\log(\Delta_{j_0,0}^{-2})}{\delta}))) |A_{r_0}|  \Delta_{j_0,0}^{-2} \\
 & + c^{\prime \prime \prime \prime \prime \prime}  |A_{r_0}|  \Delta_{j_0,0}^{-2} \log(  r_0 \frac{\log(\Delta_{j_0,0}^{-2})}{\delta})) + + c^{\prime \prime \prime \prime \prime \prime \prime}  |A_{r_0}|  \Delta_{j_0,0}^{-2} \\
 & \leq  c^{\prime \prime \prime \prime \prime \prime \prime \prime}  \log(|A_{r_0}|  \Delta_{j_0,0}^{-2}  \log(  r_0 \frac{\log(\Delta_{j_0,0}^{-2})}{\delta}))) |A_{r_0}|  \Delta_{j_0,0}^{-2} \log(  r_0 \frac{\log(\Delta_{j_0,0}^{-2})}{\delta})) 
\end{align*}
Plugging in $|A_{r_0}|$ and $r_0$ yields the gap independent bound. 

\end{proof}

\subsection{Proof of FWER-TPR}

In this section, we prove an upper bound for the FWER-TPR version of our Algorithm (see Algorithm \ref{fwer_alg}). We note that the gap-dependent upper bound in Theorem \ref{thm_fdr_paper} follows as a corollary since whenever the FWER-TPR version of our Algorithm \ref{fwer_alg} accepts an arm, the FDR-TPR version of our Algorithm \ref{fdr_alg} accepts the same arm.

\begin{theorem}
\label{main_thm_fwer}
Let $\delta \in (0,1/40)5$. Let $k \in [|\H_1|]$. For all $j \in \{k, \ldots, |\H_1|\}$ define 
\begin{align*}
\fwerH(\mu_0;j)  & \coloneqq \frac{k}{j}\left( \underbrace{\left\{\sum_{i=1}^m \Delta_{i \vee j, 0}^{-2}\right\}}_{\text{top arms}}\log(\frac{nk}{j\delta}\log(\Delta_{i \vee j, 0}^{-2})) + \underbrace{\sum_{i=m+1}^{n}\Delta_{j,i}^{-2}}_{\text{bottom arms}}\log(\frac{\log(\Delta_{j,i}^{-2})}{\delta})  \right).
\end{align*}
Let $(\mc{F}_t)_{t \in \N}$ be the filtration generated by playing Algorithm \ref{fwer_alg} on problem $\rho$. Then, Algorithm \ref{fwer_alg} has the property that $\P(\exists t : \Q_t \cap \H_0 \neq \emptyset) \leq 2 \delta$ and there exists a stopping time $\tau_k$ wrt $(\mc{F})_{t \in \N}$ such that
\begin{align}
\E[\tau_k] & \leq c \min_{k \leq j \leq m } \fwerH(\mu_0;j)  \log(\fwerH(\mu_0;j) +  \Delta_{j,0}^{-2} \log(\tfrac{n}{j}k \log( \Delta_{j,0}^{-2})/\delta) )  \label{main_thm_FWER_res}
\end{align}
and for all $t \geq \tau_k$, $\E[|\Q_t \cap \H_1|] \geq (1-\delta)k$. 
\end{theorem}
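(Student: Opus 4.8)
The plan is to establish the two conclusions of Theorem~\ref{main_thm_fwer} separately, reusing as much of the FDR-TPR machinery (Lemma~\ref{bracket_lemma} and the proof of Theorem~\ref{main_thm_fdr_log}) as possible. The point is that Algorithm~\ref{fwer_alg} differs from Algorithm~\ref{fdr_alg} only in the acceptance rule for the output set: the Benjamini--Hochberg threshold $\frac{p}{|A_{R_t}|}\delta'_{R_t}$ is replaced by the uniform (Bonferroni-type) level $\frac{\delta}{|A_{R_t}|R_t^2}$, while the UCB sampling rule that determines $I_t$ is essentially unchanged. Thus most of the pull-counting transfers, and only the acceptance analysis needs to be redone.

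For the FWER guarantee $\P(\exists t: \Q_t \cap \H_0 \neq \emptyset) \leq 2\delta$ I would argue by a union bound over the anytime confidence bounds. Fix a bracket $r$ and an arm $i \in \H_0 \cap A_r$, so $\mu_i \leq \mu_0$. Arm $i$ enters $\Q_t$ only if $\widehat{\mu}_{i,r,T_{i,r}(t)} - U(T_{i,r}(t), \frac{\delta}{|A_r|r^2}) \geq \mu_0 \geq \mu_i$ at some $t$, which forces $\widehat{\mu}_{i,r,T_{i,r}(t)} - \mu_i > U(T_{i,r}(t), \frac{\delta}{|A_r|r^2})$; by the defining property of $U$ this has probability at most $\frac{\delta}{|A_r|r^2}$. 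Union bounding over the at most $|A_r|$ arms of $A_r$ and then over all brackets gives $\sum_r |A_r|\cdot\frac{\delta}{|A_r|r^2} = \delta\sum_r r^{-2} = \delta\pi^2/6 \leq 2\delta$. This is the easy half.

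For the sample-complexity and TPR conclusions I would prove a bracket lemma mirroring Lemma~\ref{bracket_lemma}, reusing verbatim the good events $E_r = \{|[j_0]\cap A_r| \geq k\}$, $E_{0,r}$, $E_{1,r}$ (which depend only on the draw $A_r$ and the variables $\rho_{i,r}$, not on the acceptance rule, so they transfer unchanged) and the stopping time $\tau_k = \min\{t : |A_r \cap [j_0]| \geq k \text{ and } \I_r \cap A_r \cap \H_1 \subset \Q_t\}$. The claim $\E[|\Q_t \cap \H_1|] \geq (1-\delta)k$ for $t \geq \tau_k$ then follows as in Step~1 of Lemma~\ref{bracket_lemma}: a well-concentrated arm $i \in \I_r$ with $\mu_i \geq \mu_{j_0} > \mu_0$ and $\rho_{i,r} \geq \delta$ is accepted once $T_{i,r}(t) \geq U^{-1}(\tfrac{\Delta_{i,0}}{2}, \rho_{i,r}\tfrac{\delta}{|A_r|r^2})$, and $\P(\rho_{i,r} \geq \delta) \geq 1-\delta$ gives $\E[|\I_r \cap A_r \cap \H_1|] \geq (1-\delta)|A_r \cap \H_1| \geq (1-\delta)k$. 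To bound $\E[\1\{E_r \cap E_{0,r} \cap E_{1,r}\}\tau_k]$ I would repeat the pull-counting of Step~3, splitting pulls by whether $I_t \in \H_0$, $I_t \in \H_1\cap[j_0]^c$, or $I_t \in [j_0]$. The $\H_0$ term is identical to the FDR case and yields the bottom-arm sum $\sum_{i \in \H_0 \cap A_r}\Delta_{j_0,i}^{-2}\log(\tfrac{\log(\Delta_{j_0,i}^{-2})}{\delta})$ after invoking $E_{0,r}$. The crucial change is in the top-arm terms: since the acceptance level is the uniform $\frac{\delta}{|A_r|r^2}$ rather than the adaptive $\frac{p}{|A_r|}\delta'_r$, the maximization $\max_{\sigma\in\Gamma}$ over bijections and the resulting $\sum_i \log(|A_r|/i)\leq |A_r|$ saving in \eqref{gap_dep_bound}--\eqref{gap_indep_bound} disappear; instead each top arm $i$ simply requires $U^{-1}(\tfrac{\Delta_{i\vee j_0,0}}{2}, \rho_{i,r}\tfrac{\delta}{|A_r|r^2})$ pulls, contributing $\Delta_{i\vee j_0,0}^{-2}\log(\tfrac{|A_r|r^2\log(\Delta_{i\vee j_0,0}^{-2})}{\delta})$ once $E_{1,r}$ absorbs the $\log(1/\rho_{i,r})$ factor. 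Summing over $\H_1 \cap A_r$ then produces exactly the top-arm quantity appearing in $\fwerH$, with a $\log(\tfrac{|A_r|r^2}{\delta}\cdots)$ factor on every arm.

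Finally, I would convert this per-bracket, per-draw bound into the stated instance bound in two moves. First, with $j_0$ the minimizer of \eqref{main_thm_FWER_res} and $r_0$ the bracket of size $\Theta(\tfrac{n}{j_0}k)$, I would use $E_{0,r_0}, E_{1,r_0}$ to pass to the clean form $\sum_{i\in A_{r_0}}\Delta^{-2}\log(\tfrac{\cdots}{\delta})$, drop the indicator, and take the expectation over the uniform draw $A_{r_0}$; linearity replaces each bracket sum $\sum_{i\in A_{r_0}}(\cdots)$ by $\tfrac{|A_{r_0}|}{n}\sum_{i=1}^n(\cdots) = \tfrac{k}{j_0}\sum_{i=1}^n(\cdots)$, producing the $\tfrac{k}{j}$ prefactor and the full-instance sums of $\fwerH$, while the conditioning on $E_{r_0}$ costs only a constant factor since $\P(E_{r_0}) \geq 1-e^{-5}$ by the hypergeometric Chernoff bound. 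Second, the exponential-in-$s$ sum over later brackets $r_0+s$, weighted by $\P(\cap_{l<r_0+s}E_l^c)\leq 16^{-s}$, is controlled exactly as in Step~2 of the proof of Theorem~\ref{main_thm_fdr_log}, and the outer $\log(\fwerH(\mu_0;j)+\Delta_{j,0}^{-2}\cdots)$ factor arises as before from the $\log(T)T$ bound \eqref{fdr_log_bound}. The main obstacle I anticipate is bookkeeping rather than conceptual: carrying the uniform acceptance level $\frac{\delta}{|A_r|r^2}$ and the $R_t^2$ factors through the $U^{-1}$ inversions so that the doubly-logarithmic terms land precisely as written in $\fwerH$, and—the genuinely new step relative to the gap-independent FDR argument—verifying that the expectation over $A_{r_0}$ interacts correctly with the conditioning on $E_{0,r_0}, E_{1,r_0}$, which are themselves defined through $A_{r_0}$. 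This is where I would spend the most care.
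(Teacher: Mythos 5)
Your proposal follows essentially the same route as the paper's proof: the same union bound over arms and brackets for the FWER guarantee, the same stopping time and good events $E_r, E_{0,r}, E_{1,r}$ carried over from the FDR analysis, the same pull-counting split by whether $I_t$ lies in $\H_0$, $\H_1\cap[j_0]^c$, or $[j_0]$, and the same observation that the uniform acceptance level $\delta/(|A_r|r^2)$ replaces the Benjamini--Hochberg saving with a flat $\log(|A_r|r^2/\delta)$ factor on each top arm (the paper phrases this as taking $\sigma(i)\geq 1$ and $p\geq 1$ in the FDR bound). The one step you flag but do not actually carry out---passing from the random per-draw bound $T\log T$ to the instance quantity $\fwerH$---is resolved in the paper not by bare linearity (which does not apply to the product $T\log T$, since $T$ depends on the draw $A_{r_0}$) but by writing $\E[\1\{j\in A_r\}\log S_r]=\E[\log S_r\mid j\in A_r]\,\P(j\in A_r)$, applying Jensen's inequality to obtain $\log(\E[S_r\mid j\in A_r])$, and using the elementary bound $\tfrac{a}{b}\leq\tfrac{a+1}{b+1}$ to absorb the conditioning; this is precisely the spot you identified as needing the most care.
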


$E_r$, $E_{0,r}$, and $E_{1,r}$ are defined as in Section \ref{fdr_tpr_proof}.

\begin{lemma} 
\label{bracket_lemma_fwer}
Fix $\delta \in (0,1/40)$, $k \in [|\H_1|]$, $j_0 \in \{k,\ldots,m\}$, and $r \in \N$ such that $2^r \geq k$. Define
\begin{align*}
U_r \coloneqq	\frac{ \min(2^r,n)}{n}[\sum_{i \in \H_1}  \Delta_{i \vee j_0,0}^{-2}  \log(\min(2^r,n) r  \frac{\log(\Delta_{i \vee j_0,0}^{-2})}{\delta})) + \sum_{i \in \H_0}  \Delta_{j_0, i}^{-2} \log(\frac{\log( \Delta_{j_0, i}^{-2})}{\delta }) ].
\end{align*}
Let $(\mc{F}_t)_{t \in \N}$ be the filtration generated by playing Algorithm \ref{fwer_alg} on problem $\rho$. Then, there exists a stopping time $\tau_k$ wrt $(\mc{F}_t)_{t \in \N}$ such that for all $t \geq \tau_k$, $\E[|\mathcal{Q}_t \cap \H_1|] \geq (1-\delta)k$, and
\begin{align}
\E[ \1\{ E_r \cap E_{0,r} \cap E_{1,r} \}  \tau_{k}]& \leq  c [2^{r-1}(r-1)+ U_r \log(U_r + \Delta_{j_0,0}^{-2}  \log(\min(2^r,n) r  \frac{\log(\Delta_{j_0,0}^{-2})}{\delta})))] \label{bracket_lemma_fwer_dep}  
\end{align}
where $c$ is a universal constant.
\end{lemma}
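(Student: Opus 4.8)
The plan is to transcribe the proof of Lemma~\ref{bracket_lemma} almost verbatim, since Algorithms~\ref{fdr_alg} and~\ref{fwer_alg} share the \emph{same} UCB sampling rule and differ only in how arms enter the output set. I would use the same stopping time with $\Q_t$ in place of $\S_t$, namely $\tau_k=\min\{t:|A_r\cap[j_0]|\geq k\text{ and }\I_r\cap A_r\cap\H_1\subset\Q_t\}$, and establish the true-positive claim $\E[|\Q_t\cap\H_1|]\geq(1-\delta)k$ for $t\geq\tau_k$ exactly as in Step~1 of Lemma~\ref{bracket_lemma}: on $E_r$ at least $k$ indices of $A_r$ lie in $[j_0]\subset\H_1$, and by definition of $\I_r$ a $(1-\delta)$-fraction of $A_r\cap\H_1$ concentrates. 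Writing $E:=E_r\cap E_{0,r}\cap E_{1,r}$ and letting $T$ be the number of rounds spent on bracket $r$ before $\I_r\cap A_r\cap[j_0]\subset\Q_t$, the identical ``number-of-open-brackets'' argument gives the deterministic inequality $\1\{E\}\tau_k\leq 2^{r-1}(r-1)+\log(\1\{E\}T)\,\1\{E\}T$.

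Next I would bound $\1\{E\}T$ by the same case split as in Step~3 of Lemma~\ref{bracket_lemma}. The two pieces counting pulls of arms in $\H_0\cap A_r$ and of arms in $\H_1\cap[j_0]^c\cap A_r$ whose empirical mean is still below $\mu_0+\tfrac12\Delta_{j_0,0}$ depend only on the common sampling rule and are therefore unchanged, contributing $\sum_{i\in\H_0\cap A_r}\Delta_{j_0,i}^{-2}\log\tfrac{\log(\Delta_{j_0,i}^{-2})}{\delta}$ (after invoking $E_{0,r}$ to trade $\rho_{i,r}$ for $\delta$ without a union bound) and an analogous term. The only new ingredient is the number of pulls a good arm needs before it is \emph{accepted} into $\Q_t$. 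Here Algorithm~\ref{fwer_alg} uses the fixed per-arm confidence $\tfrac{\delta}{|A_r|R_t^2}$, so arm $i$ enters $\Q_t$ once $T_{i,R_t}(t)\geq U^{-1}(\tfrac12\Delta_{i\vee j_0,0},\rho_{i,r}\tfrac{\delta}{|A_r|r^2})$; summing over $i\in\H_1\cap A_r$ and using $E_{1,r}$ to absorb the $\rho_{i,r}$ factors yields a deterministic (given $A_r$) gap-dependent bound $\1\{E\}T\leq V_r$, where $V_r$ is the natural analogue of~\eqref{gap_dep_bound}. The essential departure from the FDR analysis is that, because the confidence is the same for every arm, there is no Benjamini--Hochberg ordering: the $\max_{\sigma\in\Gamma}$ and the telescoping $\tfrac{1}{\sigma(i)}$ savings disappear and every accepted arm pays the full $\log(|A_r|)$, which is exactly the $\log(\min(2^r,n)r\cdots)$ appearing inside $U_r$ and is the (expected) price of the stronger family-wise guarantee.

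Finally I would take expectations over the uniformly random bracket $A_r$. Each index lies in $A_r$ with probability $\tfrac{\min(2^r,n)}{n}$, so linearity of expectation gives $\E[V_r]=U_r$. The nonlinear term $\log(\1\{E\}T)\,\1\{E\}T$ must be handled more carefully, since $x\mapsto x\log x$ is convex and the expectation cannot be pushed inside. I would bound the slowly varying factor $\log(\1\{E\}T)$ by $\log\bar V_r$, where $\bar V_r$ is the deterministic \emph{gap-independent} surrogate obtained by replacing every gap with the worst one $\Delta_{j_0,0}$ (legitimate because $\Delta_{j_0,i}\geq\Delta_{j_0,0}$ for $i\in\H_0$ and $\Delta_{i\vee j_0,0}\geq\Delta_{j_0,0}$); since $\bar V_r$ depends on $A_r$ only through the deterministic size $\min(2^r,n)$, I can pull $\log\bar V_r$ out of the expectation and apply linearity to the remainder, obtaining $\E[\log(\1\{E\}T)\1\{E\}T]\leq U_r\log\bar V_r$.

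I expect the genuinely delicate step to be reconciling this $\log\bar V_r$ with the logarithm $\log(U_r+\Delta_{j_0,0}^{-2}\log(\min(2^r,n)r\tfrac{\log(\Delta_{j_0,0}^{-2})}{\delta}))$ that appears in~\eqref{bracket_lemma_fwer_dep}: the per-arm quantity $\Delta_{j_0,0}^{-2}\log(\min(2^r,n)r\cdots)$ inside that logarithm is precisely $U^{-1}$ for the worst good arm, and one must show it (together with $U_r$) dominates $\bar V_r$ up to a polynomial factor. To do so I would use the lower bound $U_r\gtrsim\tfrac{\min(2^r,n)}{n}\,j_0\,\Delta_{j_0,0}^{-2}\log(\cdots)$ coming from the $i\leq j_0$ terms (for which $\Delta_{i\vee j_0,0}=\Delta_{j_0,0}$), together with the normalization $\Delta_{i,j}^{-2}\geq 1$ implied by $\mu_0,\dots,\mu_n\in[0,1]$, exactly as the $\log$-bookkeeping is carried out in Step~4 of Lemma~\ref{bracket_lemma} and in the proof of Theorem~\ref{main_thm_fdr_log}. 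Combining the three displays then gives~\eqref{bracket_lemma_fwer_dep}; the rest is a transcription of the FDR argument.
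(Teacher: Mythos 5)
Your Steps 1--3 track the paper's proof of Lemma~\ref{bracket_lemma_fwer} faithfully: the same stopping time with $\Q_t$ in place of $\S_t$, the same bracket-counting inequality $\1\{E\}\tau_k\leq 2^{r-1}(r-1)+\log(\1\{E\}T)\,\1\{E\}T$, the same reuse of lines \eqref{lilucb_arg} and \eqref{gap_dep_bound} with $\sigma(i)\geq 1$ and $p\geq 1$ to get the gap-dependent bound $\1\{E\}T\leq c\,S_r$, and the observation that $\E[S_r]=U_r$ by linearity since each index lands in $A_r$ with probability $\min(2^r,n)/n$. You also correctly flag $\E[S_r\log S_r]$ as the delicate step. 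The gap is in how you resolve it. Your surrogate $\bar V_r$ is built by ``replacing every gap with the worst one $\Delta_{j_0,0}$,'' justified by the claim $\Delta_{i\vee j_0,0}\geq\Delta_{j_0,0}$. That inequality is backwards: since the means are sorted in decreasing order, $\mu_{i\vee j_0}=\min(\mu_i,\mu_{j_0})\leq\mu_{j_0}$, so $\Delta_{i\vee j_0,0}\leq\Delta_{j_0,0}$ and hence $\Delta_{i\vee j_0,0}^{-2}\geq\Delta_{j_0,0}^{-2}$ for the top arms. Consequently $\bar V_r$ is not a deterministic upper bound on $\1\{E\}T$ (arms in $\H_1$ near the threshold contribute $\Delta_{i,0}^{-2}\gg\Delta_{j_0,0}^{-2}$), and the step $\log(\1\{E\}T)\leq\log\bar V_r$ fails. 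Repairing it by using the true worst per-arm count, which involves $\Delta_{m,0}^{-2}$ and the bracket size $\min(2^r,n)$ as a multiplicative prefactor, yields a logarithmic factor of the form $\log(\min(2^r,n)\Delta_{m,0}^{-2}\cdots)$, which is weaker than the stated $\log(U_r+\Delta_{j_0,0}^{-2}\log(\min(2^r,n)r\log(\Delta_{j_0,0}^{-2})/\delta))$ and so does not prove \eqref{bracket_lemma_fwer_dep} as written.

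The paper's Step~2 handles this differently, and the difference is the whole point of the lemma's sharper log factor. It expands $S_r\log S_r=\sum_j(\text{term}_j)\1\{j\in A_r\}\log S_r$, and for each fixed $j$ applies the law of total expectation conditioned on $\{j\in A_r\}$ followed by Jensen's inequality on the concave logarithm: $\E[\1\{j\in A_r\}\log S_r]=\tfrac{\min(2^r,n)}{n}\,\E[\log S_r\mid j\in A_r]\leq\tfrac{\min(2^r,n)}{n}\,\log\E[S_r\mid j\in A_r]$, and then bounds $\E[S_r\mid j\in A_r]$ by $U_r$ plus the single conditioned-on term using $\tfrac{\min(2^r,n)-1}{n-1}\leq\tfrac{\min(2^r,n)}{n}$ (lines \eqref{expect_lemma_tot_exp}--\eqref{expect_lemma_fraction}). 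This is how the argument places $U_r$ itself, rather than a gap-independent worst case, inside the logarithm. Your proposal is missing this conditioning-plus-Jensen device, and without it (or something equivalent) the claimed bound is not reached.
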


\begin{remark}
Note that $\fwerH(\mu_0;j) \approx U_{\lceil \log_2(\frac{nk}{j}) \rceil}$.
\end{remark}

\begin{proof}
\textbf{Step 1: Define stopping time.} Define
\begin{align*}
\tau_k & = \min(t \in \N \cup \{\infty\} : |A_r \cap [j_0]| \geq k \text{ and } \I_r \cap A_r \cap \H_1 \subset \Q_t). 
\end{align*}

Observe that for all $t \geq \tau_k$, $\E[|\Q_t \cap \H_1|] \geq (1-\delta)k$ since for $t \geq \tau_k$ 
\begin{align*}
\E[|\Q_t \cap \H_1|] \geq \E[|\I_s \cap A_s \cap \H_1|] \geq (1-\delta) |A_s \cap \H_1| \geq (1-\delta)k.
\end{align*}

Let $r \in \N$. Define
\begin{align*}
T & = |\{ t \in \N : \I \cap A_r \cap [j_0] \not \subset \Q_t \text{ and } R_t = r\}|,
\end{align*}
By the same argument used in Lemma \ref{bracket_lemma} to obtain line \eqref{no_expect_bound},
\begin{align}
\1\{ E \} \tau_{k} & \leq    [2^{r-1}(r-1)+  \log(\1\{ E \} T)  \1\{ E \} T]. \label{no_expect_bound_fwer}
\end{align}

We can use the same argument that was used to obtain line \eqref{lilucb_arg} and line \eqref{gap_dep_bound} in Lemma \ref{bracket_lemma} and the lower bounds $1 \leq \sigma(i) $ and $p \geq 1$ to obtain
\begin{align}
\1\{ E \}T & \leq  c \bigg(\sum_{i \in \H_0  \cap A_r}  \Delta_{j_0,i}^{-2} \log(\frac{\log( \Delta_{j_0,i}^{-2})}{\delta }) \\
&\hspace{.5in} + |\H_{1} \cap [j_0]^c \cap A_r| \Delta_{j_0,0}^{-2} \log(\frac{\log(\Delta_{j_0,0}^{-2})}{\delta}) + \sum_{i \in \H_{1}  \cap A_r}  \Delta_{i \vee j_0,0}^{-2} \log(|A_r| r^2 \frac{\log(\Delta_{i \vee j_0,0}^{-2})}{\delta})  \bigg) \nonumber\\
& \leq c^\prime [\sum_{i \in \H_0  \cap A_r}  \Delta_{j_0,i}^{-2} \log(\frac{\log( \Delta_{j_0,i}^{-2})}{\delta })  +  \sum_{i \in \H_{1}  \cap A_r}  \Delta_{i \vee j_0,0}^{-2} \log(|A_r| r \frac{\log(\Delta_{i \vee j_0,0}^{-2})}{\delta}) ] \label{gap_dep_bound_main} \\
& \coloneqq c^\prime S_r
\end{align}
where the second inequality follows from the fact that $\Delta_{i \vee j_0, 0} \geq \Delta_{j_0,0}$ so the third term absorbs the second.

Using lines \eqref{gap_dep_bound_main} and \eqref{no_expect_bound_fwer},
\begin{align*}
\1\{ E \} \tau_{k} & \leq c [2^{r-1}(r-1)+  \log(S_r)  S_r] 
\end{align*} 
but note that now the bound depends on the particular random elements of $A_r \cap \H_0$ and $A_r \cap \H_1$.

\textbf{Step 2: Bounding $\E[ \log(S_r)  S_r]$.}  Next, taking the expectation of both sides and focusing on the expectation of the second term,
\begin{align*}
\E[ \log(S_r)  S_r] & = \sum_{i \in \H_0  }  \Delta_{j_0,i}^{-2} \log(\frac{\log(  \Delta_{j_0,i}^{-2})}{\delta }) \E[ \1 \{i \in A_r \} \log(S_r  )]  \\
&\hspace{1in} +  \sum_{i \in \H_{1}  }  \Delta_{i \vee j_0, 0}^{-2} \log(|A_r| r^2 \frac{\log(\Delta_{i \vee j_0, 0}^{-2})}{\delta}) \E[ \1 \{i \in A_r \} \log(S_r )] . 
\end{align*}
It suffices to bound the first sum since the argument for the second is the same. 
\begin{align}
\E[ \1 \{j \in A_r \} \log(S_r)]  & = \E[ \log(S_r) | j \in A_r] \frac{\min(2^r,n)}{n} \label{expect_lemma_tot_exp} \\
& \leq \log(\E[ S_r | j \in A_r]) \frac{\min(2^r,n)}{n} \label{expect_lemma_jensen}\\
& = \log( \frac{ \min(2^r-1,n-1)}{n-1}[\sum_{i \in \H_1} \Delta_{i \vee j_0, 0}^{-2}  \log(\min(2^r,n) r  \frac{\log(\Delta_{i \vee j_0, 0}^{-2})}{\delta})) \nonumber\\
&\hspace{.5in} + \sum_{i \in \H_0 \setminus j}   \Delta_{j_0,i}^{-2} \log(\frac{\log(  \Delta_{j_0,i}^{-2})}{\delta }) ] +  \Delta_{j_0,j}^{-2} \log(\frac{\log(  \Delta_{j_0,j}^{-2})}{\delta }) )  \frac{\min(2^r,n)}{n} \\
& \leq \log(S_r +  \Delta_{j_0,j}^{-2} \log(\frac{\log(  \Delta_{j_0,i}^{-2})}{\delta })) \frac{\min(2^r,n)}{n}, \label{expect_lemma_fraction}
\end{align}
where line \eqref{expect_lemma_tot_exp} follows by the law of total expectation, line \eqref{expect_lemma_jensen} follows by Jensen's inequality, and line \eqref{expect_lemma_fraction} follows since $\frac{a}{b} \leq \frac{a+1}{b+1}$ if $a \leq b$. Thus, collecting terms, 
\begin{align*}
\E[ \1\{ E_r \cap E_{0,r} \cap E_{1,r} \} \tau_{r,k} ]  \leq U_r \log(U_r + \Delta_{j_0,0}^{-2}  \log(\min(2^r,n) r  \frac{\log(\Delta_{j_0,0}^{-2})}{\delta}))
\end{align*}
yielding line \eqref{bracket_lemma_fwer_dep}.
\end{proof}

\begin{proof}[Proof of Theorem \ref{main_thm_fwer}]
\textbf{Step 1: Showing $\P(\exists t : \Q_t \cap \H_0 \neq \emptyset) \leq 2 \delta$.} First, we show that $\P(\exists t : \Q_t \cap \H_0 \neq \emptyset) \leq 2 \delta$. 
\begin{align*}
\P(\exists t : \Q_t \cap \H_0 \neq \emptyset) & \leq \sum_{r=1}^\infty \P( \exists t : \Q_t \cap A_r \cap \H_0 \neq \emptyset) \\
& \leq \sum_{r=1}^\infty \P( \exists t \in \N \text{ and } i \in \H_0 \cap A_r : \widehat{\mu}_{i,r,T_{i,r}(t)} - U(T_{i,r}(t), \frac{\delta}{|A_{r}| r^2} ) \geq \mu_0 ) \\
& \leq \sum_{r=1}^\infty \P( \exists t \in \N \text{ and } i \in \H_0 \cap A_r : \widehat{\mu}_{i,r,T_{i,r}(t)} - U(T_{i,r}(t), \frac{\delta}{|A_{r}| r^2} ) \geq \mu_i ) \\
& \leq \sum_{r=1}^\infty |A_r \cap \H_0| \frac{\delta}{ |A_r| r^2}  \\
& \leq \sum_{r=1}^\infty  \frac{\delta}{ r^2}  \\
& \leq \delta \frac{\pi^2}{6}
\end{align*}

\textbf{Step 2: Defining the stopping time.}  As in the proof of Lemma \ref{bracket_lemma_fwer}, define
\begin{align*}
\tau_k & = \min(t \in \N \cup \{\infty\} : \exists s \text{ such that } |A_s \cap [j_0]| \geq k \text{ and } \I_s \cap A_s \cap \H_1 \subset \Q_t), \\
\tau_{k}^{(r)} & = \min(t \in \N \cup \{\infty\} :  |A_r \cap [j_0]| \geq k \text{ and } \I_r \cap A_r \cap \H_1 \subset \S_t). 
\end{align*}
As was argued in Step 1 of the proof of Lemma \ref{bracket_lemma_fwer},  for all $t \geq \tau_k$, $\E[|\Q_t \cap \H_1|] \geq (1-\delta)k$ since for $t \geq \tau_k$.

\textbf{Step 3: A lower bound on the probability of a good event.} Let $j_0 \in \{k,\ldots,m\}$ minimize \eqref{main_thm_FWER_res}. Define $E_r = E_r \cap E_{0,r} \cap E_{1,r}$. We note that since $\{\rho_{i,r}\}_{i \in [n], r \in \N}$ and the brackets $\{A_r\}_{r \in \N}$ are independent, $\{E_r\}_{r \in \N}$ are independent events. Let $r_0$ be the smallest integer such that
\begin{align*}
\min(40 \frac{n}{j_0}k,n) \leq 2^{r_0} \leq 80 \frac{n}{j_0}k,
\end{align*}
Note that if $2^{r_0} \geq n$, then the bracket $r_0$ has $n$ arms.

As was argued in the proof of Theorem \ref{main_thm_fdr_log} we have that
\begin{align*}
\P(E_r^c) \leq  \exp(-5) + 2\delta \leq \frac{1}{16}.
\end{align*}
and that $\P(\cap_{l=r_0}^{\infty} E_l^c) = 0$. 

\textbf{Step 4: Gap-Dependent bound on the number of samples.} For the sake of brevity, write $\tau$ instead of $\tau_k$ and $\tau^{(r)}$ instead of $\tau^{(r)}_k$. Then, by the independence between brackets, the fact that $\cup_{r=r_0}^{\infty} E_r \cap (\cap_{r_0\leq l< r} E_l^c)$ occurs with probability $1$, and Lemma \ref{bracket_lemma_fwer},
\begin{align*}
\E[\tau] & = \E[\tau \1\{\cup_{r=r_0}^{\infty} E_r \cap (\cap_{r_0\leq l< r} E_l^c)\}] \\
& \leq \sum_{r=r_0}^{\infty} \E[\tau \1\{ E_r \cap (\cap_{r_0\leq l<r} E_l^c) \}]  \\
& \leq \sum_{r=r_0}^{\infty} \E[\tau^{(r)} \1\{ E_r \cap (\cap_{r_0\leq l<r} E_l^c) \}]  \\
& =  \sum_{r=r_0}^{\infty}\E[\tau^{(r)} \1\{ E_r \} ] \P( \cap_{r_0\leq l<r} E_l^c) \\
& \leq   \sum_{r=r_0}^{\infty}c [2^{r-1}(r-1)+ U_r \log(U_r + \Delta_{j_0,0}^{-2}  \log(\min(2^r,n) r  \frac{\log(\Delta_{j_0,0}^{-2})}{\delta})))]\frac{1}{16^{r-r_0}} \\
& \leq  \sum_{r=r_0}^{\infty}c [2^{r-r_0} \cdot 2^{r_0-1}(r-1)+ 4^{r-r_0} U_{r_0} \log(4^{r-r_0} U_{r_0} + \Delta_{j_0,0}^{-2}  \log(2^{r_0} \cdot 2^{r-r_0} r  \frac{\log(\Delta_{j_0,0}^{-2})}{\delta})))]\frac{1}{16^{r-r_0}} 
\end{align*}
where we used Lemma \ref{bracket_lemma} and the fact that $4^s U_r \geq U_{r+s}$ for any $s \geq 1$, which holds by the following argument
\begin{align*}
4^s U_r & =4^s	\frac{ \min(2^r,n)}{n}[\sum_{i \in \H_1}  \Delta_{i \vee j_0, 0}^{-2}  \log(\min(2^r,n) r  \frac{\log(\Delta_{i \vee j_0, 0}^{-2})}{\delta})) + \sum_{i \in \H_0}  \Delta_{j_0,i}^{-2} \log(\frac{\log( \Delta_{j_0,i}^{-2})}{\delta }) ] \\
& \geq 	\frac{ \min(2^{r+s},n)}{n}[\sum_{i \in \H_1} \Delta_{i \vee j_0, 0}^{-2}  \log(\min(2^{r 2^s},n) r  \frac{\log(\Delta_{i \vee j_0, 0}^{-2})}{\delta})) + \sum_{i \in \H_0}  \Delta_{j_0,i}^{-2} \log(\frac{\log(  \Delta_{j_0,i}^{-2})}{\delta }) ] \\
& \geq 	\frac{ \min(2^{r+s},n)}{n}[\sum_{i \in \H_1}  \Delta_{i \vee j_0, 0}^{-2}  \log(\min(2^{r +s},n) r  \frac{\log(\Delta_{i \vee j_0, 0}^{-2})}{\delta})) + \sum_{i \in \H_0}   \Delta_{j_0,i}^{-2} \log(\frac{\log(  \Delta_{j_0,i}^{-2})}{\delta }) ] \\
& = U_{r+s}.
\end{align*}

Next, we can bound the first term using the same argument in line \eqref{eq:first_term_main_pf_bd}:
\begin{align*}
\sum_{r=r_0}^{\infty}2^{r-r_0} \cdot 2^{r_0-1}(r-1)  \frac{1}{16^{r-r_0}}& \leq   c  2^{r_0}r_0 \\
& \leq c^\prime \frac{n}{j_0} k \log(\frac{n}{j_0} k)  \\
& \leq c^{\prime \prime} \fwerH(\mu_0; j_0)\log(\fwerH(\mu_0; j_0)).  
\end{align*}
where the last inequality follows since $\Delta_{i,j}^{-2} \geq 1$ for all $i<j \in [n] \cup \{0\}$ since $\mu_0, \mu_1,\ldots, \mu_n \in [0,1]$.

Next, we bound the second term.
\begin{align*}
&\sum_{r=r_0}^{\infty} \frac{1}{4^{r-r_0}}  U_{r_0} \log(4^{r-r_0} U_{r_0} + \Delta_{j_0,0}^{-2}  \log(2^{r_0} \cdot 2^{r-r_0} (s+r_0)  \frac{\log(\Delta_{j_0,0}^{-2})}{\delta}))) \\
& = \sum_{s=0}^{\infty} \frac{1}{4^{s}}  U_{r_0} \log(4^{s} U_{r_0} + \Delta_{j_0,0}^{-2}  \log(2^{r_0} \cdot 2^{s} (s+r_0)  \frac{\log(\Delta_{j_0,0}^{-2})}{\delta}))) \\
& \leq  \sum_{s=0}^{\infty} \frac{1}{4^{s}}  U_{r_0} \log(4^{s} U_{r_0} + c^\prime \Delta_{j_0,0}^{-2}  \log(2^{r_0} r_0  \frac{\log(\Delta_{j_0,0}^{-2})}{\delta})) + c^\prime s) \\
& \leq U_{r_0} \sum_{s=0}^{\infty} \frac{1}{4^{s}}[ c^{\prime \prime}  \log(4^{s} U_{r_0} + c^\prime \Delta_{j_0,0}^{-2}  \log(2^{r_0} r_0  \frac{\log(\Delta_{j_0,0}^{-2})}{\delta}))) + c^{\prime \prime \prime} \log( s)] \\
& \leq U_{r_0} \sum_{s=0}^{\infty} \frac{1}{4^{s}}[ c^{\prime \prime}  \log( U_{r_0} + c^\prime \Delta_{j_0,0}^{-2}  \log(2^{r_0} r_0  \frac{\log(\Delta_{j_0,0}^{-2})}{\delta}))) + c^{\prime \prime}  \log(4^{s}) + c^{\prime \prime \prime} \log( s)] \\
& \leq c^{\prime \prime \prime \prime} U_{r_0} \log( U_{r_0} + \Delta_{j_0,0}^{-2}  \log(2^{r_0} r_0  \frac{\log(\Delta_{j_0,0}^{-2})}{\delta}))) \\
& \leq  c^{\prime \prime \prime \prime \prime} U_{r_0} \log( U_{r_0} + \Delta_{j_0,0}^{-2}  \log(2^{r_0}  \frac{\log(\Delta_{j_0,0}^{-2})}{\delta}))) \\
& \leq  c^{\prime \prime \prime \prime \prime \prime} \fwerH(\mu_0; j_0) \log( \fwerH(\mu_0; j_0) + \Delta_{j_0,0}^{-2}  \log(2^{r_0}  \frac{\log(\Delta_{j_0,0}^{-2})}{\delta}))) 
\end{align*}
where we used $U_{r_0} \leq c \fwerH(\mu_0; j_0)$ and
\begin{align*}
\log(2^{r_0} \cdot 2^{s} (s+r_0)  \frac{\log(\Delta_{j_0,0}^{-2})}{\delta})) & = \log(2^{r_0}  (s+r_0)  \frac{\log(\Delta_{j_0,0}^{-2})}{\delta})) +c s \\
& \leq c^\prime \log(2^{r_0} r_0  \frac{\log(\Delta_{j_0,0}^{-2})}{\delta})) + c^\prime \log(2^{r_0} s  \frac{\log(\Delta_{j_0,0}^{-2})}{\delta})) +c s \\
& \leq  c^{\prime \prime} \log(2^{r_0} r_0  \frac{\log(\Delta_{j_0,0}^{-2})}{\delta})) +c^{\prime \prime \prime \prime} s 
\end{align*}

\end{proof}

\subsection{Proof of $\epsilon$-Good Arm Identification}

We restate Theorem \ref{eps_good_theorem_paper} with the doubly logarithmic terms.
\begin{theorem}
\label{eps_good_theorem}
Let $\epsilon >0$ and $\delta \in (0,1)$. Define $m = |\{i : \mu_i > \mu_1 - \epsilon \}|$.	 For all $j \in [m]$ define
\begin{align*}
\bestH(\epsilon;j)  & \coloneqq \frac{1}{j}\left( \underbrace{\sum_{i=1}^m \Delta_{i \vee j,m+1}^{-2} }_{\text{top arms}}\log(\frac{n}{j\delta} \log(\Delta_{i \vee j,m+1}^{-2})) + \underbrace{\sum_{i=m+1}^{n}\Delta_{j,i}^{-2}}_{\text{bottom arms}}\log (\frac{\Delta_{j,i}^{-2}}{\delta})  \right).
\end{align*}
Let $(\mc{F}_t)_{t \in \N}$ be the filtration generated by playing Algorithm \ref{fdr_alg} on problem $\rho$. Then, there exists a stopping time $\tau$ wrt $(\mc{F})_{t \in \N}$ such that
\begin{align}
\E[\tau] & \leq c \min_{j \in [m]} \bestH(\epsilon;j) \log(\bestH(\epsilon;j) + \Delta_{j,m+1}^{-2} \log(\frac{n}{j\delta} \log(\Delta_{ j,m+1}^{-2})))  \label{eps_good_theorem_res} 
\end{align}
and $\P( \exists s \geq \tau: \mu_{O_s} \leq \mu_1 - \epsilon) \leq  2\delta$. 
\end{theorem}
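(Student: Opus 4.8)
The plan is to reduce the bound on $\E[\tau]$ to the FWER-TPR analysis (Lemma~\ref{bracket_lemma_fwer} and Theorem~\ref{main_thm_fwer}) with $k=1$ and effective threshold $\mu_{m+1}$, and then to supply the one new ingredient: showing that recommending the lower-confidence-bound maximizer $O_t$ is correct. The key observation is that the right threshold for $\epsilon$-good identification is $\mu_{m+1}$, the largest mean that is \emph{not} $\epsilon$-good (note $\mu_{m+1}\le\mu_1-\epsilon$), since an arm is $\epsilon$-good exactly when it lies in $[m]$. With $\mu_0=\mu_{m+1}$ and $k=1$ one checks that $\bestH(\epsilon;j)$ is literally $\fwerH(\mu_{m+1};j)$: the gaps $\Delta_{i\vee j,m+1}$ and $\Delta_{j,i}$ are the top- and bottom-arm gaps of $\fwerH$, and $\tfrac{n}{j}k=\tfrac{n}{j}$.

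For correctness I would introduce the good event $G$ that for every bracket $r$ and arm $i\in A_r$ the anytime interval $|\widehat\mu_{i,r,t}-\mu_i|\le U(t,\tfrac{\delta}{|A_r|r^2})$ holds for all $t$; the LIL guarantee and a union bound over arms and brackets give $\P(G^c)\le \delta\sum_{r\ge1}r^{-2}=\delta\pi^2/6\le 2\delta$. On $G$ every bad arm $i>m$ has lower bound $\widehat\mu_{i,r,\cdot}-U(\cdot,\tfrac{\delta}{|A_r|r^2})\le\mu_i\le\mu_{m+1}$ at all times, whereas $O_t$ maximizes exactly this lower bound; hence once some arm $i^\star\in[m]$ has lower bound strictly above $\mu_{m+1}$, the maximizer $O_t$ cannot be a bad arm and is therefore $\epsilon$-good. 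To make this permanent I would key the analysis stopping time on the \emph{monotone} certified condition $2U(T_{i^\star,r}(t),\tfrac{\delta}{|A_r|r^2})<\Delta_{i^\star,m+1}$ rather than on the transient event that the lower bound currently exceeds $\mu_{m+1}$; on $G$ the certificate forces $\widehat\mu_{i^\star,r,\cdot}-U\ge\mu_{i^\star}-2U>\mu_{m+1}$ and, since the pull count is nondecreasing and $U$ decreasing in it, this continues to hold at all later times. Taking $\tau$ to be the first time such a certificate exists gives $\{\exists s\ge\tau:\mu_{O_s}\le\mu_1-\epsilon\}\subseteq G^c$, hence probability at most $2\delta$.

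For the expectation bound I would reuse the counting in Lemma~\ref{bracket_lemma_fwer} essentially verbatim: the Best-Arm branch of Algorithm~\ref{fdr_alg} samples with the same UCB rule as the FWER-TPR branch (with $\S_t=\emptyset$), so with $E_r=\{|[j_0]\cap A_r|\ge1\}$ and the concentration events $E_{0,r},E_{1,r}$ the number of pulls a bracket of size $\Theta(\tfrac{n}{j_0})$ needs to certify one arm of $[j_0]$ above $\mu_{m+1}$ splits into $\sum_{i>m}\Delta_{j_0,i}^{-2}$ for the bottom arms and $\sum_{i\le m}\Delta_{i\vee j_0,m+1}^{-2}$ for the top arms, i.e.\ the $U_r$ of Lemma~\ref{bracket_lemma_fwer} with $\mu_0=\mu_{m+1}$. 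I would then run the identical summation over brackets from the proof of Theorem~\ref{main_thm_fwer}: pick $r_0$ with $2^{r_0}\asymp\tfrac{n}{j_0}$ so that $E_{r_0}$ holds with constant probability, use bracket independence and exponential size growth to get $\P(\cap_{l=r_0}^{r-1}E_l^c)\le 16^{-(r-r_0)}$, and absorb the $4^{r-r_0}$ growth of the per-bracket bound against this $16^{-(r-r_0)}$ tail, yielding $\E[\tau]\lesssim \bestH(\epsilon;j_0)\log(\bestH(\epsilon;j_0)+\Delta_{j_0,m+1}^{-2}\log(\tfrac{n}{j_0\delta}\log(\Delta_{j_0,m+1}^{-2})))$; minimizing over $j_0\in[m]$ gives \eqref{eps_good_theorem_res}.

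The main obstacle I anticipate is the correctness/persistence step rather than the counting. Unlike the threshold problems, where accepted arms are placed \emph{permanently} into $\S_t$ or $\Q_t$, here $O_t$ is recomputed each round and the lower bound of a fixed arm is not monotone in $t$, so the argument must rest on the monotone certificate $2U<\Delta_{i^\star,m+1}$ together with the fact that on $G$ bad arms can never overtake $\mu_{m+1}$. A secondary point is the measurability of $\tau$: although the algorithm never uses $\mu_{m+1}$ or $[m]$ in its sampling, these are fixed functions of the instance $\rho$, so $\{\tau\le t\}$ remains $\mc{F}_t$-measurable and the counting bounds are unaffected by replacing the known threshold $\mu_0$ with $\mu_{m+1}$.
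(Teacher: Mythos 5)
Your proposal is correct and follows essentially the same route as the paper's proof: the paper's Lemma~\ref{eps_good_lemma} likewise keys the stopping time on a monotone pull-count certificate (an arm with $\mu_i\geq\frac{\mu_{j_0}+\mu_{m+1}}{2}$ reaching $U^{-1}(\frac{\Delta_{i\vee j_0,m+1}}{4},\frac{\delta}{|A_s|s^2})$ pulls), proves persistence of correctness on exactly your union-bounded event $F=G$ with failure probability $2\delta$, and then redoes the top/bottom-arm counting of the threshold lemmas with $\mu_{m+1}$ playing the role of $\mu_0$ before running the identical geometric summation over brackets. The only cosmetic difference is that the paper writes a dedicated lemma with an explicit witness event $F_{r,3}^{(j_0)}$ (a single well-concentrated arm of $[j_0]$ in the bracket, needed for the UCB elimination of the bottom arms) rather than phrasing the counting as a reduction to Lemma~\ref{bracket_lemma_fwer}; you would need to carry that witness event along, but it is the same ingredient.
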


Lemma \ref{eps_good_lemma} is the key intermediate result in the proof of Theorem \ref{eps_good_theorem}; its role is similar to that of Lemma \ref{bracket_lemma} in the proof of Theorem \ref{main_thm_fdr_log} and the proof is technically similar to the proof of Lemma \ref{bracket_lemma}. For any $r \in \mathbb{N}$ and $j \in [m]$ define the events
\begin{align*}
F_{r,1}^{(j)} & = \{A_r \cap [j] \neq \emptyset \} \\
F_{r,2}^{(j)} & = \{ \sum_{i \in  A_r :  \mu_{i} <  \frac{\mu_{j} +\mu_{m+1}}{2}} \Delta_{j,i}^{-2} \log(\frac{1}{\rho_{i,r}  }) \leq 5 \sum_{i \in  A_r : \mu_{i} <  \frac{\mu_{j} +\mu_{m+1}}{2}} \Delta_{j,i}^{-2} \log(\frac{1}{\delta }) \}, \\
F_{r,3}^{(j)} & = \{\exists i_0 \in A_r \cap [j] \text{ s.t.} \forall t \in \N : |\widehat{\mu}_{{i_0},r,t} - \mu_{i_0} | \leq U(t, \delta)  \}.
\end{align*}
$F_{r,1}^{(j)}$ says that there is at least one arm in bracket $r$ with mean at least $\mu_j \geq \mu_m$. $F_{r,2}^{(j)}$ allows us to avoid a union bound and says that most of the arms in bracket $r$ with mean at most $\frac{\mu_{j} +\mu_{m+1}}{2}$ have large $\rho_{i,r}$. Finally, $F_{r,3}^{(j)}$ says that at least one of the arms in the $r$th bracket with mean at least $\mu_j \geq \mu_m$ that concentrates well in the sense that $\rho_{i,r} \geq \delta$. 
\begin{lemma}
\label{eps_good_lemma}
Let $\epsilon > 0$, $m = |\{i : \mu_i > \mu_1 - \epsilon \}|$, $j_0 \in [m]$, and $r \in \N$. Define
\begin{align*}
Y_r = \frac{\min(2^r,n)}{n} [\sum_{i=1}^m  \Delta_{i \vee j_0,m+1}^{-2} \log(|A_r| r \frac{\log(\Delta_{i \vee j_0,m+1}^{-2})}{\delta})+\sum_{i=m+1}^n  \Delta_{j_0, i}^{-2} \log(\frac{\log( \Delta_{j_0, i}^{-2})}{\delta })]
\end{align*}
Let $(\mc{F}_t)_{t \in \N}$ be the filtration generated by playing Algorithm \ref{fdr_alg} on problem $\rho$. Then, there exists a stopping time $\tau$ wrt $(\mc{F}_t)_{t \in \N}$ such that $\P( \exists s \geq \tau: \mu_{O_s} \leq \mu_1 - \epsilon) \leq  2\delta$, and
\begin{align}
\E[\1\{F_{r,1}^{(j_0)}  \cap F_{r,2}^{(j_0)} \cap F_{r,3}^{(j_0)}   \} \tau ] \leq c[2^{r-1} (r-1) + Y_r \log(Y_r + \Delta_{j_0,m+1}^{-2}\log(|A_r| r \frac{\log(\Delta_{j_0,m+1}^{-2})}{\delta})]. \label{eps_good_lemma_bound}
\end{align}
\end{lemma}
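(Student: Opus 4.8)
The plan is to run the proof of Lemma~\ref{bracket_lemma_fwer} almost verbatim under the dictionary that identifies $\epsilon$-good arm identification with the FWER-TPR problem for the choices $k=1$, $\mu_0=\mu_{m+1}$, $\H_1=[m]$, and $\H_0=\{m+1,\dots,n\}$. Under these substitutions $\Delta_{i\vee j_0,0}=\Delta_{i\vee j_0,m+1}$ and $\Delta_{j_0,i}=\mu_{j_0}-\mu_i$, so $Y_r$ is exactly the quantity $U_r$ of Lemma~\ref{bracket_lemma_fwer} and the target \eqref{eps_good_lemma_bound} is \eqref{bracket_lemma_fwer_dep}; the events $F_{r,1}^{(j_0)}$, $F_{r,2}^{(j_0)}$, $F_{r,3}^{(j_0)}$ play roles analogous to $E_r$, $E_{0,r}$, and the concentration of a single arm of $\I_r\cap[j_0]$. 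The only genuine novelty is that the algorithm outputs the single lower-confidence-bound maximizer $O_t$ rather than maintaining a set, which replaces the family-wise-error event by a statement about $O_s$ and weakens the stopping rule from ``accept $k$ arms'' to ``one good arm becomes confidently above $\mu_{m+1}$''.

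First I would establish correctness. Let $G$ be the event that for every bracket $r$, every time $t$, and every $i\in A_r\cap\{m+1,\dots,n\}$ one has $\widehat{\mu}_{i,r,T_{i,r}(t)}-U(T_{i,r}(t),\tfrac{\delta}{|A_r|r^2})\le\mu_i$; a union bound over arms and brackets against the anytime validity of $U$ gives $\P(G^c)\le\sum_r|A_r|\tfrac{\delta}{|A_r|r^2}=\delta\pi^2/6\le 2\delta$. Since $\mu_i\le\mu_{m+1}$ for $i>m$, on $G$ no non-$\epsilon$-good arm ever has lower confidence bound exceeding $\mu_{m+1}$. I then take $\tau$ to be the first time the concentrating arm $i_0\in A_r\cap[j_0]$ furnished by $F_{r,3}^{(j_0)}$ (which satisfies $\rho_{i_0,r}\ge\delta$ and $\mu_{i_0}\ge\mu_{j_0}$) has been pulled more than $U^{-1}(\tfrac{\Delta_{j_0,m+1}}{2},\tfrac{\delta}{|A_r|r^2})$ times in bracket $r$; because $U$ is decreasing and $T_{i_0,r}(\cdot)$ is non-decreasing, $i_0$'s lower confidence bound then stays strictly above $\mu_{m+1}$ forever, so on $G$ the maximizer $O_s$ beats every non-$\epsilon$-good arm for all $s\ge\tau$ and hence $\mu_{O_s}>\mu_1-\epsilon$. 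This yields $\P(\exists s\ge\tau:\mu_{O_s}\le\mu_1-\epsilon)\le 2\delta$.

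Next I would bound $\E[\1\{F\}\tau]$ with $F:=F_{r,1}^{(j_0)}\cap F_{r,2}^{(j_0)}\cap F_{r,3}^{(j_0)}$, reusing Steps~1--2 of Lemma~\ref{bracket_lemma}. Because the brackets grow geometrically, the algorithm starts working on bracket $r$ after at most $2^{r-1}(r-1)$ rounds, and if $T$ is the number of rounds spent on bracket $r$ before $i_0$ reaches its pull target then at most $\log(T)$ brackets are touched per pull of bracket $r$, giving deterministically $\1\{F\}\tau\le 2^{r-1}(r-1)+\log(\1\{F\}T)\,\1\{F\}T$. To control $\1\{F\}T$ I treat $\mu_{m+1}$ as the threshold and split $A_r$ at the midpoint $\tfrac{\mu_{j_0}+\mu_{m+1}}{2}$. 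Any arm whose \emph{true} mean lies below the midpoint (in particular every bottom arm) is pulled at most $U^{-1}(\tfrac{\Delta_{j_0,i}}{2},\rho_{i,r}\delta)$ times before its upper confidence bound drops below $\mu_{j_0}\le\widehat{\mu}_{i_0}+U(T_{i_0,r},\delta)$, exactly as in \eqref{lilucb_arg}, and $F_{r,2}^{(j_0)}$ replaces the $\log(1/\rho_{i,r})$ factors by $\log(1/\delta)$; this yields the bottom-arm term for $i>m$ and, since $\Delta_{j_0,i}>\Delta_{i\vee j_0,m+1}$ there, is absorbed into the top-arm term for $j_0<i\le m$. Every remaining arm $i\le m$ (true mean at least the midpoint, including $[j_0]$) is instead charged on its \emph{empirical} mean: while $\widehat{\mu}_i<\tfrac{\mu_{j_0}+\mu_{m+1}}{2}$ its upper confidence bound falls below $\mu_{j_0}$ after $U^{-1}(\tfrac{\Delta_{j_0,m+1}}{2},\delta)$ pulls, and as soon as $\widehat{\mu}_i\ge\tfrac{\mu_{j_0}+\mu_{m+1}}{2}$ its own lower confidence bound exceeds $\mu_{m+1}$ after $U^{-1}(\tfrac{\Delta_{j_0,m+1}}{2},\tfrac{\delta}{|A_r|r^2})$ pulls and ends the game; since $\Delta_{i\vee j_0,m+1}\le\Delta_{j_0,m+1}$ this is at most $\Delta_{i\vee j_0,m+1}^{-2}\log(|A_r|r^2\tfrac{\log(\cdot)}{\delta})$ up to constants. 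Summing reproduces $\1\{F\}T\le c\,S_r$ for a random $S_r$ of the same shape as in Lemma~\ref{bracket_lemma_fwer}.

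Finally I would take expectations exactly as in Step~2 of Lemma~\ref{bracket_lemma_fwer}: write $\E[\1\{F\}\log(\1\{F\}T)\,\1\{F\}T]\le\E[\log(cS_r)\,cS_r]$, condition on $\{i\in A_r\}$ for each summand, and apply Jensen's inequality to move the logarithm outside, which turns $\E[S_r]$ into $Y_r$ and separates the ``diagonal'' term $\Delta_{j_0,m+1}^{-2}\log(\cdot)$ into the outer logarithm just as in \eqref{expect_lemma_fraction}; combined with the $2^{r-1}(r-1)$ term this is precisely \eqref{eps_good_lemma_bound}. The step I expect to be the crux is the treatment of the ``close'' top arms whose means sit just below $\mu_{j_0}$: a naive upper-confidence argument relative to $\mu_{j_0}$ would charge them the useless gap $\mu_{j_0}-\mu_i$, so it is essential to split on the \emph{empirical} mean at the midpoint and, in the over-estimating regime, to charge them against the event that their own lower confidence bound crosses the \emph{threshold} $\mu_{m+1}$ at the output confidence $\tfrac{\delta}{|A_r|r^2}$. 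This is exactly where both the dependence on $\mu_{m+1}$ and the additive $\Delta_{j_0,m+1}^{-2}\log(\cdot)$ term inside the outer logarithm originate, and it is what lets $F_{r,3}^{(j_0)}$---a single well-concentrated good arm---suffice in place of the set-valued guarantee used for FWER-TPR.
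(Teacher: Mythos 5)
Your high-level plan — reduce to bracket $r$, pay $2^{r-1}(r-1)$ to reach it plus $T\log T$ for the interleaving, split $A_r$ at the midpoint $\tfrac{\mu_{j_0}+\mu_{m+1}}{2}$, eliminate the low arms against the $F_{r,3}^{(j_0)}$ anchor using $F_{r,2}^{(j_0)}$ to trade $\log(1/\rho_{i,r})$ for $\log(1/\delta)$, and then take expectations by conditioning on $\{i\in A_r\}$ and applying Jensen exactly as in Lemma~\ref{bracket_lemma_fwer} — is the paper's proof. The gap is in your stopping time, and it is not cosmetic. First, $\tau$ defined as ``the first time $i_0$ reaches its pull quota'' is not a stopping time: $i_0$ is selected by $F_{r,3}^{(j_0)}$, which is a statement about the entire infinite trajectory of $\widehat{\mu}_{i_0,r,t}$, so $\{\tau\le t\}\notin\mc{F}_t$; moreover $\tau=\infty$ off $F_{r,3}^{(j_0)}$ and is tied to a single bracket, which breaks the downstream use in Theorem~\ref{eps_good_theorem} where one global, almost-surely finite $\tau$ must serve all brackets $r\ge r_0$ simultaneously. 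Second, and more seriously, your bound on $T$ does not follow from your stopping rule: if the game only ends when $i_0$ accumulates $U^{-1}(\tfrac{\Delta_{j_0,m+1}}{2},\tfrac{\delta}{|A_r|r^2})$ pulls, then the other arms of $A_r$ with means near or above the midpoint have upper confidence bounds that compete with $i_0$'s indefinitely, and nothing caps how often they are pulled before $i_0$ wins that race; your remark that such an arm's lower confidence bound crossing $\mu_{m+1}$ ``ends the game'' is doing real work but contradicts the $\tau$ you actually defined.

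The paper resolves both problems at once by defining
\begin{align*}
\tau=\min\Big\{t:\ \exists s\in\N,\ \exists i\in A_s \text{ with } \mu_i\ge\tfrac{\mu_{j_0}+\mu_{m+1}}{2} \text{ and } T_{i,s}(t)\ge U^{-1}\big(\tfrac{\Delta_{i\vee j_0,m+1}}{4},\tfrac{\delta}{|A_s|s^2}\big)\Big\},
\end{align*}
which is adapted (it depends only on pull counts and the deterministic means), global across brackets, and triggered by \emph{any} sufficiently-pulled arm above the midpoint. With this definition the top-arm count is immediate — while $\tau$ has not occurred, each arm with true mean at least the midpoint has by definition been pulled fewer than $U^{-1}(\tfrac{\Delta_{i\vee j_0,m+1}}{4},\tfrac{\delta}{|A_r|r^2})$ times, so the split is on true means rather than empirical means and no ``ends the game'' appeal is needed — and correctness follows from the single two-sided event $F=\{\forall t,s,i\in A_s:|\widehat{\mu}_{i,s,t}-\mu_i|\le U(t,\tfrac{\delta}{|A_s|s^2})\}$ of probability at least $1-2\delta$, on which the triggering arm's lower confidence bound permanently exceeds $\mu_{m+1}$ while every non-$\epsilon$-good arm's never does. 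Your treatment of the bottom arms, the absorption of the arms in $\{j_0+1,\dots,m\}$ below the midpoint via $\Delta_{j_0,i}\ge\tfrac12\Delta_{j_0,m+1}$, and the final expectation step are all as in the paper and are fine once $\tau$ is repaired.
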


\begin{remark}
Note that $\bestH(\epsilon;j) \approx Y_{\lceil \log_2(\frac{n}{j}) \rceil}$.
\end{remark}

\begin{proof}
\textbf{Step 1: Define stopping time.} Our strategy is to define a stopping time $\tau$ that says that some arm $i$ that is $\epsilon$-good has been sampled enough times so that its confidence bound is sufficiently small and then to show that with high probability for all $t\geq\tau$, \emph{(i)} the lower confidence bound of arm $i$ is above $\mu_{m+1}$ and \emph{(ii)} the algorithm always outputs an $\epsilon$-good arm. To this end, define 
\begin{align*}
\tau =  \min\{  t \in \N \cup \{\infty\} :\exists s \in \N \text{ and } \exists i  \in A_s \text{ s.t. } \mu_i \geq \frac{\mu_{j_0} +\mu_{m+1}}{2} \text{ and } T_{i,s}(t) \geq U^{-1}(\frac{\Delta_{i \vee j_0, m+1}}{4}, \frac{\delta}{|A_{s}| s^2}) \}.
\end{align*}
We claim that $\P( \exists t \geq \tau: \mu_{O_t} < \mu_1 - \epsilon) \leq  2\delta$. Define the event
\begin{align*}
F & = \{\forall t \in \N, s \in \N, \text{ and } i \in A_s : |\widehat{\mu}_{i,s,t} - \mu_i| \leq  U(t, \frac{\delta}{|A_{s}| s^2} ) \}.
\end{align*}
By a union bound, $F$ occurs with probability at least $1-2\delta$. Suppose $F$ occurs and let $t \geq \tau$. Then, since $t \geq \tau$, there exists a bracket $s$ and an arm $i \in A_s$ such that $\mu_i \geq \frac{\mu_{j_0}+\mu_{m+1}}{2}$ and $T_{i,s}(t) \geq U^{-1}(\frac{\Delta_{i \vee j_0, m+1}}{4}, \frac{\delta}{|A_{s}| s^2})$. Then by event $F$,
\begin{align*}
\widehat{\mu}_{i, s,T_{i,s}(t)} - U(T_{i,s}(t), \frac{\delta}{|A_{s}| s^2} ) & \geq \mu_i - 2U(T_{i,s}(t), \frac{\delta}{|A_{s}| s^2} ) \\
& > \mu_i - \frac{ \Delta_{i \vee j_0,m+1}}{2} \\
& \geq \mu_{m+1}
\end{align*}
where the last inequality follows by considering separately the cases \emph{(i)} $\mu_i \geq \mu_{j_0}$ and \emph{(ii)} $\mu_i < \mu_{j_0}$. Towards a contradiction, suppose that there exists a bracket $s_0 \in \N$ and another arm $j \in A_{s_0}$ ($j \neq i$) such that $\mu_j \leq \mu_1 - \epsilon$ and the algorithm outputs $j$ at time $t$. Then, by event $F$,
\begin{align*}
\mu_j \geq \widehat{\mu}_{j,s_0,T_{j,s_0}(t)} - U(T_{j,s_0}(t), \frac{\delta}{|A_{s_0}| s_0^2} ) \geq \widehat{\mu}_{i,s,T_{i,s}(t)} - U(T_{i,s}(t), \frac{\delta}{|A_{s}| s^2} ) > \mu_{m+1} \geq \mu_j,
\end{align*}
which is a contradiction. Thus, $\P( \exists t \geq \tau: \mu_{O_t} < \mu_1 - \epsilon) \leq 2 \delta$.

\textbf{Step 2: Relating $\tau$ to bracket $r$.}  Next, we bound $\E[\1\{F_{r,1}^{(j_0)}  \cap F_{r,2}^{(j_0)}  \cap F_{r,3}^{(j_0)}    \} \tau ] $. For the sake of brevity, we write $F_{r,i} $ instead of $F_{r,i}^{(j_0)} $ and define $F_r \coloneqq F_{r,1} \cap F_{r,2} \cap F_{r,3} $ and since we will only focus on bracket $r$, write $\widehat{\mu}_{i,t}$, $T_i(t)$, and $\rho_i$  instead of $\widehat{\mu}_{i,r,t}$, $T_{i,r}(t)$, and $\rho_{i,r}$.  Define
\begin{align*}
T & = |\{ t \in \N : R_t = r \text{ and } \nexists i  \in A_r \text{ s.t. } \mu_i \geq \frac{\mu_{j_0} +\mu_{m+1}}{2} \text{ and } T_{i}(t) \geq U^{-1}(\frac{\Delta_{i \vee j_0,m+1}}{4}, \frac{\delta}{|A_{r}| r^2})\}|,
\end{align*}
i.e., the number of rounds that the algorithm works on the $r$th bracket and there does not exist $i  \in A_r \text{ s.t. } \mu_i \geq \frac{\mu_{j_0} +\mu_{m+1}}{2}$ and $T_{i}(t) \geq U^{-1}(\frac{\Delta_{i \vee j_0,m+1}}{2}, \frac{\delta}{|A_{r}| r^2})$. By the same argument given in line \eqref{no_expect_bound} in Lemma \ref{bracket_lemma}, we have that 
\begin{align*}
\1\{F_r\} \tau \leq c[2^{r-1} (r-1) + \log(T \1\{F_r\}) T\1\{F_r\}].
\end{align*}
\textbf{Step 3: Bounding $T\1\{F_r\}$.}  In the interest of brevity, define $F(t) = \{\nexists i  \in A_r \text{ s.t. } \mu_i \geq \frac{\mu_{j_0} +\mu_{m+1}}{2} \text{ and } T_{i}(t) \geq U^{-1}(\frac{\Delta_{i \vee j_0,m+1}}{4}, \frac{\delta}{|A_{r}| r^2}) \}$. Then,
\begin{align*}
\1\{F_r\}T & \leq \1\{F_r\}\sum_{t=1}^\infty \1 \{ R_t = r, F(t) \} \\
 & \leq \1\{ F_r \} \sum_{t: R_t = r}^\infty \1 \{ \mu_{I_t} <  \frac{\mu_{j_0} +\mu_{m+1}}{2} \} + \1\{ \mu_{I_t} \geq \frac{\mu_{j_0} +\mu_{m+1}}{2},  F(t) \} 
\end{align*}
We bound each sum separately. Note that by $F_{r,3}$ there exists an $i_0 \in A_r \cap G_\gamma$ such that
\begin{align}
 \widehat{\mu}_{{i_0},T_{i_0}(t)} + U(T_{i_0}(t), \delta) \geq  \mu_{i_0}  \geq \mu_{j_0}. \label{eps_good_lemma_good_arm}
\end{align}
Let $j$ such that $\mu_{j} <  \frac{\mu_{j_0} +\mu_{m+1}}{2} $. Then, 
\begin{align*}
 \widehat{\mu}_{j,T_j(t)} + U(T_j(t), \delta) \leq \mu_j + U(T_j(t), \rho_j) + U(T_j(t), \delta) \leq \mu_j  + 2  U(T_j(t), \rho_j \delta).
\end{align*}
Thus, line \eqref{eps_good_lemma_good_arm} implies that if $T_j(t) \geq U^{-1}(\frac{\Delta_{j_0,j}}{4},\rho_j \delta)$, arm $j$ is not pulled since in that case
\begin{align*}
 \widehat{\mu}_{j,T_j(t)} + U(T_j(t), \delta) \leq \mu_j + 2U(T_j(t), \rho_j \delta) \leq \mu_j + \frac{\Delta_{j_0,j}}{2} \leq \mu_{j_0}.
\end{align*}

Thus, by arguments made throughout this paper (e.g., line \eqref{lilucb_arg} of the proof of Lemma \ref{bracket_lemma}) and the event $F_{r,2}$,
\begin{align*}
\sum_{t: R_t = r}^\infty \1 \{ \mu_{I_t} \leq \mu_1 - \epsilon \} \leq c\sum_{j \in A_r :\mu_{j} <  \frac{\mu_{j_0} +\mu_{m+1}}{2} }  \Delta_{j_0,j}^{-2} \log(\frac{\log( \Delta_{j_0,j}^{-2})}{\delta })
\end{align*}
Finally, by event $F$ we clearly have
\begin{align*}
\sum_{t:R_t = r}^\infty  \1\{ \mu_{I_t} \geq \frac{\mu_{j_0} +\mu_{m+1}}{2},  F(t) \}& \leq  c \sum_{j  \in A_r: \mu_j \geq  \frac{\mu_{j_0} +\mu_{m+1}}{2}}  \Delta_{j_0 \vee j,m+1}^{-2} \log(|A_r| r \frac{\log(\Delta_{j_0 \vee j,m+1}^{-2})}{\delta})
\end{align*}
Thus, 
\begin{align*}
\1\{F_r\}T & \leq c[\sum_{j \in A_r : \mu_{j} <  \frac{\mu_{j_0} +\mu_{m+1}}{2} }  \Delta_{j_0,j}^{-2} \log(\frac{\log( \Delta_{j_0,j}^{-2})}{\delta }) +\sum_{j  \in A_r: \mu_j \geq  \frac{\mu_{j_0} +\mu_{m+1}}{2}}  \Delta_{j \vee j_0, m+1}^{-2} \log(|A_r| r \frac{\log(\Delta_{j \vee j_0, m+1}^{-2})}{\delta})] \\
& \leq c[\sum_{j \in A_r : \mu_{j} \leq \mu_1 -\epsilon }  \Delta_{j_0,j}^{-2} \log(\frac{\log( \Delta_{j_0,j}^{-2})}{\delta }) +\sum_{j  \in A_r: \mu_j > \mu_1 -\epsilon}  \Delta_{j \vee j_0, m+1}^{-2} \log(|A_r| r \frac{\log(\Delta_{j \vee j_0, m+1}^{-2})}{\delta})] \\
& \coloneqq c X_r
\end{align*}
where we used the fact that for $j$ satisfying $\mu_{j} <  \frac{\mu_{j_0} +\mu_{m+1}}{2}$, it follows that
\begin{align*}
\Delta_{j_0,j} & = \mu_{j_0} - \mu_j \geq \frac{\mu_{j_0} - \mu_{m+1}}{2} = \frac{\Delta_{j_0,m+1}}{2}.
\end{align*}
Then, using the same argument from lines \eqref{expect_lemma_tot_exp}-\eqref{expect_lemma_fraction}, we have that
\begin{align*}
\E X_r \log(X_r) \leq c Y_r \log(Y_r + \Delta_{j_0,m+1}^{-2} \log(|A_r| r \frac{\log(\Delta_{j_1,m+1}^{-2})}{\delta})]
\end{align*}
Thus, putting it together,
\begin{align*}
\E[\1\{ \1\{F_r\} \tau ] \leq c[2^{r-1} (r-1) + Y_r \log(Y_r + \Delta_{j_0,m+1}^{-2} \log(|A_r| r \frac{\log(\Delta_{j_0,m+1}^{-2})}{\delta})]
\end{align*}

\end{proof}

\begin{proof}[Proof of Theorem \ref{eps_good_theorem}]
Let $j_0 \in [m]$ minimize the optimization problem in line \eqref{eps_good_theorem_res}. Let $r_0$ such that be the smallest integer such that
\begin{align*}
\min(40 \frac{n}{j_0},n) \leq 2^{r_0} \leq 80 \frac{n}{j_0}.
\end{align*}
For the sake of brevity, we write $F_{r_0,i} $ instead of $F_{r_0,i}^{(j_0)} $. We bound $\P((F_{r_0,1} \cap F_{r_0,2} \cap F_{r_0,3} )^c)$. By a union bound and the law of total probability,
\begin{align*}
\P((F_{r_0,1} \cap F_{r_0,2} \cap F_{r_0,3} )^c) & \leq \P(F_{r_0,1}^c \cap F_{r_0,3}^c) + \P(F_{r_0,2}^c) \\
& \leq \P(F_{r_0,1}^c) +  \P(F_{r_0,3}^c| F_{r,1}) + \P(F_{r_0,2}^c) \\
& \leq 2\delta + \P(F_{r_0,1}^c) \\
& \leq  2\delta + \exp(-5) \\
& \leq \frac{1}{16}
\end{align*}
The rest of the proof proceeds as the proof of Theorem \ref{main_thm_fdr_log} starting at step 2.
\end{proof}

\subsection{Proof of FWER-FWPD}

Finally, we present a Theorem for the FWER-FWPD version of Algorithm \ref{fwer_alg}. Although it is possible  to use the ideas from the other upper bound proofs to establish a result that depends on the distribution of the arms in $\H_1$, for simplicity our upper bound is in terms of $\Delta = \min_{i \in {\H_1}} \mu_i - \mu_0$ and $m \coloneqq |\{i : \mu_i > \mu_0 \}$. 

\begin{theorem}
\label{fwer_fdwp_thm}
Let $\delta \in (0,\frac{1}{600})$. Let $k \in [|\H_1|]$. Define
\begin{align*}
\widetilde{V}_k & \coloneqq (\frac{n}{m}k - k) \Delta^{-2} \log(\max(k, \log\log( \frac{n}{m}k \frac{1}{\delta}) )\log(\Delta^{-2})\log(\frac{n}{m}k) /\delta) \\
& + k \log(\max(\frac{n}{m}k - (1-2\delta(1+4\delta))k,\log\log(\frac{n}{m}k \frac{1}{\delta})) \log(\Delta^{-2})\log(\frac{n}{m}k)/\delta) ] \\
& \lesssim (\frac{n}{m}k - k) \Delta^{-2} \log(k /\delta)  + k \log(\frac{\frac{n}{m}k - (1-2\delta(1+4\delta))k}{\delta}) 
\end{align*}
Furthermore, define
\begin{align*}
\lambda_k & = \min(t \in \N : |\calR_t \cap \H_1| \geq k). 
\end{align*}
Then, Algorithm \ref{fwer_alg} has the property that $\P(\exists t \in \N: \calR_t \cap \H_0 \neq \emptyset) \leq 10 \delta$ and
\begin{align*}
\E[\lambda_k] \leq c \log(\widetilde{V}_k) \widetilde{V}_k.
\end{align*}

\end{theorem}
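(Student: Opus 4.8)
The plan is to follow the two-part structure of Theorem~\ref{main_thm_fwer}. First I would establish the family-wise error bound $\P(\exists t : \calR_t \cap \H_0 \neq \emptyset) \le 10\delta$, and then bound $\E[\lambda_k]$ by the bracket decomposition used throughout Section~\ref{sec:upper_bound_proofs}. For the error bound, observe that an arm $i \in \H_0$ can enter $\calR_{t+1}$ only through the test $\widehat{\mu}_{i,R_t,T_{i,R_t}(t)} - U(T_{i,R_t}(t),\tfrac{\delta}{\chi_{t,R_t}}) \ge \mu_0 \ge \mu_i$, i.e.\ a deviation of the empirical mean at confidence level $\tfrac{\delta}{\chi_{t,R_t}}$. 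The complication, absent in Theorem~\ref{main_thm_fwer}, is that the normalizer $\chi_{t,R_t}$ is random and depends on $|\S_t \cap A_{R_t}|$. I would first lower bound $\chi_{t,R_t}$ by a deterministic multiple of $|A_{R_t}| - |\S_t \cap A_{R_t}|$ plus logarithmic slack on a high-probability event, then apply the anytime guarantee for each of the at most $|A_r \cap \H_0|$ candidate arms and sum the resulting failure probabilities across brackets via $\sum_r r^{-2}$. The constant $10$ (rather than $2$) absorbs this union bound together with the auxiliary high-probability events controlling $\chi_{t,R_t}$ and the $\S_t$-membership events inherited from Algorithm~\ref{fwer_alg}.

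For $\E[\lambda_k]$ I would fix the good bracket $r_0$ with $2^{r_0} = \Theta(\tfrac{n}{m}k)$, so that with constant probability $A_{r_0}$ contains at least $k$ arms of $\H_1$ (each with gap at least $\Delta$) and $O(\tfrac{n}{m}k)$ arms of $\H_0$, and the concentration events $E_{0,r_0}, E_{1,r_0}$ hold. Conditioned on this bracket, the work decomposes into two phases sharing the counts $T_{i,r_0}(t)$. The $I_t$-driven construction of $\S_t$ is controlled exactly as in Lemma~\ref{bracket_lemma}: the bottom arms contribute the $(\tfrac{n}{m}k - k)\Delta^{-2}\log(\cdot)$ term and, along the way, the $k$ good arms accumulate $\Omega(\Delta^{-2})$ pulls and enter $\S_t$. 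The $J_t$-driven promotion into $\calR_t$ then contributes the second, gap-free $k\log(\cdot)$ term: once $k$ good arms sit in $\S_t \cap A_{r_0}$, both $\nu_{t,r_0} = \max(|\S_t \cap A_{r_0}|,1)$ and $\chi_{t,r_0}$ shrink, so that sweeping $J_t$ through these arms and passing each promotion test costs only $O(k)$ rounds up to logarithmic factors. Summing the two phases gives a per-bracket bound of the form $2^{r_0-1}(r_0-1) + \widetilde{V}_k$, and the geometric argument over brackets $r_0+s$, using that $\P(\cap_{l=r_0}^{r_0+s-1} E_l^c)$ decays like $16^{-s}$ while the per-bracket cost grows only polynomially in $2^s$ exactly as in the proof of Theorem~\ref{main_thm_fwer}, yields $\E[\lambda_k] \lesssim \log(\widetilde{V}_k)\,\widetilde{V}_k$.

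The hard part will be justifying the gap-free $k\log(\cdot)$ term, which hinges on the sample reuse between the two phases. Unlike the static acceptance level $\delta'_r \tfrac{p}{|A_r|}$ of Lemma~\ref{bracket_lemma}, the promotion level $\tfrac{\delta}{\chi_{t,r_0}}$ is data-dependent, so I must show that an arm which has already accumulated enough pulls to clear the Benjamini--Hochberg test for $\S_t$ needs only $O(\log)$ additional effective rounds, and no further $\Delta^{-2}$ factor, to clear the stricter $\calR_t$ test. This requires arguing that $\log(\chi_{t,r_0}/\delta)$ is, on the good event, within a constant factor of the confidence already achieved during $\S_t$-construction, so that the incremental samples are dominated by bookkeeping rather than by the gap $\Delta$. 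Coupled with the worst-case ordering of acceptances (the $\max_{\sigma \in \Gamma}$ device of Lemma~\ref{bracket_lemma}) and the need to keep $\chi_{t,r_0}$ simultaneously large enough for the error bound and small enough for the sample bound, this adaptive-confidence coupling is the delicate point of the argument.
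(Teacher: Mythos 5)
Your outer scaffolding matches the paper exactly: a good bracket $r_0$ with $2^{r_0}=\Theta(\frac{n}{m}k)$, Chernoff bounds for the hypergeometric composition of $A_{r_0}$, and the geometric summation over brackets $r_0+s$ inherited from Step 2 of the proof of Theorem~\ref{main_thm_fdr_log}. Where you diverge is in how the per-bracket guarantee is obtained, and this is where your proposal has a real gap. The paper observes that, restricted to a single bracket $r$, Algorithm~\ref{fwer_alg} in its FWER-FWPD mode \emph{is} the FWER-FWPD algorithm of \cite{jamieson2018bandit} run at confidence $\delta/r^2$, and therefore imports both ingredients wholesale from Theorem 4 of that paper: the bound $\P(\exists t: A_r\cap\calR_t\cap\H_0\neq\emptyset)\leq 6\delta/r^2$ (whence $\sum_r 6\delta/r^2=\pi^2\delta\leq 10\delta$ --- the constant $10$ is just $6\cdot\pi^2/6$, not slack for auxiliary events as you suggest), and an event $\Sigma_{r,4}$ of probability at least $1-6\delta$ on which the number of rounds $T$ spent in bracket $r$ before $A_r\cap\H_1\subset\calR_t$ admits exactly the two-term bound that becomes $V_r$ and then $\widetilde{V}_k$. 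Everything you flag as ``the hard part'' --- the data-dependent normalizer $\chi_{t,R_t}$, the gap-free $k\log(\cdot)$ term from the $J_t$-promotion phase, the coupling between the $\S_t$-construction and the $\calR_t$-promotion tests --- is precisely the content of that cited theorem, and the paper never re-derives it.

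As written, your proposal therefore leaves its central step unproven: you would need to essentially reprove Theorem 4 of \cite{jamieson2018bandit} inside each bracket, and your sketch of that analysis (in particular the claim that an arm already in $\S_t$ needs only $O(\log)$ additional effective rounds, with no further $\Delta^{-2}$ factor, to pass the $\calR_t$ test) is asserted rather than established. The plan is completable --- either by carrying out that analysis or, much more cheaply, by recognizing the per-bracket reduction --- but in its current form the argument for both the $10\delta$ FWER bound and the $k\log(\cdot)$ term in $\widetilde{V}_k$ rests on steps you yourself identify as delicate and do not supply. Note also that the paper's lemma deliberately states its per-bracket sample bound only in terms of $\Delta=\min_{i\in\H_1}\mu_i-\mu_0$ and does not attempt the per-arm gap dependence or the $\max_{\sigma\in\Gamma}$ ordering device of Lemma~\ref{bracket_lemma} that you invoke; importing that machinery here is unnecessary for the stated result.
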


\begin{lemma}
Let $\delta \in (0,.01)$. Let $k \in [|\H_1|]$. Let $r \in \N$ such that $2^r \geq k$. Define
\begin{align*}
\lambda_r & = \min(t \in \N : |\calR_t \cap A_r \cap \H_1| \geq k). 
\end{align*}
Define
\begin{align*}
V_r & \coloneqq (2^r - \min(|\H_1|,  \frac{|\H_1|}{n} 2^r)) \Delta^{-2} \log(\max(\min(|\H_1|,  \frac{|\H_1|}{n} 2^r), \log\log(r 2^r/\delta) )\log(\Delta^{-2})r /\delta) \\
& + \min(|\H_1|,  \frac{|\H_1|}{n} 2^r) \log(\max(2^r - (1-2\delta(1+4\delta))\min(|\H_1|,  \frac{|\H_1|}{n} 2^r),\log\log(\frac{r 2^r}{\delta})) \log(\Delta^{-2})r/\delta) ]
\end{align*}
Then with probability at least $1 -6\delta - 2\exp(-2^{r-3}) - \P( |A_r \cap \H_1| <  k)$,
\begin{align*}
\lambda_r \leq c(2^{r-1} (r-1) + \log(V_r) V_r). 
\end{align*}

\end{lemma}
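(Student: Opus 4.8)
The plan is to prove a high-probability analogue of Lemma~\ref{bracket_lemma_fwer}, now tailored to the FWER-FWPD branch of Algorithm~\ref{fwer_alg}, and then to convert the per-bracket round count into a bound on $\lambda_r$ by the same scheduling argument that produced \eqref{no_expect_bound}. First I would define a single good event $\mathcal{G}$ on which the whole argument runs deterministically. It is the intersection of three ingredients: (i) simultaneous concentration $|\widehat{\mu}_{i,r,t}-\mu_i|\le U(t,\cdot)$ for every arm of bracket $r$ at each of the confidence levels the FWER-FWPD branch actually uses --- the exploration level $\tfrac{\delta}{\xi_{t,r}}$ for $I_t$, the Benjamini--Hochberg levels $\tfrac{p}{|A_r|}\delta'_r$ that build $\S_t$, the promotion-exploration level $\tfrac{\delta_r}{\nu_{t,r}}$ for $J_t$, and the acceptance level $\tfrac{\delta}{\chi_{t,r}}$ that admits arms into $\calR_t$ --- which by a union bound over these $O(1)$ families costs at most $6\delta$; (ii) a two-sided Chernoff bound for the hypergeometric counts $|A_r\cap\H_1|$ and $|A_r\cap\H_0|$ around $\tfrac{|\H_1|}{n}2^r$, costing $2\exp(-2^{r-3})$; and (iii) the event $|A_r\cap\H_1|\ge k$, whose complement is charged directly as $\P(|A_r\cap\H_1|<k)$. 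The stated failure probability is exactly the sum of these three contributions.

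On $\mathcal{G}$ I would bound $T$, the number of rounds with $R_t=r$ before $|\calR_t\cap A_r\cap\H_1|\ge k$, by splitting the work into the exploration pulls $I_t$ and the promotion pulls $J_t$; write $m_r=\min(|\H_1|,\tfrac{|\H_1|}{n}2^r)$ for the (concentrated) number of good arms in the bracket. The exploration pulls are controlled exactly as in the derivation of \eqref{lilucb_arg} and \eqref{gap_dep_bound}: on $\mathcal{G}$ each arm of $\H_0\cap A_r$ is pulled at most $O(\Delta^{-2}\log(\cdot))$ times before its upper confidence bound drops below the lower confidence bound of a good arm, and the good arms of $A_r$ each enter $\S_t$ after $O(\Delta^{-2}\log(\cdot))$ pulls, so $|\S_t\cap A_r\cap\H_1|$ reaches $k$. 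Since $m_r\le 2^r-m_r$ in the relevant regime, the good arms' $\Delta^{-2}$ exploration pulls are dominated by the $\H_0$ contribution, and this whole phase is accounted for by the first summand of $V_r$, namely $(2^r-m_r)\Delta^{-2}\log(\cdots)$.

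The crux is bounding the promotion pulls, which is what produces the second summand $m_r\log(\cdots)$ with no $\Delta^{-2}$ factor. Here I would exploit that observations are shared within a bracket, so an arm admitted to $\S_t$ already carries $\Theta(\Delta^{-2}\log(\cdot))$ pulls, and then compare the acceptance threshold $U(\cdot,\tfrac{\delta}{\chi_{t,r}})$ against the Benjamini--Hochberg entry threshold. Because $\chi_{t,r}\approx |A_r|-|\S_t\cap A_r|$ shrinks as $\S_t$ fills up while the BH level scales like $\tfrac{|\S_t\cap A_r|}{|A_r|}\delta'_r$, for arms promoted once $\S_t\cap A_r$ is large the acceptance level is \emph{looser} than the level that already admitted the arm to $\S_t$, so those promotions cost no additional pulls; only the early promotions require a few extra $J_t$ pulls. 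Ordering the at-most-$m_r$ promotions so that the $p$-th one is charged index $p$ --- exactly the $\max_{\sigma\in\Gamma}$ device from the proof of Lemma~\ref{bracket_lemma} --- and summing the resulting logarithmic per-arm costs gives the promotion contribution $m_r\log(\cdots)$, hence $\1\{\mathcal{G}\}\,T\le cV_r$. Finally, since Algorithm~\ref{fwer_alg} starts working on bracket $r$ after $2^{r-1}(r-1)$ rounds and, while the target is unmet, interleaves at most $\log(T)$ rounds of other brackets per round of bracket $r$ (the argument behind \eqref{no_expect_bound}), on $\mathcal{G}$ we get $\lambda_r\le 2^{r-1}(r-1)+\log(T)\,T\le c(2^{r-1}(r-1)+\log(V_r)V_r)$.

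The main obstacle is the promotion step: one must track the two coupled, data-dependent confidence levels $\nu_{t,r}$ and $\chi_{t,r}$ simultaneously, argue that on $\mathcal{G}$ the relaxed $\chi_{t,r}$ acceptance never admits an $\H_0$ arm into $\calR_t$ (which is what the parent theorem's $\P(\exists t:\calR_t\cap\H_0\neq\emptyset)\le 10\delta$ guarantee rests on, and which caps how generous $\chi_{t,r}$ may be), and combine the shared-observation bookkeeping with the BH index ordering so that the good arms are charged only the claimed logarithmic, rather than $\Delta^{-2}$-scaled, cost.
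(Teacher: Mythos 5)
Your overall architecture matches the paper's: the failure probability is decomposed the same way (a $6\delta$ concentration event, $2\exp(-2^{r-3})$ from hypergeometric Chernoff bounds on $|A_r\cap\H_1|$, and the residual $\P(|A_r\cap\H_1|<k)$), and the conversion from the per-bracket round count $T$ to $\lambda_r$ via $2^{r-1}(r-1)+\log(T)\,T$ is exactly the scheduling argument of \eqref{no_expect_bound}. Where you diverge is on the central step, the bound $T\le cV_r$. The paper does not re-derive the within-bracket analysis at all: it observes that, restricted to bracket $r$, Algorithm~\ref{fwer_alg} is literally the FWER-FWPD algorithm of \cite{jamieson2018bandit} run at confidence $\delta/r^2$, and it invokes Theorem~4 of that paper as a black box to produce the event $\Sigma_{r,4}$ with $\P(\Sigma_{r,4}^c)\le 6\delta$ and the bound on $T$; the $6\delta$ is inherited from that citation rather than assembled by your union bound over the four confidence-level families. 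Your route --- re-deriving the exploration-pull and promotion-pull accounting from scratch in the style of \eqref{lilucb_arg} and \eqref{gap_dep_bound} --- is self-contained but is precisely where all the difficulty lives, and your treatment of the promotion pulls (the interplay of $\nu_{t,r}$, $\chi_{t,r}$, and the Benjamini--Hochberg entry levels) remains a plausibility sketch rather than a proof; the claimed comparison of $\delta/\chi_{t,r}$ against $\tfrac{p}{|A_r|}\delta'_r$ does not obviously hold for all $p$ and would need to be verified carefully. One further caution: you set out to justify the second summand of $V_r$ \emph{without} a $\Delta^{-2}$ factor, but the paper's own displayed bound on $T$ (imported from \cite{jamieson2018bandit}) carries $\Delta^{-2}$ on the $|A_r\cap\H_1|$ term as well, so the missing $\Delta^{-2}$ in the lemma statement is most plausibly a typo; trying to prove the stronger logarithmic-only promotion cost is extra work the paper never attempts and may not be attainable. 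In short, your plan is sound in outline but substitutes a long unexecuted derivation for a one-line citation, and the unexecuted part is the entire substance of the bound.
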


\begin{proof}
\textbf{Step 1: Definitions and events.} Recall $R_t$ is the bracket chosen at time $t$ and define
\begin{align*}
T & = |\{ t \in \N :  A_r \cap \H_1 \not \subset \calR_t \text{ and } R_t = r\}|, 
\end{align*}
i.e., the number of rounds that the algorithm works on the $r$th bracket and $A_r \cap \H_1 \not \subset \calR_t$. 
Define the events 
\begin{align*}
\Sigma_{r,1} & = \{ |A_r \cap \H_1| \geq  k \} \\
\Sigma_{r,2} & = \{ |A_r \cap \H_1| \leq \min(|\H_1|,  \tfrac{|\H_1|}{n} 2^{r+1})) \} \\
\Sigma_{r,3} & = \{ |A_r \cap \H_1| \geq \min(|\H_1|,  \tfrac{|\H_1|}{n} 2^{r-1})) \} 
\end{align*}
If $2^{r+1} \geq n$, then $|A_r \cap \H_1| \leq |\H_1|$ implies $\P(\Sigma_{r,2}^c) = 0$. Therefore, suppose $2^{r+1}< n$. Then, by multiplicative Chernoff for hypergeometric random variables,
\begin{align*}
\P(\Sigma_{r,2}^c) & = \P( |A_r \cap \H_1| >  \frac{|\H_1|}{n} 2^{r+1})  \leq \exp(-\frac{|\H_1|}{n} 2^{r-2}) \leq \exp(-2^{r-2})
\end{align*}
Similarly, if $2^r \geq n$, then $|A_r| = n$ and $\P(\Sigma_{r,2}^c) = 0$. Therefore, suppose $2^r < n$.
\begin{align*}
\P(\Sigma_{r,3}^c) & = \P(|A_r \cap \H_1| < \frac{|\H_1|}{n} 2^{r-1}) ) \leq \exp(-\frac{|\H_1|}{n} 2^{r-3}) \leq \exp(-2^{r-3})
\end{align*}

Since the algorithm essentially runs the FWER-FWDP version of the algorithm from \cite{jamieson2018bandit} on each bracket $r$ with confidence $\delta/r^2$, we can apply Theorem 4 of \cite{jamieson2018bandit} directly to obtain that there exists an event $\Sigma_{r,4}$, which only depends on the samples of the arms in bracket $r$, such that $\P(\Sigma_{r,4}^c) \leq 6 \delta$ and on $\Sigma_{r,4}$
\begin{align*}
T & \leq c[(|A_r| - |A_r \cap \H_1|) \Delta^{-2} \log(\max(|A_r \cap \H_1|, \log\log(|A_r|/\delta_r) )\log(\Delta^{-2})/\delta_r) \\
& + |A_r \cap \H_1| \Delta^{-2} \log(\max(|A_r| - (1-2\delta_r(1+4\delta_r)|A_r \cap \H_1|,\log\log(\frac{|A_r|}{\delta_r})) \log(\Delta^{-2})/\delta_r)].
\end{align*}
This roughly says 
\begin{align*}
T \lesssim (|A_r| - |A_r \cap \H_1|) \Delta^{-2} \log(|A_r \cap \H_1|/\delta) + |A_r \cap \H_1| \Delta^{-2} \log( (|A_r| - |A_r \cap \H_1|)/\delta).
\end{align*}

\textbf{Step 2: Bounding $\lambda_r$.}
In what follows, assume $\Sigma_{r,1} \cap \Sigma_{r,2} \cap \Sigma_{r,3} \cap \Sigma_{r,4}$ occurs, which happens with probability at least
\begin{align*}
1 -6\delta - 2\exp(-2^{r-3}) - \P(\Sigma_{r,1}^c).
\end{align*}

By the same argument given in lines \eqref{fdr_log_bound} and \eqref{no_expect_bound}, event $\Sigma_{r,1} $ implies that
\begin{align*}
\lambda_r \leq c(2^{r-1} (r-1) + \log(T) T). 
\end{align*}
Furthermore, using $\Sigma_{r,2} \cap \Sigma_{r,3} \cap \Sigma_{r,4}$, 
\begin{align*}
T  \leq& c [(|A_r| - |A_r \cap \H_1|) \Delta^{-2} \log(\max(|A_r \cap \H_1|, \log\log(|A_r|/\delta_r) )\log(\Delta^{-2})/\delta_r) \\
& + |A_r \cap \H_1| \Delta^{-2} \log(\max(|A_r| - (1-2\delta_r(1+4\delta_r))|A_r \cap \H_1|,\log\log(\frac{|A_r|}{\delta_r})) \log(\Delta^{-2})/\delta_r) ]\\
 \leq& c^\prime [(|A_r| - |A_r \cap \H_1|) \Delta^{-2} \log(\max(|A_r \cap \H_1|, \log\log(r |A_r|/\delta) )\log(\Delta^{-2})r /\delta) \\
& + |A_r \cap \H_1| \Delta^{-2} \log(\max(|A_r| - (1-2\delta(1+4\delta))|A_r \cap \H_1|,\log\log(\frac{r |A_r|}{\delta})) \log(\Delta^{-2})r/\delta) ]\\
\leq&  c^{\prime \prime} [(2^r - \min(|\H_1|,  \frac{|\H_1|}{n} 2^r)) \Delta^{-2} \log(\max(\min(|\H_1|,  \frac{|\H_1|}{n} 2^r), \log\log(r 2^r/\delta) )\log(\Delta^{-2})r /\delta) \\
& + \min(|\H_1|,  \frac{|\H_1|}{n} 2^r) \Delta^{-2} \log(\max(2^r - (1-2\delta(1+4\delta))\min(|\H_1|,  \frac{|\H_1|}{n} 2^r),\log\log(\frac{r 2^r}{\delta})) \log(\Delta^{-2})r/\delta) ]
\end{align*}

\end{proof}

\begin{proof}[Proof of Theorem \ref{fwer_fdwp_thm}]
We note that the algorithm essentially runs the FWER-FWDP version of the algorithm from \cite{jamieson2018bandit} on each bracket $r$ with confidence $\delta/r^2$. Therefore, by Theorem 4 from  \cite{jamieson2018bandit}, 
\begin{align*}
\P(\exists t \in \N: A_r \cap \calR_t \cap \H_0 \neq \emptyset) \leq 6 \frac{\delta}{r^2}
\end{align*}
Thus,
\begin{align*}
\P(\exists t \in \N: \calR_t \cap \H_0 \neq \emptyset) &\leq \P(\exists t \in \N, r \in \N: A_r \cap \calR_t \cap \H_0 \neq \emptyset) \\
& \leq  \sum_{ r \in \N} \P(\exists t \in \N: A_r \cap \calR_t \cap \H_0 \neq \emptyset) \\
& \leq \sum_{ r \in \N} 6 \frac{\delta}{r^2} \\
& \leq 10 \delta.
\end{align*}

Let $r_0 \in \N$ be the smallest integer such that $r_0 \geq 6$ and 
\begin{align*}
\min(40 \frac{n}{m}k,n) \leq 2^{r_0} \leq 80 \frac{n}{m}k .
\end{align*}
If $2^{r_0} \geq n$, then $ \P( |A_r \cap \H_1| <  k) = 0$. Otherwise,by multiplicative Chernoff for hypergeometric random variables,
\begin{align*}
\P( |A_r \cap \H_1| <  k) \leq \exp(-5).
\end{align*}
In the interest of brevity, define $\Sigma_r = \Sigma_{r,1} \cap \Sigma_{r,2} \cap \Sigma_{r,3} \cap \Sigma_{r,4}$. Observe that $\{\Sigma_r \}_{r \in \N}$ are mutually independent.
Further, using $\delta \in (0,\frac{1}{600})$, for all brackets $r \geq r_0$, the events  occur which happens with probability at least
\begin{align*}
\P(\Sigma_r^c) \leq 6\delta + 2\exp(-2^{r-3}) + \P(\Sigma_{r,1}^c) \leq \frac{1}{16}
\end{align*}
The rest of the proof proceeds as in Step 2 of the proof of Theorem \ref{main_thm_fdr_log}.
\end{proof}

\section{Best of both Worlds Algorithm for $\epsilon$-Good Arm Identification}
\label{sec:best_of_both}

One practical concern about the SimplePAC setting is that it is not clear when to stop the algorithm. To address this concern we propose Algorithm \ref{alg:BOBW}, which combines Algorithm \ref{fdr_alg} and LUCB from \cite{DBLP:conf/icml/KalyanakrishnanTAS12} to achieve the best of both worlds of PAC and SimplePAC. Let LUCB($\epsilon$) denote the LUCB algorithm that terminates once it finds an $\epsilon$-good arm. Let $\beta(t,\delta)$ denote the confidence bound used in \cite{DBLP:conf/icml/KalyanakrishnanTAS12}; although, it is possible to tighten these confidence bounds, for the sake of simplicity and brevity we use theirs so that we can appeal to their sample complexity results. Algorithm \ref{alg:BOBW} takes a desired tolerance $\epsilon > 0$ as input, runs LUCB($\epsilon$) and the $\epsilon$-good arm identification version of Algorithm \ref{fdr_alg} in parallel without sharing samples between the algorithms,\footnote{Samples should be shared in practice.} and outputs an arm $\widehat{i}_t$ at every iteration. This arm $\widehat{i}_t$ is the arm $O_t$ suggested by Algorithm \ref{fdr_alg} for every iteration until the termination condition of LUCB($\epsilon$) obtains at which point algorithm \ref{alg:BOBW} decides whether to output $O_t$ or the arm suggested by LUCB($\epsilon$). Let $\widehat{\mu}_{i,t}$ denote the empirical mean at time $t$ of arm $i$ based on the samples collected by LUCB($\epsilon$) and $T_{i,t}$ denote the number of pulls of arm $i$ at time $t$ by LUCB($\epsilon$).

\begin{algorithm}[t]
\caption{Best of both Worlds Algorithm: $\epsilon$-Good Arm Identification}
\label{alg:BOBW}
\footnotesize
\begin{algorithmic}[1]
\STATE \textbf{Input:} $\epsilon > 0$
\FOR{ $t =1,2, \ldots$}
\STATE Pull arm according to sampling rule given by the $\epsilon$-good arm identification version of Algorithm \ref{fdr_alg}
\STATE Pull arm according to sampling rule given by LUCB($\epsilon$)
\STATE Let $O_t$ be the arm returned by the $\epsilon$-good arm identification version of Algorithm \ref{fdr_alg}
\IF{LUCB($\epsilon$) terminates}
\STATE Let $\widehat{j}$ denote the arm returned by LUCB($\epsilon$)
\STATE $r_0 = \text{argmax}_{r \in \N} \widehat{\mu}_{O_t,r,T_{O_t,r}(t)} - U(T_{O_t,r}(t), \frac{\delta}{|A_{r}| r^2} )$ 
\IF{$\widehat{\mu}_{O_t,r_0,T_{i,r}(t)} - U(T_{O_t,r_0}(t), \frac{\delta}{|A_{r_0}| r^2} ) \geq \widehat{\mu}_{\widehat{j},T_{\widehat{j}}(t)} - \beta(T_{\widehat{j}}(t), \delta)$}
\STATE   Set $\widehat{i}_t = O_t$
\ELSE
\STATE  Set $\widehat{i}_t = \widehat{j}$
\ENDIF
\STATE Output $\widehat{i}_t$ and terminate.
\ELSE
\STATE Set $\widehat{i}_t = O_t$
\STATE Output $\widehat{i}_t$
\ENDIF
\ENDFOR
\end{algorithmic}
\end{algorithm}

\begin{theorem}
\label{thm:bobw}
Let $\rho$ be a problem instance and let $\delta \in (0,1/40)$ and $\epsilon_1, \epsilon_2 > 0$. Let $(\mc{F}_t)_{t \in \N}$ be the filtration generated by running Algorithm \ref{alg:BOBW} with input $\epsilon_1$ on $\rho$. There is a stopping time $\tau_{simple}$ wrt $(\mc{F}_t)_{t \in \N}$ such that 
\begin{align}
\E[\tau_{simple}] & \lesssim \min_{\gamma \in(0,\epsilon_2)}  U_{\epsilon_2}(\gamma)\log(U_{\epsilon_2}(\gamma) + \Delta_{m, \epsilon_2, \gamma}^{-2} ) \label{eq:bobw_simple_bound}
\end{align}
and $\P( \exists s \geq \tau_{simple}: \mu_{\widehat{i}_s} \leq \mu_1 - \epsilon_2) \leq  2\delta$. Furthermore, there exists a stopping time $\tau_{PAC}$ wrt $(\mc{F}_t)_{t \in \N}$ such that
\begin{align}
\E[\tau_{PAC}] \lesssim H^{\epsilon/2} \log(\frac{H^{\epsilon/2}}{\delta}) \label{eq:bobw_pac_bound}
\end{align}
where $H^\gamma = \sum_{i \in [n]} \max(\mu_1 - \mu_i, \gamma)^{-2}$ and at time $\tau_{PAC}$ the Algorithm \ref{alg:BOBW} terminates and returns an arm $\widehat{i}_{\tau_{PAC}}$ such that $\P(\mu_{\widehat{i}_{\tau_{PAC}}} \leq \mu_1 - \min(\epsilon_1, \epsilon_2)) \leq 3 \delta$. 
\end{theorem}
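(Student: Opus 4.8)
The plan is to decouple the two guarantees: the SimplePAC claim will come entirely from the Algorithm~\ref{fdr_alg} component, the PAC sample complexity from the LUCB($\epsilon_1$) component, and the only real work is to show that the lower-confidence-bound comparison performed at termination outputs an arm that is good at the \emph{tighter} tolerance $\min(\epsilon_1,\epsilon_2)$. The key structural observation is that the $\epsilon$-good output $O_t$ on Line~\ref{lin:best_arm_set} of Algorithm~\ref{fdr_alg} is simply the maximizer of a lower confidence bound and \emph{does not depend on $\epsilon$ at all}. Consequently Theorem~\ref{eps_good_theorem} applies verbatim with tolerance $\epsilon_2$, yielding a stopping time $\tau$ (which I take to be $\tau_{simple}$) with $\P(\exists s \geq \tau: \mu_{O_s} \leq \mu_1 - \epsilon_2) \leq 2\delta$ and $\E[\tau]$ bounded by the right-hand side of \eqref{eq:bobw_simple_bound}. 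Since $\widehat{i}_s = O_s$ at every iteration before LUCB($\epsilon_1$) terminates, the output stream inherits this guarantee on the pre-termination horizon; the post-termination output is handled by the correctness argument below, using that any $\min(\epsilon_1,\epsilon_2)$-good arm is in particular $\epsilon_2$-good.

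For the PAC sample complexity I would set $\tau_{PAC}$ to be the termination time of LUCB($\epsilon_1$), which runs in parallel on its own (unshared) samples. Because the LUCB component receives exactly one pull per iteration and uses the confidence bound $\beta(\cdot,\delta)$ of \cite{DBLP:conf/icml/KalyanakrishnanTAS12}, I can invoke their sample-complexity analysis directly to obtain $\E[\tau_{PAC}] \lesssim H^{\epsilon_1/2}\log(H^{\epsilon_1/2}/\delta)$ with $H^\gamma = \sum_{i} \max(\mu_1-\mu_i,\gamma)^{-2}$, giving \eqref{eq:bobw_pac_bound}. Running the two procedures in parallel only doubles the number of pulls per iteration and leaves the order of each stopping time unchanged.

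For correctness at termination I would work on the intersection $G$ of the Algorithm~\ref{fdr_alg} concentration event (the event $F$ of Lemma~\ref{eps_good_lemma}, of probability at least $1-2\delta$) and the LUCB confidence event (probability at least $1-\delta$), so that a union bound supplies the $3\delta$ budget. At termination the algorithm outputs the candidate with the larger lower bound, comparing $\mathrm{LCB}_{fdr}(O_{\tau_{PAC}})$ against $\mathrm{LCB}_{LUCB}(\widehat{j})$, and I would argue by cases. On $G$, LUCB certifies $\mu_{\widehat{j}} > \mu_1 - \epsilon_1$, and unless $\widehat{j}$ is the exact optimum the termination condition together with the best competitor's UCB being at least $\mu_1$ gives $\mathrm{LCB}_{LUCB}(\widehat{j}) > \mu_1 - \epsilon_1$; on $G$, once $\tau_{PAC} \geq \tau_{simple}$ the analysis of Lemma~\ref{eps_good_lemma} gives $\mathrm{LCB}_{fdr}(O_{\tau_{PAC}}) > \mu_{m+1}$ with $m = |\{i : \mu_i > \mu_1 - \epsilon_2\}|$, which forces the mean of the winning arm above $\mu_{m+1}$ and hence into $[m]$, i.e. $\epsilon_2$-good. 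Combining the two bounds through the $\max$ taken by the decision rule—and checking each of the four sign patterns of (which candidate wins)$\times$(which of $\epsilon_1,\epsilon_2$ is smaller)—yields $\mu_{\widehat{i}_{\tau_{PAC}}} > \mu_1 - \min(\epsilon_1,\epsilon_2)$.

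The hard part will be exactly this last transfer of goodness across the two confidence regimes: the components certify at different tolerances ($\epsilon_1$ versus $\epsilon_2$) using different widths ($\beta$ versus $U$), so proving that the larger-LCB winner attains the \emph{smaller} tolerance is not automatic. The most delicate sub-case is the degenerate one in which LUCB returns the true optimum $\widehat{j}=\arg\max_i\mu_i$, whose own lower confidence bound need not exceed $\mu_1-\epsilon_1$; there I must argue that the comparison cannot prefer an Algorithm~\ref{fdr_alg} arm that is $\epsilon_2$- but not $\epsilon_1$-good, which requires relating the number of LUCB pulls of the optimum at termination to the gap $\mu_1-\mu_{m+1}$. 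A secondary, purely bookkeeping, obstacle is reconciling the events $\{\tau_{PAC} < \tau_{simple}\}$ and $\{\tau_{PAC} \geq \tau_{simple}\}$ so that the frozen post-termination output is covered by the SimplePAC statement for all $s \geq \tau_{simple}$.
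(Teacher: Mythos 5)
Your proposal follows essentially the same route as the paper's proof: the SimplePAC guarantee is imported directly from Theorem~\ref{eps_good_theorem} (using that $O_t$ does not depend on $\epsilon$), the PAC bound from Theorem~6 of the LUCB paper, and termination correctness is argued on the intersection of the two confidence events by observing that the winner's lower confidence bound dominates both $\mu_1-\epsilon_1$ (from LUCB's stopping rule) and $\max_{i:\mu_i\leq\mu_1-\epsilon_2}\mu_i$ (from Step~1 of Lemma~\ref{eps_good_lemma}). The two delicate points you flag at the end---the degenerate case where LUCB returns the exact optimum, and the event $\{\tau_{PAC}<\tau_{simple}\}$---are both silently elided in the paper's own proof, which simply asserts $\widehat{\mu}_{\widehat{j},T_{\widehat{j}}(\tau_{PAC})}-\beta(T_{\widehat{j}}(\tau_{PAC}),\delta)>\mu_1-\epsilon_1$ on the LUCB confidence event and applies the Lemma~\ref{eps_good_lemma} lower-bound inequality at $t=\tau_{PAC}$, so your approach matches the source and is if anything more careful about the edge cases.
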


To interpret the Theorem \ref{thm:bobw}, suppose that $\epsilon_1 > \epsilon_2>0$ are such that $\E[\tau_{simple}] \leq \E[\tau_{PAC}]$. Then, Theorem \ref{thm:bobw} says that Algorithm \ref{alg:BOBW} with input $\epsilon_1$ starts outputting an $\epsilon_2$-good arm in nearly optimal time \emph{and} certifies that it is an $\epsilon_1$-good arm in nearly optimal optimal. Thus, Algorithm \ref{alg:BOBW} achieves the best of both worlds.

\begin{proof}[Proof of Theorem \ref{thm:bobw}]
Theorem 6 of \cite{DBLP:conf/icml/KalyanakrishnanTAS12} implies that there exists a stopping time $\tau_{PAC}$ wrt $(\mc{F}_t)_{t \in \N}$ such that at time $\tau_{PAC}$ the Algorithm \ref{alg:BOBW} terminates and \eqref{eq:bobw_pac_bound} holds. Theorem \ref{eps_good_theorem_paper} implies the existence of stopping time $\tau_{simple}$ wrt $(\mc{F}_t)_{t \in \N}$ such that \eqref{eq:bobw_simple_bound} holds and $\P( \exists s \geq \tau_{simple}: \mu_{O_s} \leq \mu_1 - \epsilon_2) \leq  2\delta$. 

It remains to show that when the Algorithm \ref{alg:BOBW} terminates at $t = \tau_{PAC}$,  $\P(\mu_{\widehat{i}_{\tau_{PAC}}} \leq \mu_1 - \min(\epsilon_1,\epsilon_2)) \leq 3 \delta$.  Define the event
\begin{align*}
F & = \{\forall t \in \N, s \in \N, \text{ and } i \in A_s : |\widehat{\mu}_{i,s,t} - \mu_i| \leq  U(t, \frac{\delta}{|A_{s}| s^2} ) \}.
\end{align*}
By a union bound, $F$ occurs with probability at least $1-2\delta$. By the argument in Step 1 of the proof of Lemma \ref{eps_good_lemma}, on $F$, for all $t \geq \tau_{simple}$
\begin{align*}
\text{max}_{r \in \N} \widehat{\mu}_{O_t,r,T_{O_t,r}(t)} - U(T_{O_t,r}(t), \frac{\delta}{|A_{r}| r^2} ) > \max_{ i : \mu_i \leq \mu_1 - \epsilon_2} \mu_i.
\end{align*}
Next, define the event 
\begin{align*}
E = \{\forall t \in \N \text{ and } \forall i \in [n] : |\widehat{\mu}_{i,t} - \mu_i| \leq \beta(t,\delta) \}
\end{align*} 

By Theorem 1 of \cite{DBLP:conf/icml/KalyanakrishnanTAS12}, $\P(E) \geq 1 - \delta$ and on $E$,
\begin{align*}
\widehat{\mu}_{\widehat{j},T_{\widehat{j}}(\tau_{PAC})} - \beta(T_{\widehat{j}}(\tau_{PAC}), \delta) > \mu_1 - \epsilon_1
\end{align*}
Suppose $F$ and $E$ occur, which by a union bound occur with probability at least $1-3\delta$. Either $\widehat{i}_{\tau_{PAC}} = \widehat{j}$ or $\widehat{i}_{\tau_{PAC}} = O_{\tau_{PAC}}$. Suppose $\widehat{i}_{\tau_{PAC}}  = \widehat{j}$. Then,
\begin{align*}
\mu_{\widehat{i}_{\tau_{PAC}}} & = \mu_{\widehat{j}} \\
& \geq \widehat{\mu}_{\widehat{j},T_{\widehat{j}}(\tau_{PAC})} - \beta(T_{\widehat{j}}(\tau_{PAC}), \delta) \\
& > \text{max}_{r \in \N} \widehat{\mu}_{O_t,r,T_{O_t,r}(t)} - U(T_{O_t,r}(t), \frac{\delta}{|A_{r}| r^2} ) \\
& \geq \max_{ i : \mu_i \leq \mu_1 - \epsilon_2} \mu_i,
\end{align*}
which implies that $\mu_{\widehat{i}_{\tau_{PAC}}} \geq \mu_1 - \min(\epsilon_1, \epsilon_2)$. A similar argument proves the case $\widehat{i}_{\tau_{PAC}} = O_{\tau_{PAC}}$.

\end{proof}

\section{Experiment Details} \label{sec:experiment_details}
We used two publicly available datasets to base our simulated experiments on. 

\subsection{$\epsilon$-good arm identification}
For the $\epsilon$-good arm identification experiment, we used the \emph{New Yorker Magazine} Caption Contest data available at \url{https://github.com/nextml/caption-contest-data}.
Specifically, we used contest 641 conducted the first week of December of 2018. 
Briefly, visitors to the site \url{nextml.org/captioncontest} are shown a fixed image and one of $n$ captions that they rate as either \texttt{Unfunny}, \texttt{Somewhat funny}, or \texttt{Funny}.  When they make their selection, the image stays the same but one of $n$ other captions are shown (uniformly at random for this contest). 
Contest 641 has $n=9061$ arms and each one was shown about $155$ times. 
For the $i$th caption we define $\widehat{\mu}_{i,T_i}$ as the proportion of times \texttt{Somewhat funny} of \texttt{Funny} was clicked relative to the total number of times it was rated denoted $T_i$. 
These empirical means $\widehat{\mu}_{i,T_i}$ were treated as ground truth so that in our experiments a pull of the $i$th arm was an iid draw from a Bernoulli distribution with mean $\widehat{\mu}_{i,T_i}$.
Figure~\ref{fig:new_yorker} shows the histogram $\widehat{\mu}_{i,T_i}$ and $T_i$ for all $n=9061$
 arms.

 \begin{figure}
 \includegraphics[width=.5\textwidth]{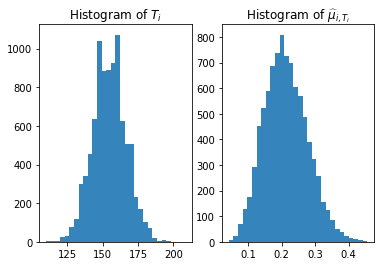}
 \caption{Empirical means and counts from the \emph{New Yorker Magazine} caption contest 641. There were $n=9061$ arms.}
 \label{fig:new_yorker}
 \end{figure}

\subsection{Identifying arms above a threshold}
This dataset is from \cite{hao2008drosophila}. 
The study was interested in identifying genes in Drosophila that inhibit virus replication. 
Essentially, for each individual gene $i \in [n]$ for $n=13071$ the researchers used RNAi to ``knock-out'' the gene from a population of cells, infected the cells with a virus connected to a florescing tag, and then measured the amount of florescence after a period of time.  
The idea is that if a lot of florescence was measured when the $i$th gene was knocked out, that means that gene was very influential for inhibiting virus replication because more virus was present. 
A control or baseline amount of florescence $\mu_0$ (and its variance) was established by infecting cells without any genes knocked out. 
Using these controls, each measurement (pull) from the $i$th gene (arm) is reported as a $Z$-score such that under the null (gene $i$ has no impact on virus replication) an observation is normally distributed with mean $\mu_i=\mu_0$ with variance $1$.
We make the simplifying assumption that if the gene did have non-negligible influence so that $\mu_i > 0$, then the variance was still equal to $1$.

As described in \cite{hao2008drosophila}, the researchers measured each of the $n=13071$ genes twice and eliminated all but the $1000$ most extreme observations, and then measured each of these $1000$ genes $12$ times. Finally, they reported the $100$ genes that were statistically significant of these $1000$ genes measured $12$ times. 
To generate the data for our experiments, we average just the two initial measurements from all $n=13071$ measurements. 
Two averaged $Z$-scores of the $i$th gene, denoted $\widehat{\mu}_i$, have a variance of $1/2$ which more or less buries any signal in noise.  
If we adopt the model $\widehat{\mu}_i \sim \mathcal{N}(\mu_i,1/2)$ then we can perform a a maximum likelihood estimate (MLE) of the original distribution of underlying $\{ \mu_i \}_{i=1}^n$ using a fine grid on $[-4,4]$, the range of the observations.
The normalized histogram of $\{\widehat{\mu}_i\}_i$ as well as the MLE of the $\{\mu_i\}_i$ are shown in the first panel of Figure~\ref{fig:drosophila}.
Reassuringly, there is a spike with mass of about $.97$ at $0$ indicating that the vast majority of genes have no influence on inhibiting virus proliferation. 
The majority of the remaining mass lies in a spike around $1$. 
To encourage the distribution of the means not at $0$ to have a bit more shape, we use a small amount of entropic regularization without increasing negative log likelihood too much. 
For our experiments we used $\lambda=1e^{-4}$.
\begin{figure}
\includegraphics[width=.32\textwidth]{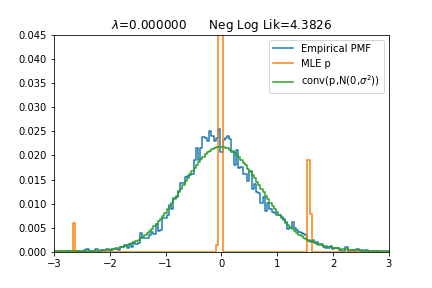}
\includegraphics[width=.32\textwidth]{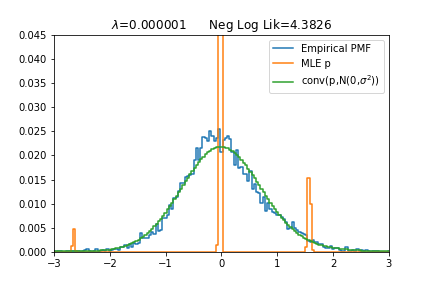}
\includegraphics[width=.32\textwidth]{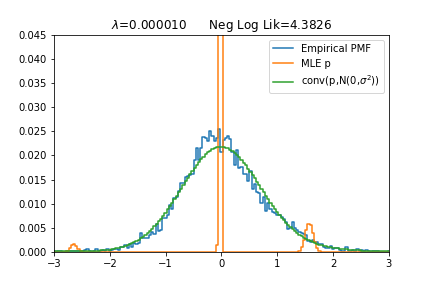}\\
\includegraphics[width=.32\textwidth]{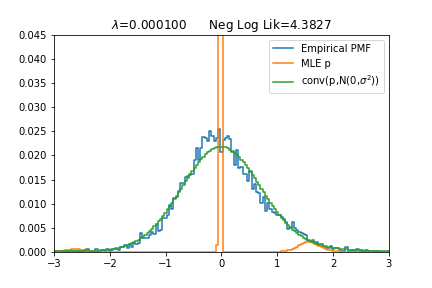}
\includegraphics[width=.32\textwidth]{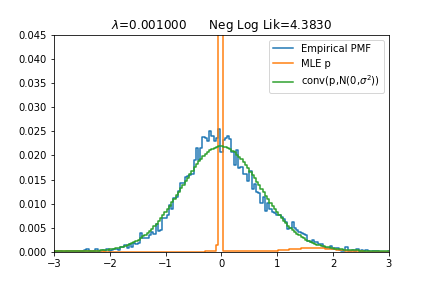}
\includegraphics[width=.32\textwidth]{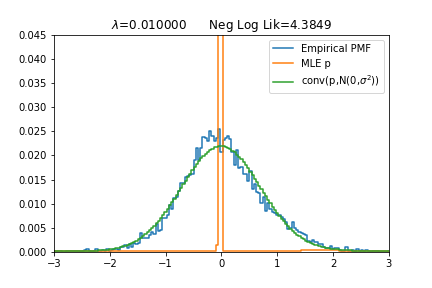}
\caption{Drosophila data.}
\label{fig:drosophila}
\end{figure}

\subsection{Algorithm Details}

We use $\delta = 0.05$ for all of the algorithms. For the implementation of our algorithms,  we chose the starting bracket to have size $2^6$. We share samples between the brackets and stop opening brackets after a bracket of size $n$ is opened. For the $\epsilon$-good arm identification experiment, we use a heuristic where we remove a bracket if its maximum lower confidence bound is less than the maximum lower confidence bound of a larger bracket. 

For the experiment concerning  the dataset of  \cite{hao2008drosophila} we used the FDR-TPR versions of our algorithm and the algorithm of \cite{jamieson2018bandit}. Following the advice of \cite{jamieson2018bandit}, we use the Benjamini-Hochberg procedure developed for multi-armed bandits at level $\delta$ instead of $O(\delta/\log(1/\delta))$. We used the following two heuristics for our algorithm. First, we give each bracket a point if it pulls an accepted arm more than any of the other brackets. Then, we remove a bracket if its score is less than the score of a larger bracket. Second, we estimate the number of pulls required for each bracket to accept $5$ additional arms and choose the bracket with lowest estimate $90\%$ of the time and otherwise cycle through the brackets.\footnote{We note that we only get slightly worse performance if we pick the bracket with lowest estimate $50\%$ of the time. See Figure \ref{fig:alt_heuristic}.} We calculate this estimate as follows. For each bracket, we take the 5 arms with the largest empirical means and estimate the remaining number of times that they need to be pulled by
\begin{align*}
\widehat{\mu}_{i,T_i(t)}^{-2} \log[\text{ size of the bracket } \cdot \text{ number of total brackets to open }/\delta] - T_i(t).
\end{align*}
For the other arms, we estimate the number of times that they need to be pulled before accepting 5 arms with the largest empirical means in the following way. Let $\lambda$ denote the value of the fifth smallest mean multiplied by a factor of $2$, which estimates roughly the value of its upper confidence bound at the point at which it is accepted. Then, the estimate is 
\begin{align*}
(\lambda - \widehat{\mu}_{i,T_i(t)})^{-2}\log[\text{ number of total brackets to open }/\delta] - T_i(t).
\end{align*}

We note that while the above heuristics for removing brackets break the sample complexity guarantees of the algorithms because they may remove a good bracket, the algorithms are still correct in the sense that the confidence bounds hold with high probability. We ran each experiment for 100 trials. We also plot $95\%$ confidence intervals.

\begin{figure*}[t]
    \centering
    \includegraphics[width=\textwidth]{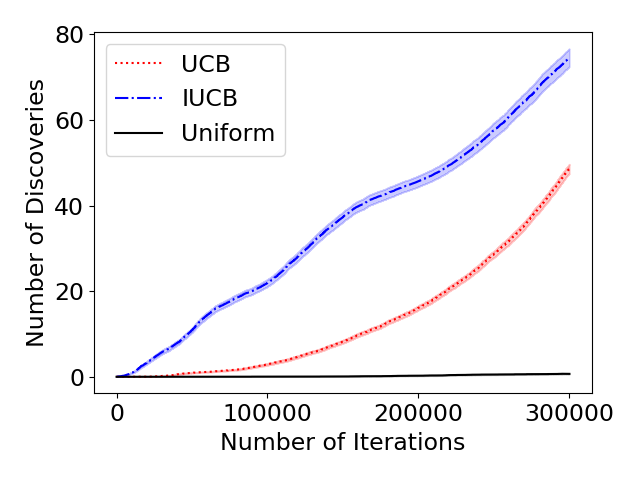}
    \vspace*{-5mm}
    \caption{Identifying means above a threshold: pick estimated best bracket $50\%$ of the time.}
  \label{fig:alt_heuristic}
\end{figure*}

\clearpage
\bibliography{refs}
\clearpage

\end{document}